\documentclass{article} %
\usepackage[margin=70pt]{geometry}
\usepackage{times}
\usepackage{amsmath,amsfonts,amssymb,amsthm}
\usepackage{bm}
\usepackage{xcolor}
\definecolor{myblue}{HTML}{0034D9} %
\usepackage[colorlinks=true, citecolor=cyan,linkcolor=blue]{hyperref}
\usepackage{url}
\usepackage{booktabs, multirow}                     %
\usepackage{enumerate, enumitem}                    %
\usepackage{mathtools}                              %
\usepackage[thmtools-compat]{keytheorems}           %
\usepackage[table]{xcolor}
\usepackage{arydshln}%
\usepackage{float}                                  %
\usepackage[vlined, ruled]{algorithm2e}             %
\usepackage{caption, subcaption}                    %
\usepackage{wrapfig}                                %
\usepackage[most]{tcolorbox}	     				%
\usepackage{natbib}

\usepackage[capitalize,nameinlink]{cleveref}
\usepackage{adjustbox}
\newtheorem{assumption}{\textbf{H}\hspace{-3pt}}
\Crefname{assumption}{H\hspace{-2pt}}{H\hspace{-3pt}}

\newtheorem{theorem}{Theorem}
\newtheorem{remark}[theorem]{Remark} 
\newtheorem{lemma}[theorem]{Lemma} 
\newtheorem{proposition}[theorem]{Proposition}
\definecolor{cinnamon}{rgb}{0.82, 0.41, 0.12}
\definecolor{darkcerulean}{rgb}{0.03, 0.15, 0.4}%
\definecolor{ceruleanblue}{rgb}{0.14, 0.16, 0.48}
\definecolor{cornflowerblue}{rgb}{0.0, 0.06, 0.54}
\definecolor{SlateGray}{RGB}{90, 100, 115}
\definecolor{WarmGray}{RGB}{120, 110, 100}

\def\leq{\leqslant}
\def\geq{\geqslant}
\def\ds{\displaystyle}

\def\N{\mathbb{N}}

\def\ds{\displaystyle}
\DeclareMathOperator*{\tr}{Tr}

\DeclareMathOperator*{\argmin}{Argmin}
\DeclareMathOperator*{\uniqueargmin}{argmin}
\DeclareMathOperator*{\argmax}{Argmax}

\def\E{\mathbb{E}}
\def\bbE{\mathbb{E}}
\def\R{\mathbb{R}}

\def\Id{\mathrm{Id}}
\def\A{\mathrm{A}}

\def\N{\mathrm{N}}

\newcommand{\mmse}[1][]{\mathrm{MMSE}_{\sigma_{#1}}}

\def\N{\mathbb{N}}
\def\sigmax{\sigma_\mathrm{max}}

\newcommand{\tx}{{\tilde x}}

\DeclareMathOperator{\proj}{Proj}

\newcommand{\lnorm}[1]{\left\|#1\right\|}
\newcommand{\xstar}[1][]{x^*_{\sigma_{#1}}}
\newcommand{\dottedxstar}[1][]{{\dot x}^*_{\sigma_{#1}}}
\newcommand{\NN}[2][0]{\mathcal{N}(#1,#2\mathrm{Id})}
\newcommand{\dd}[1]{\mathop{\mathrm{d}{#1}}}

\newcommand{\genmmse}{GAMMA}

\newcommand{\circled}[1]{   %
    \tikz[baseline=(char.base)]{\node[shape=circle,draw,inner sep=0.5pt] (char) {#1};}
}
\newcommand{\stackclap}[2]{\stackrel{\mathclap{#1}}{#2}}

\renewcommand{\comment}[1]{}

 \Crefname{appendix}{Appendix}{Appendices}

\title{Principled MAP estimation for inverse problems: bridging the gap between convergence and performance}

\author{
Alexandre Lagier$^{*,1}$ \quad
Valentine Tosel$^{*,2}$\quad
Anne Gagneux$^{1}$ \quad
Mathurin Massias$^{3}$ \quad
Ségolène Martin$^{3}$
\\[0.8em]
\begin{tabular}{c}
\small
$^{1}$ ENS de Lyon, CNRS, Université Claude Bernard Lyon 1, Inria, \\
\small
LIP UMR 5668, 69342 Lyon Cedex 07, France
\\[0.3em]
\small 
$^{2}$ Univ. Bordeaux, Inria, Bordeaux INP, IMB,\\
\small
UMR 5251, F-33400 Talence, France
\\[0.3em]
\small
$^{3}$ Inria, ENS de Lyon, CNRS, Université Claude Bernard Lyon 1, \\
\small
LIP UMR 5668, 69342 Lyon Cedex 07, France
\\
\end{tabular}
}
\date{}

\newcommand{\includeproblemimagesafhq}[9]{

    \includegraphics[width=0.16\textwidth]{figures/afhq/#1_clean_batch#2_im#3.eps} \hspace{-5mm} &
    \includegraphics[width=0.16\textwidth]{figures/afhq/#1_noisy_batch#2_im#3_pnsr#4.eps} \hspace{-5mm} &
    \includegraphics[width=0.16\textwidth]{figures/afhq/#1_pnp_flow_batch#2_im#3_iterfinal_pnsr#5.eps} \hspace{-5mm} &
    \includegraphics[width=0.16\textwidth]{figures/afhq/#1_approx_pgd_batch#2_im#3_iterfinal_pnsr#6.eps} \hspace{-5mm} &
    \includegraphics[width=0.16\textwidth]{figures/afhq/#1_generic_mmse_average_batch#2_im#3_iterfinal_pnsr#7.eps} \hspace{-5mm} &
    \includegraphics[width=0.16\textwidth]{figures/afhq/#1_generic_mmse_average_random_batch#2_im#3_iterfinal_pnsr#8.eps} \hspace{-5mm} &
\includegraphics[width=0.16\textwidth]{figures/afhq/#1_generic_mmse_average_2_batch#2_im#3_iterfinal_pnsr#9.eps}
\\

    &
    \textcolor{gray}{\footnotesize PSNR: #4} &
    \textcolor{gray}{\footnotesize PSNR: #5} &
    \textcolor{gray}{\footnotesize PSNR: #6} &
    \textcolor{gray}{\footnotesize PSNR: #7} &
    \textcolor{gray}{\footnotesize PSNR: #8} &
    \textcolor{gray}{\footnotesize PSNR: #9}
}

\newcommand{\includeproblemimagescelebapartone}[6]{

    \includegraphics[width=0.6\textwidth]{figures/celeba/#1_clean_batch#2_im#3.pdf}  &
    \includegraphics[width=0.6\textwidth]{figures/celeba/#1_noisy_batch#2_im#3_pnsr#4.pdf}  &
    \ifthenelse{\equal{#5}{}}{
        \includegraphics[width=0.6\textwidth]{figures/celeba/NA.pdf} 
    }{
        \includegraphics[width=0.6\textwidth]{figures/celeba/#1_pnp_diff_batch#2_im#3_pnsr#5.pdf} 
    } &
    \ifthenelse{\equal{#6}{}}{
        \includegraphics[width=0.6\textwidth]{figures/celeba/NA.pdf} 
    }{
        \includegraphics[width=0.6\textwidth]{figures/celeba/#1_prox_pnp_batch#2_im#3_pnsr#6.pdf} 
    }
}
\renewcommand{\includeproblemimagescelebapartone}[6]{DISABLED by MM}

\newcommand{\includepsnrrow}[7]{
    & \textcolor{gray}{\Huge PSNR: #1} &
    \ifthenelse{\equal{#2}{}}{\textcolor{white}{\footnotesize PSNR: #2}}{\textcolor{gray}{\Huge PSNR: #2}} &
    \ifthenelse{\equal{#3}{}}{\textcolor{white}{\footnotesize PSNR: #2}}{\textcolor{gray}{\Huge PSNR: #3}} &
    \textcolor{gray}{\Huge PSNR: #4} &
    \textcolor{gray}{\Huge PSNR: #5} &
    \textcolor{gray}{\Huge PSNR: #6} &
    \textcolor{gray}{\Huge PSNR: #7}
}

\begin{document}

\maketitle
\let\thefootnote\relax\footnotetext{$^{*}$Equal contribution.}

\begin{abstract}
    Pretrained denoisers provide a powerful way to incorporate image priors into restoration algorithms. Plug-and-Play and RED approaches exploit fixed-noise-level denoisers within first-order optimization schemes, with convergence guarantees, but often struggle to achieve high-quality reconstruction on severely ill-posed inverse problems. In contrast, recent state-of-the-art approaches leverage denoisers derived from flow- or diffusion-based generative models and evaluate them along a sequence of decreasing noise levels. While these methods achieve strong empirical performance, their convergence theory remains limited. In this paper, we bridge this gap by specifically designing an algorithm that combines denoisers at decreasing noise levels with a schedule tailored to ensure convergence. From a Bayesian perspective, we prove that our method converges to a \textit{Maximum a Posteriori} (MAP) estimate, under suitable assumptions. Subsequently, we apply our method to various ill-posed inverse problems and show that it surpasses convergent methods while competing with state-of-the-art empirical ones. 
\end{abstract}

\section{Introduction}

An inverse problem in imaging aims to reconstruct a clean image $x^*\in\R^d$ from a degraded measurement $y\in\R^m$.
In general, the degradation process, or \emph{forward model}, can be written as $y = \A(x^*) + \varepsilon$ where $\A:\R^d \to \R^m$ is the degradation operator and $\varepsilon\sim \NN{\sigma_y^2}$ represents additive white Gaussian noise.
A classical approach estimates the clean image $x^*$ by the minimizer $\hat{x}$ of a composite functional: 
\begin{equation}\label{eq:comp_min}
    \hat x \in \underset{x\in\R^d}\argmin\ f(x) + g(x),
\end{equation} 
where $f$ is a \emph{data-fidelity} term, that depends on the observation $y$ and on the degradation, typically $f(x) \propto \lnorm{\A(x)-y}^2$, whereas $g$ is a \emph{regularization} term encoding prior information about the image.
A key question is therefore how to choose the regularizer $g$.

Early works relied on hand-crafted regularizers, such as the $\ell_1$-norm, total-variation, and wavelet sparsity \citep{rudin1992nonlinear,donoho1998minimax,scherzer2009variational}. 
For these choices of $g$, \eqref{eq:comp_min} can easily be solved with classical gradient-descent or proximal algorithms. 
However, the resulting reconstruction $\hat x$ often lacks perceptual quality due to the limited expressiveness of the regularization. 
In order to integrate more complex prior information about clean images, later approaches considered the use of a pretrained denoiser $D_\sigma$ with fixed noise level $\sigma$ within optimization schemes, as an implicit regularizer.
Plug-and-Play (PnP) methods \citep{venkatakrishnan2013plug,chan2016plug} replaced the proximal operator by such a denoiser, which significantly improved perceptual quality of the reconstruction on simple inverse problems such as denoising and deblurring. 
Under certain conditions on the denoiser, PnP methods can be proven to converge, but the limit of the resulting sequence generally cannot be characterized \citep{ryu2019plug,pesquet2021learning,hurault2022gradient,hurault2023convergent}.
Regularization by Denoising (RED, \citealt{romano2017little}) partially bridges the gap between gradient based methods and PnP ones, by using a specific regularizer: $g(x) = \mu x^\top(x - D_\sigma(x))$. 
When $D_\sigma$ is locally homogeneous and has symmetric Jacobian, 
$\nabla g(x) = \mu (x - D_\sigma(x))$, 
and gradient descent on \eqref{eq:comp_min} leads to \Cref{alg:RED};
however these conditions rarely hold for trained denoisers \citep{reehorst2019red}.

\begin{center}
\begin{minipage}[t]{0.473\textwidth}
\RestyleAlgo{ruled}
\begin{algorithm}[H]
\SetInd{0.0em}{0.2em}
\textbf{Initialization:} $x_0 \in \mathbb{R}^d$, $\alpha,\sigma > 0$\\
\For{$k = 0,1,\ldots$}{
$x_{k+1} = x_k - \alpha\nabla f(x_k)
            - \alpha\mu (x_k-D_\sigma(x_k))$
}
\caption{RED}
\label{alg:RED}
\end{algorithm}
\end{minipage}
\hfill
\begin{minipage}[t]{0.52\textwidth}
\RestyleAlgo{ruled}
\begin{algorithm}[H]
\SetInd{0.0em}{0.2em}
\textbf{Initialization:} $x_0 \in \mathbb{R}^d$, $(\alpha_k)_k,(\sigma_k)_k \searrow 0$\\
\For{$k = 0,1,\ldots$}{
$x_{k+1} = x_k - \alpha_k\nabla f(x_k)
            - \alpha_k\mu_k (x_k-D_{\sigma_k}(x_k))$
}
\caption{Annealed RED}
\label{alg:annealed-RED}
\end{algorithm}
\end{minipage}
\end{center}

Despite appealing guarantees, the performance of such methods remains limited on challenging inverse problems such as inpainting or super-resolution. 
To address this limitation, state-of-the-art algorithms use off-the-shelf diffusion- or flow-based generative models \citep{song2020score,ho2020denoising,lipman2023flow,liu2023flow,albergo2023stochasticinterpolant}. 
Most of these algorithms have two main ingredients: (approximate) MMSE denoisers $D_\sigma$ accross all noise levels $\sigma$, along with an annealed noise schedule with decreasing $\sigma$ across iterations \citep{chung2023dps,song2023pseudoinverse,zhang2024flow}.
Among them, we are specifically interested in optimization-based methods which are explicit adaptations of PnP or RED schemes \citep{zhu2023denoising,renaud2024plug,martin2025pnp,pourya2026flower}. 
While these methods achieve impressive results on highly ill-posed inverse problems, their theoretical understanding lags behind: 
since the denoiser changes at each iteration, so does the underlying objective, and analyses developed for classical PnP methods no longer hold. 
Apart from the denoising case $\A = \Id$ \citep{pesme2025map}, the convergence of annealed schemes and the characterization of their limit remain open questions.

Our work belongs to the line of work that seeks to build interpretable and convergent inverse problems solvers based on generative models \citep{zhang2024flow,pesme2025map}. 
Many generative-based methods rely
on empirical design choices, in particular empirical noise schedules.
Instead, we build an algorithm, interpretable as an annealed RED scheme (\Cref{alg:annealed-RED}), to explicitly solve the \emph{maximum a posteriori} (MAP) problem and derive its schedules to guarantee convergence. 
From a Bayesian perspective, the MAP estimator is the most likely image given the observation $y$:
\begin{equation}\label{eq:map_est}
    \hat{x}_{\text{MAP}} \in \underset{x\in\R^d} \argmax\ \log p(x|y) = \underset{x\in\R^d}\argmin\ -\log p(y|x)-\log p(x).
\end{equation}
Under a Gaussian noise model, $-\log p(y |x)= \lnorm{\A(x)-y}^2 / (2 \sigma_y^2)$ up to an additive constant, and \eqref{eq:map_est} is an instance of \eqref{eq:comp_min} with $f(x) = \lnorm{\A(x)-y}^2 / 2$ and $g = -\sigma_y^2\log p$.
While $f$ is explicit, the negative log-prior $-\log p$ is unknown.
\citet{pesme2025map} showed how MMSE denoisers, which diffusion and flow matching models approximate, can be leveraged to solve MAP denoising problems ($\A = \Id$), i.e. to compute the prox of $- \log p$: their MMSE Averaging scheme provably converges.
For inverse problems, they propose Approx-PGD, which uses MMSE Averaging as an inner loop to approximate proximal steps.
However, it requires a growing number of inner iterations and only guarantees convergence of the objective values, which limits its practical use~(\Cref{sec:experiments}).

Our contributions are the following:
\begin{itemize}%
    \item We introduce Generalized Annealed MMSE Averaging (GAMMA), a generic inverse problem solver using MMSE denoisers evaluated at decreasing noise levels. 
    It extends the MMSE Averaging algorithm of \citet{pesme2025map}, which is restricted to MAP denoising. 
    It also connects to existing methods such as annealed version of RED (\Cref{alg:annealed-RED}), %
    annealed-SNORE \citep{renaud2024plug} and PnP-Flow \citep{martin2025pnp}, which do not possess convergence guarantees. 
    \item  Under log-concavity of the distribution $p$ and mild assumptions on the data-fidelity $f$, we prove that the iterates of GAMMA, and of a variant involving renoising of the iterates, converge to a MAP estimate of the inverse problem. 
    When multiple MAP solutions exist, our algorithm is able to guide the iterates towards the one closest to a reference point.  
    \item We show that our method converges faster than Approx-PGD  \citep{pesme2025map}, and is competitive with state-of-the-art annealed methods on CelebA and AFHQ, making it both provably convergent and practical. 
\end{itemize}

\section{Proposed algorithm: Generalized Annealed MMSE Averaging (GAMMA)}
\subsection{Problem \& Background}
We aim to solve the inverse problem \eqref{eq:comp_min} by computing a MAP estimate
\begin{equation}\label{eq:map}
    \hat{x}\in \argmin_{x \in \R^d} f(x) - \tau \log p(x), \tag{MAP}
\end{equation}
where $f(x) := \frac12\lnorm{\A(x) - y}^2$, $p$ denotes the prior distribution over clean images and $\tau := \sigma_y^2 > 0$ denote the noise level of the additive noise. In our work, we assume the forward process is known, while the underlying probability distribution $p$ is not. 
Rather, we suppose we have access to the associated \emph{Minimum Mean Square Error} (MMSE) denoiser at every noise level $\sigma > 0$, defined by
\begin{equation}
    \mmse(z) = \E_{X\sim p, \varepsilon\sim\NN{}}\left[X\,|\, X + \sigma\varepsilon = z\right]\in\R^d\,.
\end{equation}
In practice, MMSE denoisers can be approximated, up to reparameterization, by Flow Matching or diffusion models. Indeed, those methods implicitly learn the score of the smoothed distribution $p_\sigma = p\ast\mathcal{N}(0, \sigma^2\Id)$ which can be related to the MMSE denoiser at level $\sigma > 0$ through Tweedie's formula \citep{efron2011tweedie}:%
\begin{equation}
    \label{eq:tweedie}
    \mmse(z) = z + \sigma^2\nabla\log p_\sigma(z).
\end{equation}
\citet{pesme2025map} introduce MMSE Averaging, an algorithm to solve \eqref{eq:map} in the denoising setting, namely when $f(x) = \frac{1}{2}\lnorm{x - y}^2$; in that case the MAP is simply the proximal operator of $- \tau \log p$.
Its iterations read as follows
\begin{equation}\label{eq:mmseaveraging}
    x_{k + 1} = \alpha_k y + (1 - \alpha_k)\mmse[k](x_k)\,, \hspace{-26mm} 
    \tag{MMSE Averaging}
\end{equation}
with $(\alpha_k)_k$ a weight sequence, $(\sigma_k)_k$ an annealing noise level sequence. 
Plugging in Tweedie's formula \eqref{eq:tweedie}, we can write each iteration as a gradient step:
\begin{equation}
    x_{k + 1} = x_k - \alpha_k\left((x_k - y) - \left(\dfrac{1 - \alpha_k}{\alpha_k}\right)\sigma_k^2\nabla\log p_{\sigma_k}(x_k)\right) \, .
\end{equation}
The gradient step is taken on a smoothed objective function that is better conditioned than the main objective function $F = f - \tau\log p$. Convergence is derived under suitable assumptions, in particular the log-concavity of $p$ and a specific choice for the sequences $(\alpha_k)_k$ and $(\sigma_k)_k$. However, this scheme is restricted to denoising. 
We now introduce an algorithm in the same spirit as~\eqref{eq:mmseaveraging}, but applicable to a much broader class of inverse problems.

\subsection{Method formulation}

We present Generalized Annealed MMSE Averaging (\genmmse{}), a new algorithm based on MMSE denoisers. %
It generalizes MMSE Averaging to any inverse problems (beyond denoising / prox computation), while preserving its convergence guarantees.
Compared to MMSE Averaging, it replaces the fixed term $y$ by a gradient descent step on a regularized version of the datafit $f$.
\begin{tcolorbox}[colback=WarmGray!5,colframe=SlateGray,boxrule=0.5pt,arc=4pt]
    \textbf{Generalized Annealed MMSE Average (\genmmse{}):} 
    Let $(\alpha_k)_k, (\sigma_k)_k$ be two positive annealing sequences. %
    Starting from an arbitrary initialization $x_0\in\R^d$, we define the following iterates
    \begin{equation}
        \label{eq:genmmse}
        x_{k + 1} = \alpha_k(x_k - \nabla f_{\sigma_k}(x_k)) + (1 - \alpha_k)\mmse[k](x_k),
    \end{equation}
    where $f_\sigma(x) := f(x) + \frac{\lambda(\sigma^2)}{2}\lnorm{x - u}^2$ is a regularized data-fidelity term, with an annealing regularization weight $\lambda(\sigma^2)$, and $u\in \R^d$ a reference point that we can choose. 
\end{tcolorbox}

We also introduce a variant of \genmmse{} in which the denoising of the iterate is replaced by the average of its denoised noisy versions. 
Such averaging was previously proposed by \citet{renaud2024plug} and \citet[Remark 3]{martin2025pnp} to improve the empirical performance of their respective algorithms.
In practice, we show in \Cref{fig:ablation_num_eps} that the number of samples $\varepsilon$ used to approximate the expectation can be taken equal to~1 -- leading to a \textit{de facto} stochastic algorithm.

\begin{tcolorbox}[colback=WarmGray!5,colframe=SlateGray,boxrule=0.5pt,arc=4pt]
\textbf{Renoised-\genmmse{}:} 
    With the same notation and an arbitrary intialization $x_0\in\R^d$, we define the following iterates  
    \begin{equation}
        \label{eq:renoised_genmmse}
        \tx_{k + 1} = \alpha_k(\tx_k - \nabla f_{\sigma_k}(\tx_k)) + (1 - \alpha_k)\E_{\varepsilon \sim \NN{}} \left[\mmse[k](\tx_k + \sigma_k\varepsilon)\right],
    \end{equation}
\end{tcolorbox}
Similarly to \citet{pesme2025map}, our update is an average between a gradient step on the data-fidelity and the denoised previous iterate. Both terms hold complementary information, and averaging them allows us to minimize the composite objective function. Using \eqref{eq:tweedie} yields the following key result.

\begin{restatable}{proposition}{smoothedgrad}
    The iterations of \eqref{eq:genmmse} can be written as a gradient descent step when we fix $\left(\frac{1 - \alpha_k}{\alpha_k}\right)\sigma_k^2 = \tau$:
    \begin{equation}
        x_{k + 1} = x_k - \alpha_k\nabla F_{\sigma_k}(x_k), \quad F_{\sigma_k}(x) := f_{\sigma_k}(x) - \tau\log p_{\sigma_k}(x)
    \end{equation}
    Similarly, the renoised iterations \eqref{eq:renoised_genmmse} writes,
    \begin{equation}
        \tx_{k + 1} = \tx_k - \alpha_k\nabla G_{\sigma_k}(\tx),\quad G_{\sigma_k}(x_k):= f_\sigma(x) - \tau \E_{\varepsilon\sim\NN{}}\left[\log p_{\sigma_k}(x+\sigma_k \varepsilon)\right]
    \end{equation}    
\end{restatable}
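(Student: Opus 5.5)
The plan is a direct algebraic computation: substitute Tweedie's formula \eqref{eq:tweedie} into the update \eqref{eq:genmmse} and regroup the terms. Writing $\mmse[k](x_k) = x_k + \sigma_k^2\nabla\log p_{\sigma_k}(x_k)$, the right-hand side of \eqref{eq:genmmse} becomes
\begin{equation*}
\alpha_k x_k - \alpha_k\nabla f_{\sigma_k}(x_k) + (1-\alpha_k)x_k + (1-\alpha_k)\sigma_k^2\nabla\log p_{\sigma_k}(x_k).
\end{equation*}
The two $x_k$ terms combine to $x_k$. Factoring out $\alpha_k$ then gives
\begin{equation*}
x_{k+1} = x_k - \alpha_k\Bigl(\nabla f_{\sigma_k}(x_k) - \tfrac{1-\alpha_k}{\alpha_k}\sigma_k^2\nabla\log p_{\sigma_k}(x_k)\Bigr).
\end{equation*}
Under the coupling $\frac{1-\alpha_k}{\alpha_k}\sigma_k^2=\tau$, the bracket equals $\nabla F_{\sigma_k}(x_k)$, which is the first claim. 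Implicitly, this requires $\alpha_k\in(0,1)$, which the coupling enforces since $\tau>0$.

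For the renoised version, apply Tweedie at the point $\tx_k+\sigma_k\varepsilon$ and take expectations:
\begin{equation*}
\E_\varepsilon[\mmse[k](\tx_k+\sigma_k\varepsilon)] = \tx_k + \sigma_k^2\,\E_\varepsilon[\nabla\log p_{\sigma_k}(\tx_k+\sigma_k\varepsilon)],
\end{equation*}
using $\E[\varepsilon]=0$. The same regrouping then yields
\begin{equation*}
\tx_{k+1}=\tx_k-\alpha_k\Bigl(\nabla f_{\sigma_k}(\tx_k)-\tau\,\E_\varepsilon[\nabla\log p_{\sigma_k}(\tx_k+\sigma_k\varepsilon)]\Bigr).
\end{equation*}
It remains to identify $\E_\varepsilon[\nabla\log p_{\sigma_k}(x+\sigma_k\varepsilon)]$ with $\nabla_x\E_\varepsilon[\log p_{\sigma_k}(x+\sigma_k\varepsilon)]$, so that the bracket is $\nabla G_{\sigma_k}(\tx_k)$.

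The only non-routine step is this interchange of gradient and expectation. My first approach is to write the expectation as a convolution with the Gaussian kernel,
\begin{equation*}
\E_\varepsilon[\log p_{\sigma}(x+\sigma\varepsilon)] = (\log p_\sigma * \varphi_\sigma)(x),
\end{equation*}
and to justify differentiation under the integral by dominated convergence. This needs a locally uniform integrable bound on $\nabla\log p_\sigma(x+\sigma\varepsilon)$. Such a bound is available: $\nabla\log p_\sigma(z) = (\mmse(z)-z)/\sigma^2$, and $\mmse(z)$ has at most linear growth whenever $p$ has finite second moment, for instance $\|\mmse(z)\|^2\le \E[\|X\|^2\,|\,z]$. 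This gives integrability against the Gaussian, which has all moments. Alternatively, one can move the derivative onto the Gaussian kernel, which is smooth with rapidly decaying derivatives, provided $\log p_\sigma$ has at most polynomial growth. That growth condition holds because $p_\sigma$ is smooth and positive and $\log p_\sigma$ grows at most quadratically.

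I will state whichever mild moment assumption is needed and treat the rest as direct computation.
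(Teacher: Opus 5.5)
Your proposal is correct and follows the paper's route. The paper substitutes Tweedie's formula \eqref{eq:tweedie} into \eqref{eq:genmmse} and \eqref{eq:renoised_genmmse}, regroups using $\frac{1-\alpha_k}{\alpha_k}\sigma_k^2=\tau$, and justifies exchanging $\nabla$ and $\E_\varepsilon$ by dominated convergence (in the proof of \Cref{thm:renoised_iterates_cv}) via the uniform bound $\|\nabla^3\log p_\sigma\|_F\le M$. One small fix: the Jensen bound $\|\mmse(z)\|^2\le\E[\|X\|^2\mid z]$ gives no pointwise growth control by itself, but under \Cref{ass:prior} you have $0\preceq-\nabla^2\log p_\sigma\preceq\sigma^{-2}\Id$, so $\nabla\log p_\sigma$ is Lipschitz, hence of linear growth, and this gives the needed domination without any extra moment assumption.
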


Rather than directly running gradient descent steps on the MAP objective $f - \tau \log p$, the algorithm runs it on smoothed objectives $F_{\sigma_k}$ or $G_{\sigma_k}$, with $\sigma_k > 0$ decreasing at each iteration: the convex data-fidelity term is replaced by a strongly convex term and the log-prior is replaced by a smoothed log-prior. 
We observe that $F_\sigma$ converges pointwise to the objective function $F = f - \tau \log p$ as $\sigma$ goes to 0:
\begin{equation}
    F_\sigma(x) = f_\sigma(x) - \tau\log p_\sigma(x) \underset{\sigma \to 0}{\longrightarrow} f(x) - \tau\log p(x) = F(x).
\end{equation}
The same pointwise convergence also holds for $G_\sigma$.

\begin{remark}
    We also consider a stochastic version of \eqref{eq:renoised_genmmse}, where we only use one noise sample instead of averaging over infinitely many noise samples. We chose to detail the setting and the analysis of this stochastic scheme in \Cref{ssec:stochastic_mmse}.
\end{remark}

\section{Related works}
Most existing convergence analyses of PnP and RED algorithms consider a fixed noise level and rely on structural assumptions on the denoiser, such as boundedness, non-expansivity, or suitable Jacobian properties \citep{ryu2019plug,reehorst2019red,terris2020building,hurault2023convergent}. Recent generative-model-based methods instead use denoisers at decreasing noise levels \citep{zhu2023denoising,mardani2024variational,martin2025pnp,pourya2026flower}, for which the implicit regularization changes throughout the iterations and the fixed-noise theory does not readily apply. Our construction is specifically designed for this decreasing-noise setting, with a schedule that preserves a variational interpretation and enables convergence analysis.

\paragraph{Link with previous methods.} 
Our method is connected to several existing works.

\begin{itemize}[leftmargin=*,topsep=-2pt]
    \item \textbf{MMSE Averaging.} The iterates \eqref{eq:genmmse} recover MMSE Averaging when $f(x) = \frac{1}{2}\lnorm{x - y}^2$ and $\lambda(\sigma^2) = 0$.
    \item \textbf{RED.} RED \citep{romano2017little}
    performs gradient descent on $f + g$ with $g(x) = \mu
    x^\top(x - D_\sigma(x))$, which gives \Cref{alg:RED} provided $D_\sigma$ is locally homogeneous with a
    symmetric Jacobian \citep{reehorst2019red}. 
    In the GAMMA iterates \eqref{eq:genmmse}, if one sets $\lambda(\sigma^2) = 0$, a simple computation shows we recover an annealed version of RED (\Cref{alg:annealed-RED}) with $\mu_k = \frac{1-\alpha_k}{\alpha_k}$. 
    \item \textbf{SNORE} \citep{renaud2024plug} adds noise to the input of the
    denoiser in RED, and converges to a critical point of a smoothed MAP objective at
    fixed noise level. With MMSE denoisers, its annealed variant is a single-sample
    version of Renoised-GAMMA \eqref{eq:renoised_genmmse} with
    $\lambda(\sigma^2) = 0$. Unlike \genmmse{}, its noise schedule is not designed
    to preserve convergence as $\sigma$ decreases, and no guarantees are currently
    available for it.
    \item \textbf{PnP-Flow.} Up to renaming the iterates, one iteration of PnP-Flow \citep{martin2025pnp} writes:
    \begin{equation}
    \label{eq:pnpflow_iter}
    \begin{cases}
        z_{k + 1} = D_{\sigma_k}(x_{k} + \sigma_k\varepsilon_k)\,, \quad\varepsilon_k \sim \NN{} \, ,\\
        x_{k + 1} = z_k - \alpha_k\nabla f(z_k) \, .
    \end{cases}
    \hspace*{-1cm}
    \tag{PnP-Flow}
    \end{equation}
    When the denoiser is learned optimally to be the MMSE denoiser, one can then leverage~\eqref{eq:tweedie} to interpret an iteration of PnP-FLow as two consecutive gradient steps, rather than as a single gradient step as in \genmmse{}. While it is in general difficult to interpret PnP-Flow as a minimization algorithm, in the denoising case, i.e., $f(x) = \frac{1}{2}\lnorm{x - y}^2$, a single iteration of \eqref{eq:pnpflow_iter} can be written as an iteration of \eqref{eq:mmseaveraging}, in which noise is added at each step to the input of the denoiser (see details in \Cref{ssec:pnpflow}).
    \item \textbf{Approximate Proximal Gradient Descent.} Finally, as MMSE Averaging allows approximate computation of the proximal operator of $-\tau \log p$, \citet[Algorithm 1]{pesme2025map} proposed to use this approximation in a proximal gradient descent scheme for inverse problems with generic data-fidelity $f$.
    Approx-PGD inherits two limitations of inexact proximal algorithms: the inner loop to approximate the proximal operator requires increasingly many iterations\footnote{at iteration $k$, it requires $\mathcal{O}(k^{1 + \eta})$ with $\eta > 0$} making the algorithm computationally expensive, and the convergence results apply to objective values $F(x_k)$ without ensuring the convergence of the iterates themselves. 
\end{itemize}
Conceptually, our framework provides a common perspective on MMSE Averaging, an annealed variant of RED, and SNORE, while also revealing a close connection with PnP-Flow.
In addition to circumventing the limitations of Approx-PGD described above, our approach gives competitive results on imaging inverse problems, hence bridging the gap between the theory developed in \citet{pesme2025map} and algorithms used in practice.

\paragraph{Optimization perspective.}
From an optimization viewpoint, GAMMA is closely related to continuation methods \citep{allgower2012numerical}: it performs gradient steps on smoothed objectives $F_{\sigma_k}$ which converge pointwise to $F$ as $\sigma_k \to 0$. 
Unlike classical continuation, it takes a single step on each smoothing level instead of solving each intermediate problem. 
Our construction combines two regularization mechanisms. 
First, a vanishing Tikhonov term is added to the data-fidelity, a classical technique in convex optimization and inverse problems whose convergence properties have been studied, e.g., by \citet{attouch2024convex}. 
Second, the prior is Gaussian-smoothed inside the logarithm, through $-\log p_\sigma$. This differs from standard Gaussian smoothing \citep{nesterov2017} which would be applied directly to the objective function (hence to $- \log p$). 
This form arises naturally from MMSE denoisers and requires a specific analysis as $\sigma \to 0$.

\section{Convergence Results}
\label{sec:conv}

In this section, we investigate the convergence of \genmmse{} to the MAP estimate. We show that, under suitable assumptions on the problem, iterates from \eqref{eq:genmmse} and \eqref{eq:renoised_genmmse} converge to a MAP estimate i.e a minimum of $F$. 

\subsection{Assumptions on the problem}
We impose some assumptions on the data-fidelity $f$ and $p$. Indeed, we need to ensure that $F$ is well defined and admits minimizers (i.e. MAP estimates exist) in order to prove convergence. Therefore, we enforce the following properties:

\begin{assumption}
    \label{ass:datafit}
    The function $f$ is convex, lower-bounded, twice continuously differentiable and $L_f$-smooth.
\end{assumption}

\begin{assumption}
    \label{ass:prior}
    The probability distribution $p$ on $\R^d$ is such that $p > 0$ and $-\log p$ is convex, three times differentiable and with bounded third derivative. We define $M \geq 0$ as
	\begin{equation}
		M = \underset{x\in\R^d}{\sup}\lnorm{\nabla^3\log p(x)}_F\,,
	\end{equation}
	where for $\A\in\R^{d\times d\times d}$, $\lnorm{\A}_F := \left(\sum_{i,j,k}\A_{i,j,k}^2\right)^{1/2}$ corresponds to the Frobenius norm.
\end{assumption}

\paragraph{Discussion on the assumptions.}
While \Cref{ass:datafit} on the data-fidelity term $f$ is typically satisfied in standard inverse problems, the log-concavity assumption on the prior $p$ in \Cref{ass:prior} is more restrictive. Nevertheless, log-concavity provides a natural and tractable setting for studying decreasing-noise schemes and is a standard simplifying assumption in theoretical analyses of annealed methods \citep{dalalyan2017sampling,brosse2018normalizing,durmus2019analysis}.
It is also the setting adopted by \citet{pesme2025map} to establish convergence guarantees for MMSE Averaging.

These assumptions guarantee some nice properties for $F$ and $F_\sigma$, proven in \Cref{ssec:logpsigma}.

\begin{restatable}{lemma}{coerc}\label{lemm:coerc}
    Under \Cref{ass:datafit} and \Cref{ass:prior}, $F$ is well-defined, convex and coercive.
\end{restatable}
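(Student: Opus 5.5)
Well-definedness and convexity are immediate. By \Cref{ass:prior}, $p>0$ everywhere, so $-\log p$ is a finite, convex, $\mathcal{C}^3$ function on $\R^d$. By \Cref{ass:datafit}, $f$ is finite and convex. Hence $F = f - \tau\log p$ is a finite-valued convex function on $\R^d$, as a nonnegative combination of convex functions ($\tau>0$).

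For coercivity, the plan is to show that $-\log p$ alone is coercive and then add the lower-bounded $f$. The key input is that $p$ is a probability density, so $\int_{\R^d} e^{-\phi} = 1$ with $\phi := -\log p$ convex and finite. I will use the classical fact that a convex function with integrable $e^{-\phi}$ has bounded sublevel sets, and I will prove it directly.

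Suppose the sublevel set $S_c = \{\phi\le c\}$ is unbounded for some $c\ge\phi(0)$. Then $S_c$ is closed, convex and contains $0$. An unbounded closed convex set containing $0$ contains a ray, so there is a direction $v$ with $\|v\|=1$ such that $\phi(tv)\le c$ for all $t\ge0$. Along this ray I will build a set of infinite measure on which $\phi$ stays bounded.
\begin{itemize}[leftmargin=*]
\item Since $\phi$ is continuous, there is $r>0$ with $\phi\le \phi(0)+1$ on the ball $B(0,r)$.
\item For $x\in B(0,r)$ and $t\ge0$, convexity gives $\phi\big(\tfrac12 x+\tfrac12 tv\big)\le \tfrac12(\phi(0)+1)+\tfrac12 c$.
\end{itemize}
So $\phi$ is bounded above on the union $\bigcup_{t\ge0} B(tv/2, r/2)$, a half-infinite cylinder of infinite Lebesgue measure. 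This makes $\int e^{-\phi}=\infty$, a contradiction. Hence every sublevel set of $\phi$ is bounded.

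A convex function with bounded sublevel sets is coercive, i.e.\ $\phi(x)\to+\infty$ as $\|x\|\to\infty$. In fact it even grows at least linearly: $\phi(x)\ge a\|x\|-b$ for some $a>0$, obtained by comparing $\phi$ on the sphere of radius $R$ and using convexity along rays from a minimizer. Only plain coercivity is needed here.

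Finally, $f\ge \inf f > -\infty$ by \Cref{ass:datafit}, so
\[
F(x) \ge \inf f + \tau\,\phi(x) \to +\infty \quad \text{as } \|x\|\to\infty.
\]
The main obstacle is the step from integrability to bounded sublevel sets. Once the ray and cylinder construction above is in place, everything else follows routinely.
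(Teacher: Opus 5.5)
Your proof is correct and follows essentially the same route as the paper. Both argue by contradiction that convexity, combined with a bound on a small ball, keeps $-\log p$ bounded on a set of arbitrarily large (in your case infinite) Lebesgue measure, contradicting $\int p = 1$, and both then use that $f$ is lower-bounded. The only difference is minor: the paper works with a sequence $z_k$ and the convex hulls of $\mathcal{B}(0,1)\cup\{z_k\}$, whose volume grows like $\|z_k\|$, whereas you take a recession ray of the closed sublevel set and use a single half-infinite cylinder, which avoids the cone-volume estimate but relies on the recession-direction fact.
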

Notably, $F$ admits a non-empty and convex set of minimizer, thus the MAP estimate problem is well-defined.
Additionally, the convexity of $-\log p$ induces the convexity of $-\log p_\sigma$ by the Prékopa-Leindler inequality.  We can then establish the following key regularity properties of $F_\sigma$.

\begin{restatable}{proposition}{conditioning}\label{prop:conditionning}
    For $\sigma > 0$, $F_\sigma$ is $L_\sigma$-smooth and $\mu_\sigma$-strongly convex with
    \begin{equation}
        L_\sigma = L_f + \frac{\tau}{\sigma^2} + \lambda(\sigma^2),\quad\mu_\sigma = \lambda(\sigma^2)\,.
    \end{equation}
\end{restatable}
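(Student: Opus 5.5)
We split $F_\sigma = f + \frac{\lambda(\sigma^2)}{2}\lnorm{\cdot - u}^2 - \tau\log p_\sigma$ into three pieces. We show each piece is convex and smooth, and then add the smoothness and strong-convexity constants. By \Cref{ass:datafit}, $f$ is convex and $L_f$-smooth, so $0 \preceq \nabla^2 f \preceq L_f\Id$. The Tikhonov term has Hessian exactly $\lambda(\sigma^2)\Id$. It contributes $\lambda(\sigma^2)$ both to the smoothness constant and to the strong-convexity modulus. It therefore remains to show
\begin{equation*}
    0 \preceq -\nabla^2\log p_\sigma(x) \preceq \frac{1}{\sigma^2}\Id \quad \text{for all } x\in\R^d .
\end{equation*}
Multiplied by $\tau$, this gives the extra $\tau/\sigma^2$ in $L_\sigma$ and nothing in $\mu_\sigma$. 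Summing the three Hessian bounds then yields $\lambda(\sigma^2)\Id \preceq \nabla^2 F_\sigma \preceq (L_f + \tau/\sigma^2 + \lambda(\sigma^2))\Id$, which is the claim.

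For the lower bound, $-\log p_\sigma$ is convex: $p_\sigma = p\ast\mathcal{N}(0,\sigma^2\Id)$ is a convolution of two log-concave densities, hence log-concave by Prékopa--Leindler, as already noted in the text. Also $p_\sigma$ is smooth and positive, being a convolution with a Gaussian, so its Hessian is well defined.

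For the upper bound, we compute the Hessian of $\log p_\sigma$ through the posterior of $X$ given $X+\sigma\varepsilon = z$. Write
\begin{equation*}
    p_\sigma(z) = \int p(x)\,\varphi_\sigma(z-x)\,\dd{x}
\end{equation*}
and differentiate twice under the integral, which is justified by Gaussian tails and $p$ being a probability density. This gives the standard second-order Tweedie identity
\begin{equation*}
    \nabla^2\log p_\sigma(z) = -\frac{1}{\sigma^2}\Id + \frac{1}{\sigma^4}\,\mathrm{Cov}\left[X \,\middle|\, X+\sigma\varepsilon = z\right].
\end{equation*}
Since a covariance matrix is positive semidefinite, $-\nabla^2\log p_\sigma(z) \preceq \frac{1}{\sigma^2}\Id$, which is exactly the needed bound. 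Equivalently, one can note that $\frac{\|z\|^2}{2\sigma^2} + \log p_\sigma(z)$ is the log of a Laplace-type integral $\int p(x)e^{\langle z,x\rangle/\sigma^2 - \|x\|^2/(2\sigma^2)}\dd{x}$, hence convex in $z$ by Hölder. This form avoids differentiating under the integral for the inequality itself.

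The main obstacle is only bookkeeping: justifying differentiation under the integral, or using the Hölder/log-sum-exp convexity argument to bypass it. The other steps are direct.
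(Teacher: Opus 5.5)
Your proof is correct and is essentially the paper's argument. The paper obtains convexity of $-\log p_\sigma$ from Prékopa--Leindler and takes the $\tau/\sigma^2$-smoothness from \citet[Proposition~4]{pesme2025map}, i.e.\ exactly the bound $0\preceq-\nabla^2\log p_\sigma\preceq\sigma^{-2}\Id$ that you derive via the posterior-covariance identity (your Hölder/log-Laplace remark is a valid alternative for the upper bound), and then adds the constants of $f$, the Tikhonov term and $-\tau\log p_\sigma$ termwise as you do.
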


This result, directly adapted from \citet[Proposition~4]{pesme2025map}, motivates our method: $F_\sigma$ has more favorable optimization properties than $F$, suggesting that minimizing $F_\sigma$ may be more tractable than minimizing $F$ directly. In particular, for any $\sigma>0$, the strong convexity of $F_\sigma$ ensures that $
\xstar := \argmin F_\sigma$ is well-defined and unique. As $\sigma\to0$, however, the strong-convexity parameter $\mu_\sigma$ vanishes, and the behavior of $x_\sigma^\star$ is therefore not immediate. A key step in our analysis is to show that, under suitable assumptions, $(\xstar)_{\sigma >0}$ converges to a minimizer of $F$ as $\sigma\to0$.

\subsection{On the minimizers of \texorpdfstring{$F$ and $F_\sigma$}{F and its smooth versions}}\label{ssec:minimizerpath}

To investigate the properties of the mapping $\sigma^2\mapsto\xstar$, we enforce some mild assumptions on the regularization weight $\lambda$, which plays a key role in our analysis. Specifically, we assume that $\lambda$ is continuously differentiable on $(0,+\infty)$ and satisfies $\lambda(0)=0$. 
The implicit function theorem then ensures that the mapping $\sigma^2\mapsto\xstar$ is continuously differentiable on $(0,+\infty)$. However, its behavior and its regularity as $\sigma\to 0$ are not immediate and are the object of the following propositions.

\begin{restatable}[Limit of the minimizers]{proposition}{limminim}\label{prop:lim_minim}
    Let $\sigmax > 0$. Then, $(\xstar)_{\sigma\in (0, \sigmax]}$ is bounded. More specifically,
    \begin{enumerate}[label=(\roman*)]
        \item as $\sigma \to 0$, every cluster point of $\xstar$ is a minimizer of $F$.
        \item additionally, if $\frac{\sigma^2}{\lambda(\sigma^2)} \underset{\sigma\to 0}\longrightarrow 0$, then $\xstar \underset{\sigma \to 0}{\longrightarrow} x^\dagger$, where 
        \begin{equation}
            x^\dagger := \uniqueargmin \{\|z-u\|\,\vert\, z\in \argmin F\},
        \end{equation}
        which exists and is well defined.
    \end{enumerate}
\end{restatable}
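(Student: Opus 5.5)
The argument rests on comparing $-\log p_\sigma$ with $-\log p$ uniformly, using the bounded third derivative in \Cref{ass:prior}. Write $p_\sigma(x)=\E_\varepsilon[p(x+\sigma\varepsilon)]$. By Jensen's inequality and convexity of $-\log p$,
\begin{equation*}
-\log p_\sigma(x)\leq \E[-\log p(x+\sigma\varepsilon)].
\end{equation*}
A Taylor expansion of the right-hand side to second order (odd moments vanish) gives $-\log p(x)+\frac{\sigma^2}{2}\Delta(-\log p)(x)+O(M\sigma^3)$. For the reverse direction, I would use the Laplace-type behaviour of the Gaussian convolution to get a pointwise estimate of the form
\begin{equation*}
-\log p_\sigma(x) = -\log p(x) + \sigma^2 r_\sigma(x),
\end{equation*}
with $r_\sigma$ controlled uniformly on compacts. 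If that is too delicate, I would instead work with the gradient: by Tweedie and the representation $\nabla\log p_\sigma(x)=\E[\nabla\log p(X)\mid X+\sigma\varepsilon=x]$, the bounded third derivative of $\log p$ gives $\|\nabla\log p_\sigma-\nabla\log p\|\leq C\sigma^2(1+\ldots)$ locally. The goal is an estimate $|F_\sigma - \lambda(\sigma^2)\|\cdot-u\|^2/2 - F|\leq C\sigma^2$ on bounded sets, or at least some estimate of order $\sigma^2$.

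\textbf{Boundedness.} Let $\bar x\in\argmin F$, which exists by \Cref{lemm:coerc}. By optimality, $F_\sigma(\xstar)\leq F_\sigma(\bar x)$. Dropping the nonnegative Tikhonov term on the left gives $f(\xstar)-\tau\log p_\sigma(\xstar)\leq F(\bar x)+\lambda(\sigma^2)\|\bar x-u\|^2/2+O(\sigma^2)$. To conclude, I need coercivity of $f-\tau\log p_\sigma$ uniformly in $\sigma\leq\sigmax$. I would obtain it from convexity: $-\log p_\sigma \geq$ (a lower bound via the lower estimate above), together with the fact that a convex coercive function has linear growth. This uniform coercivity is the step I expect to be the main obstacle. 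I would first try the Jensen-type bound $-\log p_\sigma(x)\geq -\log p(x) - c\,\sigma(\|\nabla\log p(x)\|+\ldots)$, or directly exploit log-concavity of $p$, which gives exponential tails that survive convolution.

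\textbf{Part (i).} Take $\sigma_n\to0$ with $x^*_{\sigma_n}\to \hat x$. For any $z$, optimality gives $F_{\sigma_n}(x^*_{\sigma_n})\leq F_{\sigma_n}(z)$. The right side tends to $F(z)$ by the pointwise convergence already noted. For the left side, I would use lower semicontinuity along the sequence: $\liminf F_{\sigma_n}(x^*_{\sigma_n})\geq F(\hat x)$, which follows from local uniform convergence of $-\log p_\sigma\to-\log p$ (from the estimate above on compacts) and continuity of $f$. Hence $F(\hat x)\leq F(z)$ for every $z$, so $\hat x$ is a minimizer of $F$.

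\textbf{Part (ii).} First, $x^\dagger$ is well defined as the projection of $u$ onto the nonempty closed convex set $\argmin F$. For the convergence, compare with $z=x^\dagger$:
\begin{equation*}
F_\sigma(\xstar)\leq F_\sigma(x^\dagger).
\end{equation*}
Using $|(-\log p_\sigma)-(-\log p)|\leq C\sigma^2$ on the bounded set containing all $\xstar$ and $x^\dagger$, together with $F(\xstar)\geq F(x^\dagger)$, I get
\begin{equation*}
\frac{\lambda(\sigma^2)}{2}\|\xstar-u\|^2\leq \frac{\lambda(\sigma^2)}{2}\|x^\dagger-u\|^2+2C\tau\sigma^2 .
\end{equation*}
Dividing by $\lambda(\sigma^2)$ and using $\sigma^2/\lambda(\sigma^2)\to0$ yields $\limsup\|\xstar-u\|\leq\|x^\dagger-u\|$. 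By part (i), any cluster point $\hat x$ lies in $\argmin F$ and satisfies $\|\hat x-u\|\leq\|x^\dagger-u\|$. Uniqueness of the projection then forces $\hat x=x^\dagger$, and boundedness gives convergence of the whole family. This argument needs the $O(\sigma^2)$ uniform bound on compacts, so establishing that estimate is the key technical lemma.
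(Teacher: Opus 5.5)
Your architecture is the paper's. Boundedness comes from uniform coercivity of $-\log p_\sigma$, and part (i) from local uniform convergence. Part (ii) combines $F_\sigma(x^*_\sigma)\le F_\sigma(x^\dagger)$ with an $O(\sigma^2)$ comparison on a compact set, then divides by $\lambda(\sigma^2)$; that last step is essentially verbatim the paper's. The gap is that the comparison everything rests on is only proved in one direction. Jensen plus Taylor gives $\log p_\sigma(x)\ge\E[\log p(x+\sigma\varepsilon)]\ge\log p(x)+\frac{\sigma^2}{2}\Delta\log p(x)-\frac{M\sigma^3}{6}\E\|\varepsilon\|^3$. This uses only concavity of the scalar function $\log$, not log-concavity of $p$. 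Part (ii) also needs $\log p_\sigma-\log p\le C\sigma^2$ at the moving point $x^*_\sigma$, i.e.\ for the term $\log p_\sigma(x^*_\sigma)-\log p(x^*_\sigma)$. Your part (i) needs the same direction for the $\liminf$. For it you only offer ``Laplace-type behaviour'' or a gradient comparison you do not carry out. The paper gets both sides at once in \Cref{lemm:eqpsigp}, via a third-order Taylor expansion of $p$ itself. It controls $\nabla^3 p=p\cdot(\text{polynomial in the derivatives of }\log p)$ through the polynomial growth forced by the bounded third derivative and the boundedness of $p$. Inside your framework the missing idea is one line, and it is exactly where log-concavity enters. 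Concavity of $\log p$ gives $p(x+\sigma\varepsilon)\le p(x)\exp(\sigma\langle\nabla\log p(x),\varepsilon\rangle)$. The Gaussian moment generating function then yields $\log p_\sigma(x)-\log p(x)\le\frac{\sigma^2}{2}\|\nabla\log p(x)\|^2$, which is $O(\sigma^2)$ uniformly on compacts.

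The boundedness step has the same hole. Your first suggestion is a ``Jensen-type'' bound $-\log p_\sigma\ge-\log p-c\,\sigma(\|\nabla\log p\|+\dots)$. It cannot come from Jensen, which only bounds $-\log p_\sigma$ from above. A bound linear in $\sigma\|\nabla\log p(x)\|$ is also false globally: for $p=\mathcal{N}(0,\mathrm{Id})$ one has $\log p_\sigma(x)-\log p(x)=\frac{\sigma^2\|x\|^2}{2(1+\sigma^2)}-\frac{d}{2}\log(1+\sigma^2)$, while $\|\nabla\log p(x)\|=\|x\|$. Your second idea, tails, does work and is shorter than the paper's \Cref{prop:coercunif}. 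A convex coercive function grows at least linearly, so $p(z)\le e^{b-a\|z\|}$ with $a>0$. Hence $p_\sigma(x)\le e^{b-a\|x\|}\E[e^{a\sigma_{\max}\|\varepsilon\|}]$, so $-\log p_\sigma(x)\ge a\|x\|-b'$ uniformly for $\sigma\in[0,\sigma_{\max}]$. The paper instead splits the convolution integral into the region where $p\le\frac{1}{2}e^{-A}$ and a ball on which the Gaussian kernel is uniformly small; this gives uniformity over all $\sigma\ge 0$. Finally, part (i) needs no rate at all: $p$ is bounded and continuous, so dominated convergence gives $p_{\sigma_n}(x^*_{\sigma_n})\to p(\hat x)$ directly.
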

See \Cref{ssec:behavminim} for the proof.
The second statement is useful: if $\lambda(\sigma^2)$ vanishes more slowly than $\sigma^2$ as $\sigma\to0$, then the minimizers $\xstar$ converge to the minimizer of $F$ closest to $u$, i.e., the projection of $u$ onto $\argmin F$. Thus, we have not only convergence of the minimizers to a point of $\argmin F$, but also uniqueness of the limit, which can be determined via the choice of $u\in \R^d$.

We next establish a result controlling the regularity of the path $\sigma \mapsto \xstar$ (see \Cref{ssec:behavminim} for the proof).
\begin{restatable}[Regularity of the minimizers]{proposition}{reglips}\label{prop:reglips}
    Let $\sigmax > 0$. 
    There exist three constants $C_1, C_2, C_3>0$ depending on the problem parameters, such that for any $0 < \sigma_1 \leq \sigma_2 \leq \sigmax$,
    \begin{equation}
        \lnorm{\xstar[1] - \xstar[2]} \leq \left(C_1 + \dfrac{C_3}{\lambda(\sigma_1^2)}\right)(\sigma_2^2 - \sigma_1^2) + C_2\log\left(\dfrac{\lambda(\sigma_2^2)}{\lambda(\sigma_1^2)}\right)
    \end{equation}
\end{restatable}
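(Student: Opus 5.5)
We parametrize by $s=\sigma^2$ and write $x(s):=\xstar$ for $\sigma^2=s$. The plan is to bound $\|x'(s)\|$ and integrate from $s_1=\sigma_1^2$ to $s_2=\sigma_2^2$.

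\textbf{Implicit differentiation.} The first-order condition is
\[
\nabla f(x(s)) + \lambda(s)(x(s)-u) - \tau\nabla\log p_s(x(s)) = 0 .
\]
Differentiating in $s$, with the implicit function theorem already invoked in the paper, gives
\[
H(s)\,x'(s) = -\lambda'(s)(x(s)-u) + \tau\,\partial_s\nabla\log p_s(x(s)),
\qquad
H(s)=\nabla^2 f + \lambda(s)\Id - \tau\nabla^2\log p_s .
\]
By \Cref{prop:conditionning}, $H(s)\succeq\lambda(s)\Id$, so
\[
\|x'(s)\|\le \frac{|\lambda'(s)|}{\lambda(s)}\|x(s)-u\| + \frac{\tau}{\lambda(s)}\|\partial_s\nabla\log p_s(x(s))\| .
\]

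\textbf{First term.} By \Cref{prop:lim_minim}, $(x(s))_{s\in(0,\sigmax^2]}$ is bounded, so $\|x(s)-u\|\le R$ for some $R$. Assuming $\lambda$ is nondecreasing (implicit in the form of the statement, since $\log(\lambda(\sigma_2^2)/\lambda(\sigma_1^2))\geq 0$ requires it), integrating gives exactly $C_2\log(\lambda(s_2)/\lambda(s_1))$ with $C_2=R$.

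\textbf{Second term (main obstacle).} We need a uniform bound on $\partial_s\nabla\log p_s$. The heat equation $\partial_s p_s=\tfrac12\Delta p_s$ gives, with $\ell_s=\log p_s$,
\[
\partial_s \ell_s = \tfrac12\bigl(\Delta \ell_s + \|\nabla \ell_s\|^2\bigr),
\qquad
\partial_s\nabla \ell_s = \tfrac12\nabla\Delta \ell_s + \nabla^2\ell_s\,\nabla \ell_s .
\]
\begin{itemize}
\item The third-derivative term is handled by showing $\|\nabla^3\log p_s\|_F\le M$ uniformly in $s$, which is expected to follow from \Cref{ass:prior}. I would try first to write $\nabla\log p_s$ as a Gaussian-weighted average $\E[\nabla\log p(X)\mid X+\sigma\varepsilon=z]$ and differentiate it, using log-concavity (Brascamp--Lieb) to control the covariance terms. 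This gives $\|\nabla\Delta\ell_s\|\le\sqrt d\,M$.
\item The product term is bounded at $z=x(s)$. By the optimality condition, $\tau\nabla\ell_s(x(s))=\nabla f(x(s))+\lambda(s)(x(s)-u)$, which is bounded on the bounded set where $x(s)$ lives since $\nabla f$ is $L_f$-Lipschitz and $\lambda$ is bounded on $[0,\sigmax^2]$. For $\nabla^2\ell_s$ at that point, the third-derivative bound gives the Lipschitz estimate $\|\nabla^2\ell_s(z)\|\le\|\nabla^2\ell_s(z_0)\|+M\|z-z_0\|$. It therefore suffices to bound $\nabla^2\ell_s$ at one fixed point uniformly in $s$, for instance via continuity as $s\to0$ together with the same uniform third-derivative bound.
\end{itemize}
Together these yield $\|\partial_s\nabla\ell_s(x(s))\|\le K$ for a constant $K$.

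\textbf{Integration.} Since $\lambda$ is nondecreasing, $1/\lambda(s)\le 1/\lambda(s_1)$ on $[s_1,s_2]$, so the second term integrates to at most $\tfrac{\tau K}{\lambda(s_1)}(s_2-s_1)$. This gives $C_3=\tau K$. The constant $C_1$ absorbs any $s$-independent part; for instance, if one instead splits the product term using $\|\nabla^2\ell_s\|\le 1/s$-type bounds, or handles part of the bound without dividing by $\lambda$, the leftover $s$-independent contribution goes into $C_1$. The delicate step is the uniform-in-$s$ control of $\nabla^3\log p_s$ and of $\nabla^2\log p_s$ along the path. If needed, I would replace the bound on the product term by a bound on $\nabla\log p$ along the bounded set, using $\nabla\log p_s=\nabla\log p+O(s)$ from the same third-derivative estimates.
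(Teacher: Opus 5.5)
Your argument is correct in outline and follows the paper's skeleton: implicit differentiation in $s=\sigma^2$, the heat-equation identity for $\partial_{\sigma^2}\nabla\log p_\sigma$, the bound $\lnorm{\nabla\Delta\log p_\sigma}\le\sqrt d\,M$ imported from \citet{pesme2025map}, the optimality condition to bound $\nabla\log p_\sigma$ along the path, and integration of the $\lambda'/\lambda$ term into the logarithm. The difference is in how you handle the product term $\nabla^2\log p_\sigma\,\nabla\log p_\sigma$. You use only $\lnorm{H(s)^{-1}}\le 1/\lambda(s)$, so you need an extra lemma: a bound on $\lnorm{\nabla^2\log p_\sigma}$ along the minimizer path, uniform in $\sigma\in(0,\sigmax]$. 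The paper never needs such a bound. Set $N_\sigma=-\tau\nabla^2\log p_\sigma(x^*_\sigma)$ and $M_\sigma=\nabla^2 f(x^*_\sigma)+\lambda(\sigma^2)\Id$. The product term then appears as $[M_\sigma+N_\sigma]^{-1}N_\sigma\nabla\log p_\sigma(x^*_\sigma)$. Conjugating by $M_\sigma^{1/2}$ gives $\lnorm{[M_\sigma+N_\sigma]^{-1}N_\sigma}_{\mathrm{op}}\le\sqrt{1+L_f/\lambda(\sigma^2)}$, however large $N_\sigma$ is. This is where the paper's $C_1=T(\sigmax)$ and the $L_fT(\sigmax)/2$ part of $C_3$ come from. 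Your route gives $C_1=0$, which is harmless. It replaces that $L_f/\lambda$ contribution by one involving a uniform Hessian bound, at the cost of proving that bound.

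The lemma is true, but your justification of it (continuity of $\nabla^2\log p_\sigma(z_0)$ as $\sigma\to0$) is only asserted. Here is a quick way to close it.
\begin{itemize}
\item Since $-\log p_\sigma$ is convex, $\lnorm{\nabla^2\log p_\sigma(z)}_{\mathrm{op}}=\max_{\lnorm{e}=1}\bigl(-e^\top\nabla^2\log p_\sigma(z)e\bigr)$.
\item A third-order Taylor expansion of $t\mapsto\log p_\sigma(z+te)$, using $\lnorm{\nabla^3\log p_\sigma}_F\le M$, gives $\bigl|e^\top\nabla^2\log p_\sigma(z)e-\bigl(\log p_\sigma(z+e)-2\log p_\sigma(z)+\log p_\sigma(z-e)\bigr)\bigr|\le M/3$.
\item The second difference is bounded uniformly in $\sigma\in[0,\sigmax]$ and in $z$ in the ball containing the path. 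Indeed, \Cref{lemm:eqpsigp}, together with continuity of $\log p$, $\nabla\log p$ and $\Delta\log p$, makes $\log p_\sigma$ bounded on $[0,\sigmax]\times K$ for any compact $K$. Take $K$ to be that ball enlarged by $1$.
\end{itemize}

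Your two fallback remarks, by contrast, would not work. The a priori bound $\lnorm{\nabla^2\log p_\sigma}\le 1/\sigma^2$ from \Cref{prop:conditionning} leads to a term of order $\frac{1}{\lambda(\sigma_1^2)}\log(\sigma_2^2/\sigma_1^2)$. This is not of the stated form, since it blows up as $\sigma_1\to0$ with $\sigma_2$ fixed. Refining $\nabla\log p_\sigma$ via $\nabla\log p_\sigma=\nabla\log p+O(\sigma^2)$ does nothing for the Hessian factor, which is the real difficulty.
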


\subsection{Convergence of the method}\label{ssec:cv}

We now state our main theoretical result: the convergence of \genmmse{} and the different mode of convergence we can obtain according to the schedules chosen.

\begin{restatable}[Convergence of \genmmse{}]{theorem}{iteratescv}\label{thm:iterates_cv}
    Let $(\sigma_k)_k\in \R_+^\N$ be a sequence strictly decreasing to $0$ and, for any $k\geq 0$, let $\lambda_k:=\lambda(\sigma_k^2) \in (0,1)$ and $\alpha_k := \frac{\sigma_k^2}{\sigma_k^2 + \tau}$.
    Assume moreover that: 
    \begin{enumerate}[label=(\roman*)]
        \item $\ds\sum_{k\geq 0} \sigma_k^2\lambda_k = +\infty$;
        \item $\frac{\sigma_k^2-\sigma_{k + 1}^2}{\sigma_{k + 1}^2\lambda_{k+1}^2} \longrightarrow 0$;
        \item $\frac{\log \lambda_k - \log \lambda_{k+1}}{\sigma_{k + 1}^2\lambda_{k+1}}\longrightarrow 0$;
    \end{enumerate}
    Under \Cref{ass:datafit} and \Cref{ass:prior}, the sequence of iterates $(x_k)_{k\geq 0}$ defined in \eqref{eq:genmmse} from any starting point $x_0\in\R^d$, is bounded and is such that each cluster point of $(x_k)_k$ is a minimizer of $F$. In addition, if $\frac{\sigma^2_k}{\lambda_k} \underset{k\to +\infty}{\longrightarrow} 0$, then the sequence $(x_k)_k$ converges to 
    \begin{equation}
        x^\dagger= \uniqueargmin \{\lnorm{z-u}\,\vert\, z\in \argmin F\}= \proj_{\argmin F}(u),
    \end{equation} the nearest minimizer of $F$ to $u$.
\end{restatable}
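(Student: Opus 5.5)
Since $\alpha_k=\frac{\sigma_k^2}{\sigma_k^2+\tau}$ gives $\frac{1-\alpha_k}{\alpha_k}\sigma_k^2=\tau$, the smoothed-gradient proposition lets us write the iteration as $x_{k+1}=x_k-\alpha_k\nabla F_{\sigma_k}(x_k)$. The plan is to track the distance $e_k:=\lnorm{x_k-\xstar[k]}$ to the moving minimizer of $F_{\sigma_k}$. We will show $e_k\to 0$ and then conclude with \Cref{prop:lim_minim}: the path $\xstar$ is bounded and its cluster points are minimizers of $F$, so the same holds for $(x_k)_k$. Under the extra condition $\sigma_k^2/\lambda_k\to 0$, $\xstar[k]\to x^\dagger$, hence $x_k\to x^\dagger$.

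\textbf{One-step contraction.} By \Cref{prop:conditionning}, $F_{\sigma_k}$ is $\mu_k$-strongly convex and $L_k$-smooth with $\mu_k=\lambda_k$ and $L_k=L_f+\tau/\sigma_k^2+\lambda_k$. The step $\alpha_k=\sigma_k^2/(\sigma_k^2+\tau)$ satisfies $\alpha_k L_k=(\sigma_k^2L_f+\tau+\sigma_k^2\lambda_k)/(\sigma_k^2+\tau)$. For $k$ large (so that $\sigma_k^2(L_f+\lambda_k)\le \sigma_k^2+2\tau$, using $\lambda_k<1$) this gives $\alpha_k\le 2/L_k$. The standard gradient-step estimate for strongly convex smooth functions then yields
\begin{equation*}
\lnorm{x_{k+1}-\xstar[k]}\le (1-\alpha_k\mu_k)\,e_k,\qquad \text{or at least }\le (1-c\,\alpha_k\lambda_k)\,e_k.
\end{equation*}
Since $\alpha_k\ge \sigma_k^2/(\sigmax^2+\tau)$, we have $\alpha_k\lambda_k\ge c'\sigma_k^2\lambda_k$. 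The triangle inequality then gives
\begin{equation*}
e_{k+1}\le (1-c'\sigma_k^2\lambda_k)\,e_k+\delta_k,\qquad \delta_k:=\lnorm{\xstar[k+1]-\xstar[k]}.
\end{equation*}

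\textbf{Drift bound and Robbins--Siegmund/Polyak-type lemma.} Apply \Cref{prop:reglips} with $\sigma_1=\sigma_{k+1}$ and $\sigma_2=\sigma_k$:
\begin{equation*}
\delta_k\le \Big(C_1+\tfrac{C_3}{\lambda_{k+1}}\Big)(\sigma_k^2-\sigma_{k+1}^2)+C_2(\log\lambda_k-\log\lambda_{k+1}).
\end{equation*}
Set $\gamma_k:=c'\sigma_{k+1}^2\lambda_{k+1}$; we may use index $k+1$ in the contraction up to shifting, or note that $\sigma_k^2\lambda_k\ge\sigma_{k+1}^2\lambda_{k+1}$ when $\lambda$ is nondecreasing. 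Dividing by $\gamma_k$:
\begin{itemize}
\item the $C_3$ term gives $(\sigma_k^2-\sigma_{k+1}^2)/(\sigma_{k+1}^2\lambda_{k+1}^2)\to 0$ by (ii);
\item the $C_1$ term is smaller, since $\lambda_{k+1}<1$;
\item the log term tends to $0$ by (iii).
\end{itemize}
Hence $\delta_k=o(\gamma_k)$. Condition (i) gives $\sum\gamma_k=\infty$, and $\gamma_k\to0$. A classical lemma on recursions $e_{k+1}\le(1-\gamma_k)e_k+o(\gamma_k)$ with $\sum\gamma_k=\infty$ then yields $e_k\to0$. To prove it, fix $\epsilon>0$; for large $k$, $e_{k+1}-\epsilon\le(1-\gamma_k)(e_k-\epsilon)$ whenever $e_k\ge\epsilon$, and the product $\prod(1-\gamma_k)$ diverges to $0$. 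Boundedness of $(x_k)$ follows from $e_k\to0$ and the boundedness of $(\xstar)$.

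\textbf{Main obstacle.} The delicate step is the contraction factor. We need $\alpha_k\le 2/L_k$ uniformly, and we must make sure the contraction really scales like $\sigma_k^2\lambda_k$ and is not degraded by the huge smoothness constant $\tau/\sigma_k^2$. The check above shows $\alpha_kL_k\to 1$, so $1-\alpha_k\mu_k$ is the correct rate eventually. I would handle the finitely many early iterations separately: there the steps are merely bounded, so the $e_k$ stay finite. A secondary technicality is the index mismatch between $\lambda_k$ and $\lambda_{k+1}$ in the conditions, which I would absorb via the monotonicity argument or by comparing consecutive terms.
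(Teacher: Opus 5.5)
Your proposal is correct and follows essentially the paper's argument. Both run the eventual gradient-step contraction $1-\alpha_k\lambda_k$ on $F_{\sigma_k}$ (valid because $\alpha_kL_{\sigma_k}\to1$), bound the drift of minimizers with \Cref{prop:reglips}, apply the Xu-type sequence lemma with $\gamma_k\asymp\sigma_k^2\lambda_k$, and conclude with \Cref{prop:lim_minim}. The only cosmetic difference is that the paper tracks the lagged error $\lnorm{x_{k+1}-\xstar[k]}$, which matches the indices in (ii)--(iii) directly, whereas your $\lnorm{x_k-\xstar[k]}$ needs the monotonicity of $\lambda$ (assumed in the appendix) to realign them.
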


\begin{remark}\label{rem:conv_schedules} A generic family of sequences $(\alpha_k)_k, (\sigma_k)_k$ and $(\lambda_k)_k$ satisfying the requirements of \Cref{thm:iterates_cv} is not difficult to exhibit. For example, the following choices match all conditions:
        \begin{equation*}
        \lambda_k = \dfrac{\lambda_0}{(k + 1)^\beta}, \quad \sigma_k^2 = \dfrac{\sigma_0^2}{(k + 1)^{\gamma}},\quad \alpha_k=\dfrac{\sigma_k^2}{\sigma_k^2 + \tau},
        \end{equation*}
         where $0<\beta<\gamma$, $0<\gamma + \beta < 1$, $\sigma_0 > 0$ and $\lambda_0\in(0, 1)$.
\end{remark}

\paragraph{Sketch of proof.}
The full proof is in \Cref{app:pf_gamma}.
It relies on first showing that $\lnorm{x_{k+1} - \xstar[k]}$ goes to 0. 
This is achieved by upper bounding it by a function of $\lnorm{x_{k} - \xstar[k]}$ (using a bound of the form $\lnorm{x_{k+1} - \xstar[k]} < \lnorm{x_{k} - \xstar[k]}$ given by the descent lemma, and the bound on $\lnorm{\xstar[k] - \xstar[k-1]}$ given by \Cref{prop:reglips}).
We then show convergence to 0 by applying a classical lemma on real sequences satisfying a perturbed contraction inequality.
This first result allows proving that the cluster points of $(x_k)_k$ and those of $(x_{\sigma_k})_k$ are the same; applying \Cref{prop:lim_minim} then shows that cluster points of $(x_k)_k$ are minimizers of $F$.

We prove a similar result for the variant \eqref{eq:renoised_genmmse}, Renoised-\genmmse{}. Under the same constraints on the schedules, we show convergence to the MAP estimate.
\begin{restatable}[Convergence of Renoised-\genmmse{}]{theorem}{renoisediteratescv}\label{thm:renoised_iterates_cv}
      Let $(\sigma_k)_k\in \R_+^\N$ be a sequence strictly decreasing to $0$ and, for any $k\geq 0$, let $\lambda_k:=\lambda(\sigma_k^2)\in (0,1)$ and $\alpha_k := \frac{\sigma_k^2}{\sigma_k^2 + \tau}$. Under constraints (i), (ii), (iii) of \Cref{thm:iterates_cv} and $\frac{\sigma_k^2}{\lambda_k}\to 0$, the iterates $(\tx_k)$ from \eqref{eq:renoised_genmmse} converges to $x^\dagger$, i.e,
    \begin{equation}
        \lnorm{\tilde x_k - x^\dagger}\underset{k\to+\infty}{\longrightarrow} 0\,.
    \end{equation}
\end{restatable}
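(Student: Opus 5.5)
The plan is to run the same argument as for \Cref{thm:iterates_cv}, but with $F_{\sigma}$ replaced by the renoised objective $G_\sigma(x) = f_\sigma(x) - \tau\E_\varepsilon[\log p_\sigma(x+\sigma\varepsilon)]$. By the Proposition on smoothed gradients, each Renoised-\genmmse{} step is a gradient step $\tx_{k+1} = \tx_k - \alpha_k\nabla G_{\sigma_k}(\tx_k)$.

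\textbf{Step 1: conditioning of $G_\sigma$.} The map $x\mapsto -\log p_\sigma(x+\sigma\varepsilon)$ is convex and $1/\sigma^2$-smooth, uniformly in $\varepsilon$, by the argument behind \Cref{prop:conditionning}. Averaging over $\varepsilon$ preserves both properties. Hence $G_\sigma$ is $\lambda(\sigma^2)$-strongly convex and $(L_f+\tau/\sigma^2+\lambda(\sigma^2))$-smooth, with the same constants as $F_\sigma$. Its unique minimizer $\tilde x^*_\sigma$ is therefore well defined. Since $\alpha_k = \sigma_k^2/(\sigma_k^2+\tau)$ satisfies $\alpha_k L_{\sigma_k}\le 1$ up to harmless constants, the one-step contraction
\begin{equation*}
\lnorm{\tx_{k+1}-\tilde x^*_{\sigma_k}}\le (1-c\,\alpha_k\lambda_k)\lnorm{\tx_k-\tilde x^*_{\sigma_k}}
\end{equation*}
holds exactly as in the proof of \Cref{thm:iterates_cv}.

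\textbf{Step 2: the minimizer path of $G_\sigma$.} We need the analogues of \Cref{prop:lim_minim}(ii) and \Cref{prop:reglips} for $\tilde x^*_\sigma$.

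For the limit, write $\E[\log p_\sigma(x+\sigma\varepsilon)] = (p_\sigma * \text{Gaussian}_\sigma)$-type smoothing. Using the bounded third derivative of $\log p$ (\Cref{ass:prior}), expand $\log p_\sigma(z) = \log p(z) + \frac{\sigma^2}{2}(\Delta\log p + \lnorm{\nabla \log p}^2)(z) + O(\sigma^4)$ locally. Expanding the expectation over $\varepsilon$ in the same way gives $G_\sigma(x) = F(x) + \frac{\lambda(\sigma^2)}{2}\lnorm{x-u}^2 + O(\sigma^2)$ uniformly on bounded sets. The same argument as in \Cref{prop:lim_minim} then applies. Comparing $G_\sigma(\tilde x^*_\sigma)\le G_\sigma(x^\dagger)$ and dividing by $\lambda(\sigma^2)$, the condition $\sigma^2/\lambda(\sigma^2)\to0$ forces every cluster point to be $x^\dagger$. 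Boundedness comes from coercivity of $F$ (\Cref{lemm:coerc}).

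For the path regularity, differentiate the optimality condition $\nabla G_\sigma(\tilde x^*_\sigma)=0$ in $s=\sigma^2$ via the implicit function theorem:
\begin{equation*}
\frac{d}{ds}\tilde x^*_\sigma = -\left(\nabla^2G_\sigma\right)^{-1}\partial_s\nabla G_\sigma(\tilde x^*_\sigma).
\end{equation*}
The inverse Hessian is bounded by $1/\lambda(s)$. The term $\partial_s\nabla G_\sigma$ splits into two parts. The first is $\lambda'(s)(x-u)$, which integrates to the $C_2\log(\lambda(\sigma_2^2)/\lambda(\sigma_1^2))$ term. The second is the derivative of the smoothed log-prior gradient, which is bounded using $M$ and the boundedness of the path. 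This yields the same bound as \Cref{prop:reglips}, possibly with different constants $C_1,C_2,C_3$.

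\textbf{Step 3: conclusion.} Set $e_k=\lnorm{\tx_k-\tilde x^*_{\sigma_{k-1}}}$. The triangle inequality gives
\begin{equation*}
e_{k+1}\le(1-c\alpha_k\lambda_k)e_k+(1-c\alpha_k\lambda_k)\lnorm{\tilde x^*_{\sigma_k}-\tilde x^*_{\sigma_{k-1}}}.
\end{equation*}
Since $\alpha_k\asymp\sigma_k^2/\tau$, condition (i) makes $\sum\alpha_k\lambda_k=\infty$. Conditions (ii) and (iii) make the perturbation $o(\alpha_k\lambda_k)$. The classical lemma on perturbed contractions then gives $e_k\to0$. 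Combined with $\tilde x^*_{\sigma_k}\to x^\dagger$, this yields $\tx_k\to x^\dagger$.

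\textbf{Main obstacle.} I expect Step 2 to be the hard part, namely controlling $\partial_s\nabla G_\sigma$ uniformly. There $\sigma$ enters both through $p_\sigma$ and through the renoising scale. I would first try to show $\E_\varepsilon[\log p_\sigma(x+\sigma\varepsilon)]$ equals $\log p$ convolved with a Gaussian-type object of variance $2\sigma^2$ up to controlled errors, and bound its $s$-derivative by a heat-equation identity, $\partial_s(\cdot)=\frac12\Delta(\cdot)+$(gradient-squared term), whose gradient is controlled by $M$.
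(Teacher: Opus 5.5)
Your route is genuinely different from the paper's. The paper never introduces $\argmin G_\sigma$. It keeps the anchor $x^*_{\sigma_k}=\argmin F_{\sigma_k}$ and treats renoising as a bias. Two facts make this work: $\nabla F_{\sigma_k}(x^*_{\sigma_k})=0$, and the first-order term in the Taylor expansion of $\nabla\log p_{\sigma_k}(x^*_{\sigma_k}+\sigma_k\varepsilon)$ has zero mean. Together with the global third-derivative bound, they give $\lnorm{\nabla G_{\sigma_k}(x^*_{\sigma_k})}^2\le B\sigma_k^4$. A squared-distance recursion (co-coercivity plus Young) then yields $\lnorm{\tx_{k+1}-x^*_{\sigma_k}}^2\le(1-h_k)r_k^2+5\alpha_kB\sigma_k^4/\lambda_k$, with $r_k=\lnorm{\tx_k-x^*_{\sigma_k}}$ and $h_k\sim\alpha_k\lambda_k$. \Cref{prop:lim_minim} and \Cref{prop:reglips} are then reused verbatim. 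The cost is that $\sigma_k^2/\lambda_k\to0$ is needed already to kill the bias, whose normalized size is $\sim10B\sigma_k^4/\lambda_k^2$, and not only to identify the limit. Your route has an exact contraction with no bias, and if completed it would in principle give the cluster-point statement even without that assumption. The price is that you must redo both propositions for the new path $\tilde x^*_\sigma$.

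As written, that redo has a gap exactly where you anticipated it.
\begin{itemize}
\item \textbf{The false claim.} By the heat equation, $\partial_s\nabla\log p_\sigma=\frac12\nabla\Delta\log p_\sigma+\nabla^2\log p_\sigma\nabla\log p_\sigma$. The second term is \emph{not} controlled by $M$, which only bounds third derivatives. So your claim that the gradient of the gradient-squared term is controlled by $M$ is false.
\item \textbf{Why it matters.} A priori you only have $\lnorm{\nabla^2\log p_\sigma}_{\mathrm{op}}\le1/\sigma^2$. The resulting extra factor $1/\sigma_k^2$ is not absorbed by (ii)--(iii) in general. For the schedule of the remark, it would require $\gamma+2\beta<1$ rather than $\gamma+\beta<1$.
\item \textbf{Why the paper's trick does not transfer.} In \Cref{prop:reglips} this term is neutralized structurally. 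It is paired with the inverse Hessian through $\lnorm{[M_\sigma+N_\sigma]^{-1}N_\sigma}_{\mathrm{op}}\le1+L_f/(2\lambda(\sigma^2))$, and $\nabla\log p_\sigma(x^*_\sigma)$ is replaced via the optimality condition. For $G_\sigma$ the term becomes $\E_\varepsilon[\nabla^2\log p_\sigma\nabla\log p_\sigma](\tilde x^*_\sigma+\sigma\varepsilon)$, an average of products at noisy points, so neither trick applies.
\item \textbf{What is harmless.} The dependence through the renoising scale causes no trouble. Gaussian integration by parts turns it into $\frac12\E[\nabla\Delta\log p_\sigma(\cdot+\sigma\varepsilon)]$, which is bounded by $M\sqrt d/2$.
\item \textbf{How to close the gap.}
  \begin{enumerate}
  \item Get locally uniform bounds $\sup_{\sigma\le\sigmax,\lnorm{y}\le R}\bigl(\lnorm{\nabla\log p_\sigma(y)}+\lnorm{\nabla^2\log p_\sigma(y)}\bigr)<\infty$. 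These follow from continuity of $(\sigma,y)\mapsto\nabla^j\log p_\sigma(y)$ up to $\sigma=0$, by dominated convergence with the growth bounds in the proof of \Cref{lemm:eqpsigp}.
  \item Use the $M$-Lipschitz continuity of $\nabla^2\log p_\sigma$ to get polynomial growth.
  \item Use Gaussian moments to control the tails.
  \end{enumerate}
  This gives $\lnorm{\tfrac{d}{ds}\tilde x^*_\sigma}\le(C+\lambda'(s)r)/\lambda(s)$, which integrates to the form of \Cref{prop:reglips}.
\end{itemize}

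Two smaller points.
\begin{itemize}
\item Boundedness of $\tilde x^*_\sigma$ does not follow from coercivity of $F$ alone. You need uniform coercivity of $G_\sigma$, for example via Jensen, $\E_\varepsilon[\log p_\sigma(x+\sigma\varepsilon)]\le\log p_{\sqrt2\sigma}(x)$, combined with \Cref{prop:coercunif}.
\item Your value-level expansion of $\E_\varepsilon[\log p_\sigma(x+\sigma\varepsilon)]$ over unbounded $\varepsilon$ needs the same tail control.
\end{itemize}
Under $\sigma^2/\lambda(\sigma^2)\to0$, both points are immediate from the paper's bias bound. Strong convexity of $G_\sigma$ gives $\lnorm{\tilde x^*_\sigma-x^*_\sigma}\le\lnorm{\nabla G_\sigma(x^*_\sigma)}/\lambda(\sigma^2)\le C\sigma^2/\lambda(\sigma^2)\to0$, which also gives $\tilde x^*_\sigma\to x^\dagger$ directly.
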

\paragraph{Sketch of proof.}
The full proof is given in \Cref{app:pf_renoised}. It again shows that $\lnorm{x_{k+1}-\xstar[k]}\to0$. Since $\xstar[k]$ does not minimize $G_{\sigma_k}$, the descent lemma cannot be applied directly. We instead combine several inequalities to control this distance and establish a contraction between $\lnorm{x_{k+1}-\xstar[k]}$ and $\lnorm{x_k-\xstar[k]}$. The remainder follows the same argument as above.

\section{Numerical Results}
\label{sec:experiments}

\subsection{Convergence Speed on a Gaussian Mixture Prior}

To better assess the computational advantage of \genmmse{} over Approx-PGD, we consider a synthetic problem in which the prior $p$ is an eight-component Gaussian mixture in dimension $d=100$, with means $\mu_i\sim\NN{}$ for $i=1,\ldots,8$. We consider a random inpainting problem, where the degradation operator $\A$ masks $80\%$ of the vector entries. The corresponding objective is $F(x) = \frac{1}{2}\lnorm{\A(x) - y}^2 - \tau\log p(x)$ with $\tau = (0.3)^2$.
Since $F$ can be evaluated explicitly in this setting, we compare the convergence of both methods as a function of computation time. The results are shown in \Cref{fig:toy_example}.

\begin{figure}[t]
    \centering
    \includegraphics[width=0.7\textwidth]{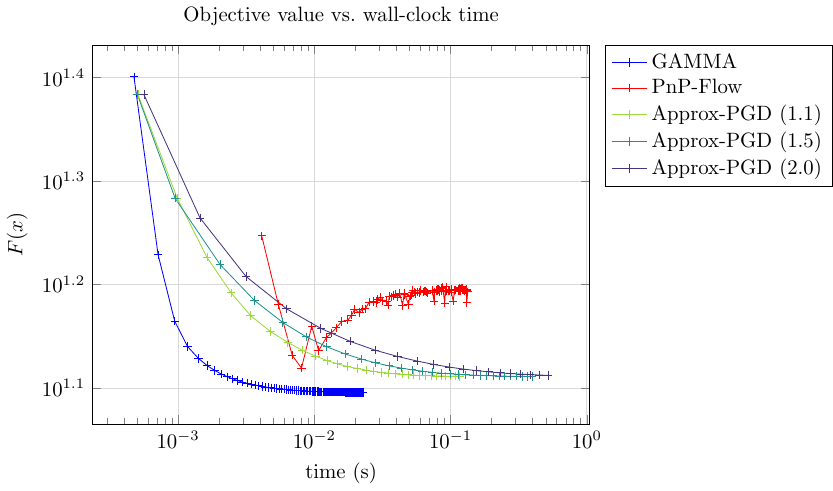}
    \caption{Evolution of the objective value across time of \genmmse{}, PnP-Flow and Approx-PGD $(1+\eta)$ (with $\eta$ controlling the number of sub-iterations). \textbf{PnP-Flow diverges. GAMMA converges faster than Approx-PGD.}
    }
    \label{fig:toy_example}
\end{figure}

\subsection{Benchmarks}

We compare our GAMMA algorithm with two complementary approaches: \textbf{PnP-Flow} \citep{martin2025pnp}, which provides a state-of-the-art reference for image restoration methods with annealed denoisers, and
\textbf{Approx-PGD} \citep{pesme2025map} for the theoretically grounded baseline solving the MAP problem.
These experiments assess whether GAMMA can combine strong empirical performance when used with the identified schedules guaranteeing convergence toward the MAP estimator.
We evaluate three variants of our algorithm:
\begin{itemize}[leftmargin=*,topsep=-2pt]
    \item \textbf{GAMMA} The deterministic scheme of \Cref{eq:genmmse}, combined with the noise schedule identified in \Cref{rem:conv_schedules}, corresponding to the regime for which convergence is theoretically proven.
    \item \textbf{Renoised-GAMMA} The renoised scheme of \Cref{eq:renoised_genmmse}, using the same schedule as above, i.e. also within the theoretically proven regime.
    \item \textbf{Relaxed GAMMA} A variant in which we adopt the noise schedule used by PnP-Flow, corresponding to uniform time sampling in their formulation, i.e. $\sigma_k = \frac{N-k}{N}$ where $N$ is a fixed number of iterations. 
    Note that, for the other sequences $(\lambda_k)$ and $(\alpha_k)$, we remain within the principled scheduling framework of \Cref{rem:conv_schedules}, i.e. $\lambda_k = \lambda_0 \sigma_k$ and $\alpha_k = \sigma_k^2 / (\sigma_k^2+ \tau)$. 
\end{itemize}

We report quantitative results on CelebA-128 \citep{celeba} in \Cref{tab:benchmark_results_celeba}, and on AFHQ-Cat-256~\citep{choi2020starganv2} in \Cref{tab:benchmark_results_afhq} (\Cref{app:expes}, using PSNR, SSIM, and LPIPS (\citealp{zhang2018perceptual}) as evaluation metrics. 
We also provide qualitative comparisons in \Cref{fig:afhq_images}, showing reconstructions obtained by the different methods across several inverse problems.
Training details for the backbone network and hyperparameters are detailed in \Cref{app:expes}.

\paragraph{Discussion}
Approx-PGD, as noted in \citet{vert2026beyond}, tends to produce cartoon-like images, an effect mitigated by early stopping. Despite tuning the number of steps, it does not perform satisfactorily beyond denoising. This is consistent with the original paper \citep{pesme2025map}, which reports no numerical results.
\textbf{GAMMA}, in its noiseless version, performs well on deblurring and random inpainting, substantially improving over Approx-PGD. Yet, on more generative tasks (super-resolution and mask inpainting), it shows a similar collapse toward cartoon-like images, with visible artefacts. 
We attribute it to the absence of early stopping, as we use the same fixed iteration budget for the noiseless and renoised variants.
\textbf{Renoised-GAMMA} is much closer to the PnP-Flow baseline across all tasks, including generative ones: the qualitative results are convincing and LPIPS matches that of PnP-Flow. It remains below PnP-Flow in PSNR, which we potentially attribute to the annealed noise schedule $\sigma_k$ not reaching the observation noise level $\sigma_y=0.05$ within the prescribed iteration budget, so the observation noise is not fully removed.
Finally, the \textbf{relaxed GAMMA} variant shows that, by forcing the noise schedule to reach $0$ within a prescribed number of steps, our algorithm matches the performance of PnP-Flow. 
Overall, these results highlight the versatility of our approach and the importance of carefully designed noise schedules. This contrasts with other renoised annealed RED approaches such as SNORE \citep{renaud2024plug}: although SNORE admits convergence guarantees in the fixed-noise setting, its annealed variant provides neither comparable guarantees nor principled guidance for choosing the noise schedule, and reports no results on challenging generative inverse problems such as super-resolution or box inpainting.
Our experiments also highlight the importance of renoising in generative inverse problems. Our theory incorporates this step, ubiquitous in generative-based methods, while remaining within the MAP framework.

\begin{table}[tbh]
\caption{Comparisons of methods on different inverse problems on the CelebA dataset. Results are averaged across 100 test images. Higher PSNR / SSIM is better, lower LPIPS is better.}
\label{tab:benchmark_results_celeba}
\centering
\setlength{\tabcolsep}{2.8pt}
\renewcommand{\arraystretch}{1.25}
\resizebox{1.0\hsize}{!}{
\begin{tabular}{l c|ccc|ccc|ccc|ccc|ccc}
\toprule
\multirow{3}{*}{Method}
& \multirow{3}{*}{{\small Theory}}
& \multicolumn{3}{c|}{\textcolor{blue}{Denoising}}
& \multicolumn{3}{c|}{\textcolor{blue}{Deblurring}}
& \multicolumn{3}{c|}{\textcolor{blue}{Super-res.}}
& \multicolumn{3}{c|}{\textcolor{blue}{Rand. inpaint.}}
& \multicolumn{3}{c}{\textcolor{blue}{Box inpaint.}} \\

& & \multicolumn{3}{c|}{\textcolor{blue}{\small $\sigma=0.2$}}
& \multicolumn{3}{c|}{\textcolor{blue}{\small $\sigma=0.05$, $\sigma_{\mathrm b}=3.0$}}
& \multicolumn{3}{c|}{\textcolor{blue}{\small $\sigma=0.05$, $\times4$}}
& \multicolumn{3}{c|}{\textcolor{blue}{\small $\sigma=0.01$, $70\%$}}
& \multicolumn{3}{c}{\textcolor{blue}{\small $\sigma=0.05$, $80\times80$}} \\

\cmidrule(lr){3-5}
\cmidrule(lr){6-8}
\cmidrule(lr){9-11}
\cmidrule(lr){12-14}
\cmidrule(lr){15-17}

& & \small PSNR & \small SSIM & \small LPIPS
& \small PSNR & \small SSIM & \small LPIPS
& \small PSNR & \small SSIM & \small LPIPS
& \small PSNR & \small SSIM & \small LPIPS
& \small PSNR & \small SSIM & \small LPIPS \\

\midrule

\rowcolor{gray!10}
Degraded
& &
20.00 & 0.683 & 0.148 &
27.78& 0.736& 0.129&
10.25& 0.174& 0.819&
11.95& 0.188& 1.032&
22.26 & 0.740& 0.212\\

\hdashline

\shortstack[l]{PnP-Flow}
& No&
\textbf{32.74} & \underline{0.915} & 0.056 &
\underline{34.85} & \underline{0.940} & 0.047 &
\underline{32.05} & \underline{0.913} & 0.056&
\textbf{34.86} & \textbf{0.962} & \textbf{0.018} &
\textbf{32.02} & \underline{0.946} & \underline{0.040} \\

\shortstack[l]{Approx-PGD}
& Yes &
26.98 & 0.718 & 0.083 &
23.28 & 0.691 & 0.170 &
15.86 & 0.369 & 0.370 &
16.21 & 0.391 & 0.355 &
6.460 & 0.501 & 0.498 \\

\hdashline

\rowcolor{green!5!violet!10}
\shortstack[l]{GAMMA\\(noiseless)}
& Yes &
30.19 & 0.834 & 0.052 &
26.80 & 0.638 & 0.283 &
18.78 & 0.524 & 0.205 &
19.95 & 0.657 & 0.297 &
24.02 & 0.747 & 0.205  \\

\rowcolor{green!5!violet!10}
\shortstack[l]{GAMMA\\(noise)}
& Yes&
32.57 & 0.905 & \underline{0.049} &
33.59& 0.898& \underline{0.045}& 
31.59& 0.890&  \underline{0.052}& 
32.45& 0.920 & 0.040&
\underline{31.13}& 0.897 & 0.032\\
\rowcolor{green!5!violet!10}
\shortstack[l]{GAMMA\\(relaxed)}
& No &
\underline{32.68} & \textbf{0.919} & \textbf{0.036} &
\textbf{35.27} & \textbf{0.950} & \textbf{0.025} &
\textbf{32.50} & \textbf{0.918} & \textbf{0.049} &
\underline{34.21} & \underline{0.954} & \underline{0.021} &
30.20 & \textbf{0.948} & \textbf{0.026} \\
\bottomrule
\end{tabular}
}
\end{table}

\vspace{-5mm}
\begin{figure}[h!]
    \centering
    \begin{adjustbox}{max width=1.0\textwidth}
    \begin{tabular}{cccccccc}

    Clean &
    Degraded &
    \textbf{PnP-Flow} &
    \shortstack{\small \textbf{Approx} \\ \small \textbf{PGD}}  &
    \shortstack{\small \textbf{GAMMA} \\ \small (noiseless)}  &
    \shortstack{\small \textbf{GAMMA} \\ \small (noise)} &
    \shortstack{\small \textbf{GAMMA} \\ \small (relax.)}
    \\

    \includeproblemimagesafhq{gaussian_deblurring_FFT}{1}{1}{26.28}{30.49}{25.42}{27.70}{30.13}{30.48}\\
    \includeproblemimagesafhq{random_inpainting}{1}{2}{10.41}{36.59}{17.70}{21.42}{34.12}{36.12} \\
    \includeproblemimagesafhq{inpainting}{1}{3}{21.04}{25.83}{21.16}{20.76}{24.42}{25.72}

    \end{tabular}
    \end{adjustbox}
    \caption{Qualitative results on AFHQ-256. The full images table is in the Appendix, \Cref{fig:afhq_images_full}. }
    \label{fig:afhq_images}
\end{figure}

\section{Conclusion}
We introduced \genmmse{}, a convergent annealed RED-type method, with an optional renoising mechanism, whose noise and regularization schedules are explicitly designed to guarantee convergence to a MAP solution. To the best of our knowledge, this is the first approach to bridge the gap between provably convergent denoiser-based MAP methods, such as \citet{pesme2025map}, and recent generative restoration methods, which typically rely on annealing and renoising without comparable guarantees.
Our experiments show that this gap can be substantially narrowed with the renoised variant, achieving competitive reconstruction quality compared to recent non-convergent generative methods. When the constraints imposed by the theory are relaxed, the method reaches comparable performance, suggesting that the remaining gap is mainly due to the constraints on the current schedules.
Future work will focus on relaxing the log-concavity assumption and extending the convergence analysis of the stochastic renoised scheme to less restrictive noise schedules.

\subsubsection*{Acknowledgments}

This work was granted access to the HPC resources of IDRIS under the allocation 2026AD011017676, \newline 2026-AD011017183, and 2026-AD010616781 made by GENCI.
We gratefully acknowledge the support of the Centre Blaise Pascal’s IT test platform at
ENS de Lyon (Lyon, France) for providing machine learning computing facilities. The platform
operates the SIDUS solution developed by Emmanuel Quemener \citep{quemener2013use}. Ségolène Martin's research benefited from the financial support of GdR IASIS (project: EDOPNP).

\bibliography{refs}
\bibliographystyle{plainnat}

\newpage

\appendix
\crefalias{section}{appendix}
\crefalias{subsection}{appendix}

\subsubsection*{AI use statement}
Generative AI tools were used to assist with writing, language polishing, literature retrieval, and proofreading. The bibliography itself was assembled and checked manually by the authors. AI tools were also used as an additional check of the mathematical derivations and helped identify a few minor errors that did not affect the main results. All proofs were developed independently by the authors, except for \Cref{prop:reglips} for which AI assistance was used both to help formalize the relevant property and during the derivation of its proof. This result was subsequently checked and validated by the authors. The authors take full responsibility for the content of the paper.

\section{Results on the minimizer path \texorpdfstring{$\sigma^2\mapsto\xstar$  (\cref{ssec:minimizerpath})}{}}
In this section, we prove \Cref{lemm:coerc,prop:lim_minim,prop:reglips}. 

\paragraph{Notation}
\begin{itemize}
    \item $\lnorm{\cdot}$ denotes the standard Euclidean norm on $\R^d$,
    \item $\R_+$ denotes $[0,+\infty)$,
    \item for $a\in\R^d$ and $r\geq 0$, $\mathcal{B}\left(a,r\right)$ denotes the closed ball centered in $a$ with radius $r$ (for the standard Euclidean norm),
    \item $\nabla g =\left(\partial_i g\right)_{1\leq i\leq d} \in \R^d$ denotes the gradient of $g:\R^d\to \R$,
    \item $\nabla^2 g =\left(\partial_{i,j} g\right)_{1\leq i,j\leq d} \in \R^{d\times d}$ denotes the Hessian of $g:\R^d\to \R$,
    \item $\nabla^3 g = \left(\partial_{i,j,l} g\right)_{1\leq i,j,l\leq d}\in \R^{d\times d\times d}$ denotes the third derivative tensor of $g:\R^d \to \R$,
    \item $\Delta g = \tr\left(\nabla^2 g\right)=\sum_{i=1}^d\partial_{i,i} g$ denotes the Laplacian of $g$.
\end{itemize}

\subsection{Preliminary results on objective function \texorpdfstring{$F$}{}}
\label[appendix]{ssec:objective_function_res}
\coerc*
\begin{proof}
    Recalling that $F(x) = f(x) - \tau\log p(x)$, $F$ is well-defined since $p>0$ by \Cref{ass:prior}. It is also convex since $f$ is convex (by \Cref{ass:datafit}) and $-\log p$ is convex (by \Cref{ass:prior}). Finally, in order to show that $F$ is coercive, we show that because $p$ is a log-concave positive probability distribution, $-\log p$ is coercive.

    For the sake of contradiction, suppose that $-\log p(x)$ does not tend to infinity as $\lnorm{x}\to +\infty$. Then, there exist $A>0$ and a sequence $(z_k)_{k\in\N}$ in $\R^d$ such that $\|z_k\| \to +\infty$, and  for every $k$, $-\log p(z_k) \leq A$.

    By \Cref{ass:prior}, $-\log p$ is continuous, so there exists $A'\in \R$ such that $-\log p \leq A'$ on the closed unit ball $\mathcal{B}\left(0,1\right)$.
    Then, by convexity of $-\log p$, we deduce that 
    \begin{equation*}
        -\log p(x) \leq \max(A,A'),\qquad \forall x\in \mathcal{C}_k,
    \end{equation*} 
    where 
    \[\mathcal{C}_k = \left\{(1-t)x + t z_k\,  \vert\quad  x\in \mathcal{B}\left(0,1\right), \, t\in[0,1] \right\}.\]
     
    Thus, $p\geq \exp(-\max(A,A'))$ on a sequence of subsets whose mass go to $+\infty$ (since every $\mathcal{C}_k$ contains a cone of volume $\alpha \|z_k\|$, where $\alpha>0$ is independent of $k$). It contradicts the integrability of $p$, and concludes the proof.
\end{proof}
As noted, as an immediate consequence of this lemma, $\argmin F$ is a non-empty, convex, bounded and closed subset of $\R^d$. 

\subsection{Preliminary results \texorpdfstring{on $\log p$ and $\log p_\sigma$}{}}
\label[appendix]{ssec:logpsigma}

We now show that $-\log p_\sigma$ inherits some coercivity properties of $-\log p$. If the coercivity of $-\log p_\sigma$ can be expected as $p_\sigma$ is a smoothed version of $p$, we show a slightly stronger result, by proving that the family $(-\log p_\sigma)_\sigma$ is, in a sense, uniformly coercive.
In what follows, we write sometimes $p_0$ for $p$ to simplify the notation. This convention is natural, since $p_\sigma$ converges pointwise to $p$ when $\sigma \to 0$.   
\begin{proposition}\label{prop:coercunif}
Let $A>0$. Then, there exists $R>0$ such that, for every $\sigma \in [0,+\infty)$,
\[x\in\R^d \text{ and } \|x\|\geq R \Longrightarrow -\log p_\sigma(x)\geq A.\]
\end{proposition}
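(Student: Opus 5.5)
The plan is to reduce the claim to a uniform bound $p_\sigma(x)\leq \varphi(\|x\|)$ with $\varphi(r)\to 0$ as $r\to\infty$, independent of $\sigma$. The natural tool is that a log-concave density has a linear-exponential tail bound: there exist $a>0$ and $b\in\R$ with
\[
-\log p(x)\geq a\|x\|-b\qquad\text{for all }x\in\R^d.
\]
This follows from convexity combined with the coercivity shown in the proof of \Cref{lemm:coerc}. Take $x_0$ a minimizer of $-\log p$ (it exists by coercivity and continuity). Pick $\rho$ large enough that $-\log p\geq -\log p(x_0)+1$ on the sphere $\partial\mathcal B(x_0,\rho)$. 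Convexity along rays from $x_0$ then gives growth with slope at least $1/\rho$ outside this ball.

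The linear bound gives $p(x)\leq e^{b}e^{-a\|x\|}$. We then control the convolution:
\[
p_\sigma(x)=\E_{\varepsilon}\big[p(x-\sigma\varepsilon)\big]\leq e^{b}\,\E\big[e^{-a\|x-\sigma\varepsilon\|}\big].
\]
We split according to whether $\sigma\|\varepsilon\|\leq \|x\|/2$. On that event the integrand is at most $e^{-a\|x\|/2}$. On the complement we bound $p$ by $\sup p=p(x_0)<\infty$ and use the Gaussian tail bound $\mathbb P(\|\varepsilon\|>\|x\|/(2\sigma))$.

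This tail probability is \emph{not} uniform in $\sigma$ when $\sigma$ is large, and handling large $\sigma$ is the main obstacle. For large $\sigma$ the bound $p_\sigma\leq \sup_z (2\pi\sigma^2)^{-d/2}e^{-\|z\|^2/(2\sigma^2)}\cdot 1$ fails to help directly. The cleaner route is to use $p_\sigma(x)\leq (2\pi\sigma^2)^{-d/2}$, valid since $p$ integrates to one. This bound tends to zero uniformly once $\sigma\geq\sigma_1$, with $\sigma_1$ chosen so that $(2\pi\sigma_1^2)^{-d/2}\leq e^{-A}$. That settles $\sigma\geq\sigma_1$ for every $x$.

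For $\sigma\in[0,\sigma_1]$ the splitting argument is uniform:
\[
p_\sigma(x)\leq e^{b}e^{-a\|x\|/2}+p(x_0)\,\mathbb P\big(\|\varepsilon\|>\|x\|/(2\sigma_1)\big),
\]
and both terms tend to zero as $\|x\|\to\infty$ independently of $\sigma$. The case $\sigma=0$ is covered directly by the linear bound. We then choose $R$ so that the right-hand side is at most $e^{-A}$ for $\|x\|\geq R$, which gives $-\log p_\sigma(x)\geq A$ for all $\sigma\geq 0$.
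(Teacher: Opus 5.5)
Your argument is correct, but it is organized differently from the paper's. The paper works directly with $p_\sigma(x)=\int p(z)\,(2\pi\sigma^2)^{-d/2}e^{-\|x-z\|^2/(2\sigma^2)}\,dz$ and splits in the $z$ variable. Outside a ball $\mathcal B(0,R)$ it uses only $p\le \frac12 e^{-A}$, which comes from coercivity of $-\log p$. On the ball it uses $p\le m$ and $\|x-z\|\ge R'-R$, which leaves the term $m\,\mathrm{Vol}(\mathcal B(0,R))(2\pi\sigma^2)^{-d/2}e^{-(R'-R)^2/(2\sigma^2)}$. Uniformity in $\sigma$ then comes in one step: this quantity is maximized over $\sigma^2$ at $\sigma^2=(R'-R)^2/d$, where it is of order $(R'-R)^{-d}$.

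You instead split the range of $\sigma$. For $\sigma\ge\sigma_1$ you use the total-mass bound $p_\sigma\le(2\pi\sigma^2)^{-d/2}$. For $\sigma\le\sigma_1$ you split in the noise variable and use that the Gaussian tail probability is monotone in $\sigma$. Both routes are elementary. The paper's needs only that $p$ is bounded and tends to $0$ at infinity, and it avoids a case distinction. Yours shows the mechanism more transparently: large $\sigma$ flattens everything, and small $\sigma$ is a controlled perturbation of $p$.

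Two remarks on your write-up. First, the linear bound $-\log p(x)\ge a\|x\|-b$ you derive from convexity is correct but more than you need. On the event $\sigma\|\varepsilon\|\le\|x\|/2$ you have $\|x-\sigma\varepsilon\|\ge\|x\|/2$, and $\sup_{\|z\|\ge\|x\|/2}p(z)\to0$ already follows from coercivity alone. Second, your sentence saying the bound $\sup_z(2\pi\sigma^2)^{-d/2}e^{-\|z\|^2/(2\sigma^2)}$ ``fails to help directly'' is confusing. That supremum is exactly the $(2\pi\sigma^2)^{-d/2}$ you then use as the ``cleaner route.''
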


\begin{proof}
Let $A>0$. By the coercivity of $-\log p$, there exist :
\begin{itemize}
    \item $R>0$ such that $-\log p(x) \geq A+\log(2)$ (i.e $p\leq \frac{1}{2}e^{-A}$) for all $x\in\R^d \setminus \mathcal{B}(0, R)$ , 
    \item since $p$ is also continuous, $m>0$ such that $-\log p$ is lower-bounded by $-\log m$, i.e $p\leq m$ on $\R^d$. 
\end{itemize}
Let $R'>R$. Then, for every $x$ such that $\|x\|\geq R'$,
\begin{align*}
    p_\sigma(x) &= \frac{1}{(2\pi\sigma^2)^{d/2}} \int_{z\in\R^d} p( z)\exp{\left(-\frac{\|x-z\|^2}{2\sigma^2}\right)}\dd{z}\\
    &\leq \frac{1}{2}e^{-A} +  \frac{m}{(2\pi\sigma^2)^{d/2}}  \int_{z\in \mathcal{B}(0, R)}\exp{\left(-\frac{\|x-z\|^2}{2\sigma^2}\right)}\dd{z}.\\
    &\leq \frac{1}{2}e^{-A} +  \frac{m \text{Vol}\left( \mathcal{B}(0, R)\right)}{(2\pi\sigma^2)^{d/2}}  \exp{\left(-\frac{(R'-R)^2}{2\sigma^2}\right)}\,,\\ 
\end{align*}
where $\text{Vol}\left( \mathcal{B}(0, R)\right):= \int_{z\in\mathcal{B}(0,R)} 1\, dz$.
By comparing growth rates and a straightforward derivative calculation, the mapping 
\begin{equation*}
    \sigma^2 \in (0, +\infty) \mapsto \frac{m \text{Vol}\left( \mathcal{B}(0, R)\right)}{(2\pi\sigma^2)^{d/2}}  \exp{\left(-\frac{(R'-R)^2}{2\sigma^2}\right)}
\end{equation*}
admits a continuous extension at $0$, with value $0$, and attains its maximum at $\sigma^2 =\frac{(R'-R)^2}{d}$, where its value is $\frac{d^{d/2}m \text{Vol}\left( \mathcal{B}(0, R)\right)}{(2\pi)^{d/2}(R'-R)^d}  \exp{\left(-\frac{d}{2}\right)}$. Since this quantity tends to $0$ as $R'\to +\infty$, there exists $R''>R$ such that \begin{equation*}
    \frac{d^{d/2}m \text{Vol}\left( \mathcal{B}(0, R)\right)}{(2\pi)^{d/2}(R''-R)^d}  \exp{\left(-\frac{d}{2}\right)} \leq \frac{1}{2}e^{-A}.
\end{equation*}
Thus, for every $\sigma \geq 0$ and $x\in  \R^d \setminus\mathcal{B}(0, R'')$, $p_\sigma (x) \leq e^{-A}$, or equivalently $-\log p_\sigma(x) \geq A$.
This concludes the proof.
\end{proof}
Conversely, for every $x\in \R^d$, the function $\sigma^2 \in [0, +\infty) \mapsto -\log p_\sigma(x)$ is continuous. Hence, it is bounded from above on every compact interval, and in particular on a neighborhood of $0$.

\begin{lemma}\label{lemme:majlog}
Let $x\in\R^d$ and $\sigmax>0$.
There exists $b>0$ such that for every $\sigma\in [0, \sigmax]$, $-\log p_\sigma(x) \leq b$.
\end{lemma}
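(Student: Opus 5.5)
The plan is to show that $\sigma^2\mapsto -\log p_\sigma(x)$ is continuous on the compact interval $[0,\sigmax^2]$, with value $-\log p(x)$ at $0$. It is then bounded above there, and any $b>0$ exceeding that bound works.

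For $\sigma>0$, I write the convolution after the change of variables $z=x+\sigma w$:
\[
p_\sigma(x)=\E_{W\sim\NN{}}\left[p(x+\sigma W)\right].
\]
This expression also covers $\sigma=0$, consistent with the convention $p_0=p$. The integrand $p(x+\sigma w)$ is continuous in $\sigma$ because $p$ is continuous by \Cref{ass:prior}. It is dominated by the constant $m:=\sup p$, which is finite because $-\log p$ is convex, coercive (\Cref{lemm:coerc}) and continuous, hence bounded below. Dominated convergence against the Gaussian density then gives continuity of $\sigma\mapsto p_\sigma(x)$ on $[0,\sigmax]$.

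Next I check that $p_\sigma(x)>0$ for every $\sigma\in[0,\sigmax]$. This holds since $p>0$ and the integrand is a positive continuous function integrated against a Gaussian. So $\sigma\mapsto p_\sigma(x)$ is a continuous, strictly positive function on a compact interval, and it attains a minimum $c>0$. Taking $b:=\max(-\log c,1)>0$ gives $-\log p_\sigma(x)\leq b$ for all $\sigma\in[0,\sigmax]$.

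Alternatively, to avoid the compactness step, I could give an explicit bound. Continuity of $p$ gives a radius $r>0$ with $p\geq p(x)/2$ on $\mathcal{B}(x,r)$. Then
\[
p_\sigma(x)\geq \tfrac{p(x)}{2}\,\mathbb{P}(\sigma\|W\|\leq r)\geq \tfrac{p(x)}{2}\,\mathbb{P}(\sigmax\|W\|\leq r)>0,
\]
which is uniform in $\sigma\in[0,\sigmax]$.

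The only delicate point is including $\sigma=0$, where the Gaussian kernel degenerates. The change of variables $z=x+\sigma w$ handles this and makes the dominated convergence step routine, so I do not expect a real obstacle.
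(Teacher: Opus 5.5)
Your proof is correct and follows the same route as the paper, which simply asserts that $\sigma^2\mapsto -\log p_\sigma(x)$ is continuous on $[0,+\infty)$ and therefore bounded above on the compact interval $[0,\sigmax^2]$. Your dominated-convergence and positivity arguments supply the justification of that continuity, which the paper leaves implicit; your alternative explicit bound $p_\sigma(x)\geq \tfrac{p(x)}{2}\,\mathbb{P}(\sigmax\lnorm{W}\leq r)$ is also valid and does not need continuity in $\sigma$ at all.
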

Finally, we recall a lemma proved in \citet[Lemma 4]{pesme2025map} on the convergence of $-\log p_\sigma$:
\begin{lemma}\label{lem:unifcv}
    On every compact subset of $\R^d$, $(\log p_\sigma)_\sigma$ converges uniformly to $\log p$ as $\sigma\rightarrow 0$.
\end{lemma}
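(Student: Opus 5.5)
The plan is to prove uniform convergence of the densities $p_\sigma \to p$ on a compact set first, and then pass to the logarithm using that $p$ is bounded away from $0$ on that compact set. Fix a compact $K\subset\R^d$ and write $p_\sigma(x) = \E_{\varepsilon\sim\NN{}}\left[p(x+\sigma\varepsilon)\right]$. Two facts about $p$ will be used:
\begin{itemize}
    \item $p$ is bounded on $\R^d$, say $p\leq m$. This follows from \Cref{ass:prior}: $p$ is continuous, and by the coercivity of $-\log p$ established in \Cref{lemm:coerc} it tends to $0$ at infinity. This is the same bound $m$ used in the proof of \Cref{prop:coercunif}.
    \item $p$ is continuous and positive, hence $c := \min_K p > 0$.
\end{itemize}

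\textbf{Step 1: uniform convergence of $p_\sigma$.} Let $K_1 := K + \mathcal{B}(0,1)$, which is compact, so $p$ is uniformly continuous on $K_1$. Given $\eta>0$, pick $\delta\in(0,1]$ such that $|p(z)-p(x)|\leq \eta$ whenever $x\in K$ and $\|z-x\|\leq\delta$. For $x\in K$, split the expectation according to whether $\|\sigma\varepsilon\|\leq \delta$:
\begin{equation*}
    |p_\sigma(x)-p(x)| \leq \E\left[|p(x+\sigma\varepsilon)-p(x)|\right] \leq \eta + 2m\,\mathbb{P}\left(\|\varepsilon\|>\delta/\sigma\right).
\end{equation*}
The right-hand side does not depend on $x$. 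The tail probability tends to $0$ as $\sigma\to0$, so $\sup_K|p_\sigma-p|\leq 2\eta$ for $\sigma$ small enough. Since $\eta$ is arbitrary, $p_\sigma\to p$ uniformly on $K$.

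\textbf{Step 2: passing to the logarithm.} By Step 1, for $\sigma$ small enough we have $\sup_K|p_\sigma - p|\leq c/2$. Hence $p_\sigma\geq c/2$ on $K$. Also $p_\sigma\leq m$ everywhere, since $p_\sigma$ is an average of values of $p$. On the interval $[c/2,m]$ the logarithm is $\frac{2}{c}$-Lipschitz, which gives
\begin{equation*}
    \sup_K|\log p_\sigma - \log p| \leq \tfrac{2}{c}\sup_K|p_\sigma-p| \longrightarrow 0 .
\end{equation*}

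\textbf{Main obstacle.} The only delicate point is handling the Gaussian tails uniformly in $x\in K$, since $p$ is only uniformly continuous on compact sets. This is resolved by the global bound $p\leq m$, which controls the contribution of the tail event uniformly in $x$. Note that log-concavity is used only to obtain that bound; smoothness of $p$ beyond continuity is not needed.
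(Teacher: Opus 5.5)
Your proof is correct, but it follows a different route from the paper, which gives no argument for this lemma and simply imports it from \citet[Lemma 4]{pesme2025map}. Your proof is a self-contained alternative built on three ingredients: uniform continuity of $p$ on $K+\mathcal{B}(0,1)$, a global bound $p\leq m$ to control the Gaussian tail uniformly in $x\in K$, and the Lipschitz property of $\log$ on $[c/2,+\infty)$.

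Its advantage is economy of hypotheses. Besides $p>0$, it uses only continuity and boundedness of $p$. Even the bound $m$, and hence log-concavity, can be dropped by splitting the convolution rather than the expectation. Indeed, $\int_{\lnorm{z-x}>\delta}p(z)\,\phi_\sigma(x-z)\dd{z}\leq (2\pi\sigma^2)^{-d/2}e^{-\delta^2/(2\sigma^2)}\to 0$, where $\phi_\sigma$ is the density of $\mathcal{N}(0,\sigma^2\Id)$ and one uses $\int p=1$. The remaining term $p(x)\,\mathbb{P}(\lnorm{\varepsilon}>\delta/\sigma)$ is controlled by $\max_K p$. The limitation of your argument is that it yields no rate.

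By contrast, the paper's own \Cref{lemm:eqpsigp} exploits the bounded third derivative in \Cref{ass:prior} and already implies the statement quantitatively. For $\sigma\in[0,\sigmax]$, it gives $\sup_K\lvert\log p_\sigma-\log p\rvert\leq\tfrac{\sigma^2}{2}\sup_K\lvert\Delta\log p+\lnorm{\nabla\log p}^2\rvert+\tilde M\sigma^3$, with $\tilde M$ as in that lemma. This $O(\sigma^2)$ rate is what part (ii) of \Cref{prop:lim_minim} requires, since there the error is divided by $\lambda(\sigma^2)$ and one needs $\sigma^2/\lambda(\sigma^2)\to0$. Part (i), where this lemma is actually invoked, only uses the qualitative uniform convergence that you establish.
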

As an immediate consequence, on every compact subset of $\R^d$, $F_\sigma$ converges uniformly to $F$ as $\sigma\to 0$.

\subsection{Behavior in \texorpdfstring{$0$}{}}
\label[appendix]{ssec:behavminim}
We recall that $\lambda : \R_+ \longrightarrow \R_+$ is a strictly increasing function, differentiable on $(0,+\infty)$, and such that $\lambda(0)=0$.
In this subsection, we want to prove the following proposition:
\limminim*
To do so, we state a useful lemma, whose proof is postponed after the proof of \Cref{prop:lim_minim}. 
\begin{lemma}[Asymptotic behavior of $\log p_\sigma$ when $\sigma\to 0$]\label{lemm:eqpsigp}
    There exists a function $\tilde{Q}:\R_+\times \R^d \to \R$ continuous such that, for any $(\sigma, x)\in\R_+\times \R^d$,
    \begin{equation*}
        p_\sigma(x)= p(x)\left(1+\frac{\sigma^2}{2} \Big(\Delta \log p(x)+ \lnorm{\nabla \log p(x)}^2\Big) +\sigma^3 \frac{\tilde{Q}(\sigma, x)}{p(x)}\right).
    \end{equation*}
    As a consequence, for any compact $K\subset \R^d$ and $\sigmax>0$, there exists $\tilde{M}>0$ such that
    \begin{equation*}
    \forall (\sigma, x)\in [0,\sigmax]\times K, \qquad  
        \left|\log\left(\dfrac{p_\sigma(x)}{p(x)}\right)-\frac{\sigma^2}{2}\left(\Delta \log p(x)+\lnorm{\nabla \log p(x)}^2\right)\right|\leq \tilde{M} \sigma^3.
    \end{equation*}
\end{lemma}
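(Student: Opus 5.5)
The natural starting point is to write $p_\sigma(x)=\E_{\varepsilon\sim\NN{}}[p(x+\sigma\varepsilon)]$ and Taylor-expand $z\mapsto p(x+z)$ around $z=0$ to second order with an integral remainder of order three. Under \Cref{ass:prior}, $\log p$ is $C^3$, and so is $p=\exp(\log p)$. The derivatives of $p$ follow from the chain rule:
\begin{equation*}
\nabla p = p\nabla\log p,\qquad \nabla^2 p = p\left(\nabla^2\log p+\nabla\log p\,\nabla\log p^\top\right).
\end{equation*}
Taking the Gaussian expectation, the first-order term vanishes because $\E[\varepsilon]=0$. The second-order term gives
\begin{equation*}
\frac{\sigma^2}{2}\E\left[\varepsilon^\top\nabla^2p(x)\varepsilon\right]=\frac{\sigma^2}{2}\tr\left(\nabla^2 p(x)\right)=\frac{\sigma^2}{2}p(x)\left(\Delta\log p(x)+\lnorm{\nabla\log p(x)}^2\right),
\end{equation*}
which is exactly the claimed leading correction. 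The remainder is $\sigma^3\tilde Q(\sigma,x)$ with
\begin{equation*}
\tilde Q(\sigma,x):=\E\left[\int_0^1\frac{(1-t)^2}{2}\,\nabla^3p(x+t\sigma\varepsilon)[\varepsilon,\varepsilon,\varepsilon]\dd t\right].
\end{equation*}
This definition makes sense for all $\sigma\ge0$, including $\sigma=0$, where it equals $\frac16\nabla^3p(x)\E[\varepsilon^{\otimes3}]=0$. The algebraic identity then holds exactly by Taylor's formula, provided the expectation is finite.

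\textbf{Main obstacle: continuity of $\tilde Q$ and integrability.} To handle this, I would bound $\nabla^3 p$ polynomially. Expanding gives
\begin{equation*}
\nabla^3p=p\left(\nabla^3\log p+\text{terms in }\nabla^2\log p\otimes\nabla\log p+(\nabla\log p)^{\otimes3}\right).
\end{equation*}
Since $\nabla^3\log p$ is bounded by $M$, $\nabla^2\log p$ grows at most linearly and $\nabla\log p$ at most quadratically. Together with $p\le m$ (as noted in the proof of \Cref{prop:coercunif}), this yields $\lnorm{\nabla^3p(z)}\le C(1+\lnorm{z})^6$. The integrand is therefore dominated by $C(1+\lnorm{x}+\sigma\lnorm{\varepsilon})^6\lnorm{\varepsilon}^3$, which is Gaussian-integrable and locally uniform in $(\sigma,x)$. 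Dominated convergence then gives continuity of $\tilde Q$ on $\R_+\times\R^d$.

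For the consequence, fix a compact $K$ and $\sigmax$. On the compact set $[0,\sigmax]\times K$:
\begin{itemize}
\item $p$ is bounded below by some $c>0$, since $p>0$ is continuous;
\item $h(x):=\frac12(\Delta\log p+\lnorm{\nabla\log p}^2)$ is bounded, by continuity;
\item $\tilde Q/p$ is bounded, by continuity of $\tilde Q$.
\end{itemize}
Writing $u=\sigma^2h+\sigma^3\tilde Q/p$, we have $\log(p_\sigma/p)=\log(1+u)$ with $1+u=p_\sigma/p>0$. Two regimes then cover all cases:
\begin{itemize}
\item For $\sigma\le\sigma_0$ small enough that $|u|\le\frac12$, use $|\log(1+u)-u|\le u^2$, so $|\log(1+u)-\sigma^2 h|\le |\sigma^3\tilde Q/p|+u^2=O(\sigma^3)$, since $u^2=O(\sigma^4)$.
\item For $\sigma\in[\sigma_0,\sigmax]$, the quantity $\log(p_\sigma/p)-\sigma^2h$ is continuous on a compact set, hence bounded by some $B$. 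Then $B\le (B/\sigma_0^3)\sigma^3$.
\end{itemize}
Taking the maximum of the two constants yields $\tilde M$.
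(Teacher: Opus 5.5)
Your proposal is correct and follows essentially the same route as the paper: a third-order Taylor expansion of $p$ with integral remainder under the Gaussian expectation, the identity $\Delta p = p(\Delta\log p+\lnorm{\nabla\log p}^2)$, a polynomial bound $\lnorm{\nabla^3 p(z)}\lesssim p(z)(1+\lnorm{z})^6$ combined with $p\le m$ and dominated convergence to get continuity of $\tilde Q$, and finally a bound on $\log(1+u)-u$ over a compact set. Your two-regime treatment of that last step (small $\sigma$ with $|u|\le\frac12$, then compactness for $\sigma\in[\sigma_0,\sigmax]$) is slightly more careful than the paper's. The paper claims $|\log(1+\sigma^2h)-\sigma^2h|=O(\sigma^4)$ from the boundedness of $h$ alone, without noting that $1+\sigma^2h=p_\sigma/p$ must also stay bounded away from $0$.
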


This lemma means that, when $\sigma \to 0$, $p_\sigma(x)\simeq p(x)\left(1+\frac{\sigma^2}{2}(\Delta \log p(x)+ \lnorm{\nabla \log p(x)}^2)\right)$, which in fact is a bit more precise than what we need to prove \Cref{prop:lim_minim}. Nevertheless, we state it in this form since it gives a better understanding of the behavior of $\log p_\sigma$.  
\begin{proof}[Proof of \Cref{prop:lim_minim}]
We first prove that the mapping $\sigma^2\mapsto \xstar$ is bounded in a neighborhood of $0$.
Using \Cref{lemme:majlog}, let $b>0$ such that $-\tau\log p_\sigma(u) \leq b$ for every $\sigma \in [0, \sigmax]$. 
Now, thanks to \Cref{prop:coercunif}, let $R>0$ such that for every $\sigma\geq 0$ and for all $x\in \R^d \setminus \mathcal{B}(0,R)$, 
\begin{equation*}
    -\tau\log p_\sigma(x) > f(u) + b - \inf f.
\end{equation*} Then, for all $x\in \R^d \setminus \mathcal{B}(0,R)$ and $\sigma \in [0, \sigmax]$:
\begin{align*}
    F_\sigma(x) &= f(x)- \tau \log p_\sigma(x)+\frac{\lambda(\sigma^2)}{2}\|x-u\|^2 \\
    &\geq \inf f - \tau \log p_\sigma(x) \\
    &> f(u) + b \\
    &\geq f(u) - \tau \log p_\sigma(u) = F_\sigma(u).
\end{align*}
Thus, necessarily, $\xstar \in \mathcal{B}(0,R)$ for every $\sigma \in (0,\sigmax]$ (and the calculation also shows that all the minimizers of $F$ are in $\mathcal{B}(0,R)$).
\begin{enumerate}[label=(\roman*),leftmargin=*]
    \item Let $\bar{x}$ be a cluster point of $\xstar$ as $\sigma^2\to 0$, and let $(\bar{\sigma}_n)_n\in (0,+\infty)^\N$, be a sequence converging to $0$ such that $x_{\bar{\sigma}_n}^* \underset{n\to +\infty}{\longrightarrow} \bar{x}$. Consider also $x^\star$ a minimizer of $F$.
Then, by \Cref{lem:unifcv}, $F_\sigma$ converges uniformly to $F$ on the compact $\mathcal{B}(0,R)$ as $\sigma \to 0$, hence $\underset{n\to \infty}{\lim} F_{\bar{\sigma}_n}(x_{\bar{\sigma}_n}^*)=F(\bar x)$, and:
\begin{equation*}
    F(\bar x) = \underset{n\to \infty}{\lim} F_{\bar{\sigma}_n}(x_{\bar{\sigma}_n}^*) \leq \underset{n\to \infty}{\lim} F_{\bar{\sigma}_n}(x^\star) = F(x^\star) = \inf F.
\end{equation*}
Thus $\bar x \in \argmin F$, and this concludes the proof.
\item  We first justify the existence and uniqueness of $x^\dagger= \argmin \{\|z-u\|\,\vert\, z\in \argmin F\}$. By continuity, coercivity and convexity of $F$ (\Cref{lemm:coerc}, \Cref{ass:prior}), $\argmin F \neq \emptyset$ is compact and convex. By strict convexity, the function $z\mapsto \lnorm{z-u}^2$ attains a unique minimum over $\argmin F$.

Now, we assume that $\frac{\sigma^2}{\lambda(\sigma^2)}\underset{\sigma \to 0^+}{\rightarrow} 0$, and we want to prove that $\xstar \underset{\sigma\to 0^+}{\rightarrow}x^\dagger$. Let $\sigmax > 0$ and $\sigma \in (0, \sigmax]$.
Using the optimality of $\xstar$,
\begin{align}
    F_{\sigma}(\xstar) &\leq F_\sigma(x^\dagger)\\
    f(\xstar) - \tau\log p_\sigma(\xstar) + \tfrac{\lambda(\sigma^2)}{2}\lnorm{\xstar - u}^2 &\leq f(x^\dagger) - \tau\log p_\sigma(x^\dagger) %
    + \tfrac{\lambda(\sigma^2)}{2}\lnorm{x^\dagger - u}^2.
\end{align}
Reordering the terms, we obtain
\begin{equation}
    \lnorm{\xstar - u}^2-\lnorm{x^\dagger - u}^2 \leq \dfrac{2}{\lambda(\sigma^2)}\left[f(x^\dagger) - \tau\log p_\sigma(x^\dagger) - f(\xstar) + \tau\log p_\sigma(\xstar)\right]\label{eq:interm_strg_min_cv}\,.
\end{equation}
Moreover, since $x^\dagger\in \argmin F$,
\begin{equation}
    \label{eq:dagger_min_star}
    F(x^\dagger) = f(x^\dagger) - \tau\log p(x^\dagger) \leq  f(\xstar) - \tau\log p(\xstar) = F(\xstar)\,,
\end{equation}
and plugging \eqref{eq:dagger_min_star} into \eqref{eq:interm_strg_min_cv}, we obtain
\begin{equation*}
\lnorm{\xstar - u}^2-\lnorm{x^\dagger - u}^2 \leq \dfrac{2\tau}{\lambda(\sigma^2)}\left[\log p(x^\dagger)- \log p_\sigma(x^\dagger) + \log p_\sigma(\xstar)-\log p(\xstar)\right]\,.
\end{equation*}
Using \Cref{lemm:eqpsigp}, with $\sigmax$ and $K:=\mathcal{B}(0,R)$ the compact containing $(\xstar)_{\sigma \in (0,\sigmax]}$ as well as $\argmin F$, we have $\tilde{M}>0$ such that, for any $\sigma \in [0,\sigmax]$ :
\begin{align*}
    &|\log p(x^\dagger)- \log p_\sigma(x^\dagger) + \log p_\sigma(\xstar)-\log p(\xstar)| \\
    \leq&  \frac{\sigma^2}{2}\left|\Delta \log p(\xstar)+\lnorm{\nabla \log p(\xstar)}^2-\Delta \log p(x^\dagger)-\lnorm{\nabla \log p(x^\dagger)}^2\right|+2\tilde{M}\sigma^3
\end{align*}
Since $\nabla \log p$ and $\Delta \log p$ are continuous, there exists $B>0$ such that
$\left|\Delta \log p+\lnorm{\nabla \log p}^2-\Delta \log p(x^\dagger)-\lnorm{\nabla \log p(x^\dagger)}^2\right|\leq B$ on $K$.
Thus, 
\begin{align*}
    \lnorm{\xstar - u}^2-\lnorm{x^\dagger - u}^2 &\leq \dfrac{2\tau}{\lambda(\sigma^2)}\left[\log p(x^\dagger)- \log p_\sigma(x^\dagger) + \log p_\sigma(\xstar)-\log p(\xstar)\right]\\
    &\leq  \dfrac{B\tau\sigma^2 + 4 \tau\tilde{M} \sigma^3}{\lambda(\sigma^2)} \underset{\sigma\to 0}{\longrightarrow} 0 \quad\text{because } \dfrac{\sigma^2}{\lambda(\sigma^2)} \underset{\sigma\to 0}{\longrightarrow} 0\,.
\end{align*}
We obtain that any cluster point of $(\xstar)_\sigma$ is at least as close to $u$ as $x^\dagger$. 
A cluster point $\bar{x}$ of $(\xstar)_\sigma$ when $\sigma \to 0$ is thus a minimizer of $F$ ($(i)$) such that $\lnorm{\bar{x}-u}\leq \lnorm{x^\dagger -u}$, therefore, by uniqueness of $x^\dagger$, $\bar{x}=x^\dagger$. Finally, $x^\dagger$ is the only subsequential limit of the bounded sequence $(\xstar)$ when $\sigma \to 0$, so that $\xstar \underset{\sigma \to 0}{\longrightarrow} x^\dagger$.
\end{enumerate}
\end{proof}

\begin{proof}[Proof of \Cref{lemm:eqpsigp}]
We begin by using Taylor-Lagrange's Theorem in order to obtain an expansion of $p_\sigma(x)$ in the variable $\sigma$. Denoting $Z\sim \NN{}$ and using $\E[Z]=0$,
\begin{align*}
& p_\sigma(x) \\
 &  = \E\left[p(x+\sigma Z)\right]\\
 &  =\E\left[p(x) + \sigma\nabla p(x)^T Z +\frac{\sigma^2}{2} Z^T \nabla^2 p(x) Z+ \sigma^3\int_0^1\dfrac{(1 - t)^2}{2}\sum_{i,j,k}\partial_{i,j,k}p(x+t\sigma Z)Z_iZ_jZ_k\dd{t}\right]\\
   & =p(x) + \frac{\sigma^2}{2}\E\left[ Z^T \nabla^2 p(x) Z\right]+\sigma^3\tilde{Q}(\sigma,x)\\
    \end{align*}
where
\begin{equation*}
    \tilde{Q}:(\sigma,x)\in \R_+\times \R^d \mapsto \int_0^1\dfrac{(1 - t)^2}{2}\E\left[\sum_{1\leq i,j,k\leq d} Z_iZ_jZ_k\,\partial_{i,j,k}p(x+t\sigma Z)\right]\dd{t}
\end{equation*}
which we will show to be well defined and a continuous function. Moreover, 
\begin{equation*}
    \E[Z^T \nabla^2 p(x) Z]=\sum_{1\leq i,j \leq d}\E[Z_i Z_j \partial_{ i,j } p(x)] =\sum_i \partial_{i,i} p(x) =\Delta p(x).
\end{equation*}
Notice now that, for any $x\in\R^d$,
\begin{align*}
    \nabla p(x) &= p(x)\nabla \log p(x)\\
   \nabla^2p(x)_{i,j} &= \partial_{i,j} p (x)\\ 
   &=p(x)\left[(\nabla^2 \log p(x))_{i,j} + (\nabla \log p(x))_i(\nabla \log p(x))_j\right] \\
   (\nabla^3 p(x))_{i,j,k}&=\partial_{i,j,k} p(x)\\
   &= p(x)[(\nabla^3\log p(x))_{i,j,k} \\
 \quad  & \quad + (\nabla^2 \log p (x))_{i,j}(\nabla \log p(x))_k+ (\nabla^2 \log p (x))_{j,k} (\nabla \log p(x))_i \\
 \quad  & \quad + (\nabla^2 \log p (x))_{k,i} (\nabla \log p(x))_j + (\nabla \log p(x))_i(\nabla \log p(x))_j(\nabla \log p(x))_k].\\
\end{align*}
As a consequence,
\begin{equation*}
    \Delta p(x) = \sum_{i=1}^d \partial_{i,i} p(x) = p(x) \left(\sum_{i=1}^d \partial_{i,i} \log p(x) + \partial_i \log p(x)^2 \right) = p(x) \left(\Delta \log p(x) + \lnorm{\nabla \log p(x)}^2 \right).
\end{equation*}

We now justify that $\tilde{Q}$ is well-defined and continuous.

First, $\ds(\sigma,z)\mapsto \int_0^1\dfrac{(1 - t)^2}{2}\sum_{1\leq i,j,k\leq d} z_iz_jz_k\,\partial_{i,j,k}p(x+t\sigma z)\dd t $ is clearly continuous. 

Then, since $\nabla^3 \log p$ is uniformly bounded, there exist $M_0, M_1, M_2>0$ such that, for all $x\in \R^d$ :
\begin{itemize}
    \item $\max_{i,j}| \partial_{i,j} \log p(x)|\leq M_2(1+\lnorm{x})$
    \item $\max_i |\partial_i \log p(x)|\leq M_1(1+\lnorm{x}^2)$
    \item $ |\log p(x)| \leq M_0(1+\lnorm{x}^3)$,
\end{itemize}
so that
\begin{equation*}
    |\partial_{i,j,k} p(x)| \leq p(x)\left(M+3M_1M_2(1+\lnorm{x})(1+\lnorm{x}^2)+M_1^3(1+\lnorm{x}^2)^3 \right)=p(x) M_{\text{tot}}(1+\lnorm{x}^6),
\end{equation*}
for a certain $M_{\text{tot}}>0$.

Thus, denoting $m>0$ an upper bound on $p$ (that exists by coercivity of $-\log p$, cf \Cref{lemm:coerc}),
\begin{align*}
     &\E\left[\left|\sum_{1\leq i,j,k\leq d} Z_iZ_jZ_k\,\partial_{i,j,k}p(x+t\sigma Z)\right|\right] \nonumber \\& \hspace{4cm}\leq  \E\left[\sum_{1\leq i,j,k\leq d} \left|Z_iZ_jZ_k\,\partial_{i,j,k}p(x+t\sigma Z)\right|\right] \\
    & \hspace{4cm} \leq M_{\text{tot}} \E\left[p(x+t\sigma Z) \sum_{1\leq i,j,k\leq d} \left|Z_iZ_jZ_k\right|\,\left(1+\lnorm{x+t\sigma Z}^6\right)\right]\\
    & \hspace{4cm}\leq M_{\text{tot}}m \E\left[\sum_{1\leq i,j,k\leq d} \left|Z_iZ_jZ_k\right|\,\left(1+(\lnorm{x}+\lnorm{\sigma Z})^6\right)\right].
\end{align*}
Let $\tilde{\sigma}>0$ and $\tilde{R}>0$. For any $(x,\sigma)\in \mathcal{B}(0,\tilde{R})\times [0,\tilde{\sigma}]$, 
\begin{equation*}
    \E\left[\sum_{1\leq i,j,k\leq d} \left|Z_iZ_jZ_k\right|\,\left(1+(\lnorm{x}+\lnorm{\sigma Z})^6\right)\right]\leq \E\left[\sum_{1\leq i,j,k\leq d} \left|Z_iZ_jZ_k\right|\,\left(1+(\tilde{R}+\lnorm{\tilde\sigma Z})^6\right)\right]
\end{equation*}
which is finite since $Z\sim \NN{}$. By the dominated convergence theorem, $\tilde Q$ is therefore continuous on $\R_+\times \R^d$, and we eventually obtain
\begin{align*}
    p_\sigma(x) &= p(x)\left(1+ \frac{\sigma^2}{2}\left(\Delta \log p(x) + \lnorm{\nabla \log p(x)}^2 \right)+\sigma^3\frac{\tilde{Q}(\sigma,x)}{p(x)}\right).\\
\end{align*}
\,
Let $K\subset \R^d$ compact and $\sigmax>0$. 
Consider the mapping $h:(\sigma,x)\in\R_+\times \R^d\mapsto \frac12\left(\Delta \log p(x) + \lnorm{\nabla \log p(x)}^2 \right)+\sigma\frac{\tilde{Q}(\sigma,x)}{p(x)}$, which inherits continuity from $\Delta \log p$, $\nabla \log p$, $\tilde{Q}$ and from $p > 0$, and is therefore bounded on $[0,\sigmax]\times K$. Then,
\begin{align*}
&\left|
\log\left(\frac{p_\sigma(x)}{p(x)}\right)
-\frac{\sigma^2}{2}
\left(\Delta\log p(x)+\lnorm{\nabla\log p(x)}^2\right)
\right|\\
&\hspace{4cm}
=\left|
\log\left(
1+\frac{\sigma^2}{2}
\left(\Delta\log p(x)+\lnorm{\nabla\log p(x)}^2\right)
+\sigma^3\frac{\tilde Q(\sigma,x)}{p(x)}
\right)\right.\\
&\hspace{5cm}\left.
-\frac{\sigma^2}{2}
\left(\Delta\log p(x)+\lnorm{\nabla\log p(x)}^2\right)
\right|\\
&\hspace{4cm}\leq 
\underbrace{
\left|\log(1+\sigma^2 h(\sigma,x))-\sigma^2 h(\sigma,x)\right|
}_{\text{bounded by a term $\propto\sigma^4$ since $h$ remains bounded}}
+\sigma^3
\underbrace{
\left|\frac{\tilde Q(\sigma,x)}{p(x)}\right|,
}_{\substack{\text{continuous, thus bounded on}\\
[0,\sigmax]\times K}}
\end{align*}
and we obtain the result.

\end{proof}

\subsection{Bound on the derivative of the minimizer path}
\reglips*
\begin{proof}
We show in fact that for any $\sigmax>0$ and $0<\sigma_1\leq\sigma_2\leq \sigmax$,
\begin{equation}
    \lnorm{x_{\sigma_2}^* - x_{\sigma_1}^*} \leq \left(T(\sigmax)+ \dfrac{M\tau\sqrt{d}+L_f T(\sigmax)}{2\lambda(\sigma_1^2)}\right)(\sigma_2^2 - \sigma_1^2) + r(\sigmax)\log\left(\dfrac{\lambda(\sigma_2^2)}{\lambda(\sigma_1^2)} \right), 
\end{equation}
where $r(\sigmax) = \underset{\sigma\in(0,\sigmax]}{\sup}\lnorm{\xstar - u}$ and $T(\sigmax) = \underset{\sigma\in(0,\sigmax]}{\sup} \frac{1}{\tau}\lnorm{\nabla f(\xstar)+\lambda(\sigma^2)(\xstar -u)}$ both quantity being finite.
By convexity and differentiability of $F_{\sigma}$, $\xstar$ is characterized by $\nabla F_\sigma(\xstar) = 0$. 
By strong convexity of $F_\sigma$, $\nabla^2 F_\sigma(x_\sigma^*)$ is invertible, hence the implicit function theorem \citep[Theorem 3.3.1]{krantz2002implicit}
states that the mapping $\sigma^2\mapsto \xstar$ is continuously differentiable, and that its derivative, denoted $\dottedxstar$, satisfies
\begin{equation}\label{eq:dottedxstar}
    \dottedxstar = -[\nabla^2F_\sigma(\xstar)]^{-1}\left(\partial_{\sigma^2}\nabla F_\sigma\right)(\xstar).
\end{equation}
\begin{align*}
    \nabla^2F_\sigma(\xstar) &= \nabla^2 f(\xstar) - \tau\nabla^2\log p_\sigma(\xstar) + \lambda(\sigma^2)\Id\\
    \partial_{\sigma^2}\nabla F_\sigma(\xstar) &= -\tau\partial_{\sigma^2}\nabla\log p_{\sigma}(\xstar) + \lambda'(\sigma^2) (\xstar - u)\label{eq:hessian_F}\\
    &\stackclap{\circled{a}}{=} -\dfrac{\tau}{2}\nabla\Delta\log p_{\sigma}(\xstar) - \tau[\nabla^2\log p_\sigma(\xstar)]\nabla\log p_\sigma(\xstar) + \lambda'(\sigma^2) (\xstar - u).%
\end{align*}
where \circled{a} holds by \citet[Lemma 3]{pesme2025map}. 
For ease of notation, denote $M_\sigma := \nabla^2f(\xstar) + \lambda(\sigma^2)\Id$ and $N_\sigma := -\tau\nabla^2\log p_\sigma(\xstar)$ for $\sigma>0$, which are two symmetric matrices, respectively positive definite and positive semi-definite.

We first want to bound $\|\dottedxstar\|$ on all $[0, \sigmax]$. 
We begin by expanding \Cref{eq:dottedxstar} explicitly,
\begin{equation}
    \dottedxstar = -[M_\sigma + N_\sigma]^{-1}
    \left(-\dfrac{\tau}{2}\nabla\Delta\log p_{\sigma}(\xstar) + N_\sigma\nabla\log p_\sigma(\xstar) + \lambda'(\sigma^2) (\xstar - u) \right).
\end{equation}
Let us recall a result proved in \citet[Lemma 5]{pesme2025map}: under \Cref{ass:prior}, $\lnorm{\nabla\Delta\log p_{\sigma}}$ is uniformly bounded by $ M \sqrt{d}$.
Moreover, we can bound the operator norm $\lnorm{\cdot}_{\mathrm{op}}$ of $[M_\sigma + N_\sigma]^{-1}$ and $[M_\sigma + N_\sigma]^{-1}N_\sigma$:
\begin{itemize}
    \item  Using Loewner order, $M_\sigma + N_\sigma \succeq M_\sigma \succeq \lambda(\sigma^2) \Id$, which implies that $[M_\sigma + N_\sigma]^{-1}\preceq \frac{1}{\lambda(\sigma^2)}\Id$, so that $\lnorm{[M_\sigma + N_\sigma]^{-1}}_{\mathrm{op}}\leq  \frac{1}{\lambda(\sigma^2)}$.
    \item For any $x\in\R^d$, $\sqrt{\lambda(\sigma^2)}\lnorm{x}\leq \lnorm{M_\sigma^{1/2}x}\leq \sqrt{L_f+\lambda(\sigma^2)}\lnorm{x}$, so that 
    \begin{equation*}
        \frac{1}{\sqrt{L_f+\lambda(\sigma^2)}}\lnorm{M_\sigma^{1/2}x}\leq \lnorm{x} \leq \frac{1}{\sqrt{\lambda(\sigma^2)}}\lnorm{M_\sigma^{1/2} x}.
    \end{equation*} Let $H:=[M_\sigma + N_\sigma]^{-1}N_\sigma$, then
    \begin{align*}
        \lnorm{H}_{\mathrm{op}}&= \underset{x\neq 0}{\sup}\quad \dfrac{\lnorm{Hx}}{\lnorm{x}}\\ 
        &\leq\underset{x\neq 0}{\sup}\quad \dfrac{(\lambda(\sigma^2))^{-1/2}\lnorm{M_\sigma^{1/2}Hx}}{(L_f+\lambda(\sigma^2))^{-1/2}\lnorm{M_\sigma^{1/2}x}}\\
        &=\sqrt{1+\dfrac{L_f}{\lambda(\sigma^2)}}\quad \underset{z\neq 0}{\sup}\quad \dfrac{\lnorm{M_\sigma^{1/2}HM_\sigma^{-1/2}z}}{\lnorm{z}}\\
        &=\sqrt{1+\dfrac{L_f}{\lambda(\sigma^2)}}\quad \lnorm{M_\sigma^{1/2}HM_\sigma^{-1/2}}_{\mathrm{op}}.
    \end{align*}
    Notice now that $M_\sigma^{1/2}HM_\sigma^{-1/2}=[\Id + M_\sigma^{-1/2}N_\sigma M_\sigma^{-1/2}]^{-1}M_\sigma^{-1/2}N_\sigma M_\sigma^{-1/2}$ is a symmetric matrix whose eigenvalues are bounded by $1$, so that its operator norm is at most $1$. 
    Hence, $\lnorm{H}_{\mathrm{op}}\leq  \sqrt{1+\frac{L_f}{\lambda(\sigma^2)}}\leq  1+\frac{L_f}{2\lambda(\sigma^2)}$.
\end{itemize}
We also know that, by optimality of $\xstar$, 
\begin{equation}
    \nabla \log p_\sigma(\xstar) = \frac{1}{\tau}\left(\nabla f(\xstar)+\lambda(\sigma^2)(\xstar -u) \right),
\end{equation}
Therefore, by combining the previous facts, the quantity $\lnorm{\dottedxstar}$ is equal to
\begin{align}
   &\lnorm{-[M_\sigma + N_\sigma]^{-1}
    \left(-\dfrac{\tau}{2}\nabla\Delta\log p_{\sigma}(\xstar) + \frac{1}{\tau}N_\sigma\Big(\nabla f(\xstar)+\lambda(\sigma^2)(\xstar -u) \Big) + \lambda'(\sigma^2) (\xstar - u) \right)} \nonumber\\
    &\leq \frac{1+\frac{L_f}{2\lambda(\sigma^2)}}{\tau}\|\nabla f(\xstar)+\lambda(\sigma^2)(\xstar -u)\| + \dfrac{1}{\lambda(\sigma^2)}\left(\dfrac{M\tau\sqrt{d}}{2}+\lambda'(\sigma^2) \|\xstar - u\|\right)
\end{align}
Moreover, following \Cref{prop:lim_minim}, we know that $\xstar$ is bounded on $(0,\sigmax]$, and since $\lambda$ is continuous in $0$, we can define the finite quantities $T(\sigmax):=\underset{\sigma\in(0,\sigmax]}{\sup}\frac{1}{\tau}\|\nabla f(\xstar)+\lambda(\sigma^2)(\xstar -u)\|$ and $r(\sigmax) := \underset{\sigma\in(0,\sigmax]}{\sup} \lnorm{\xstar - u}$. Then,
\begin{equation*}
    \|\dottedxstar\| \leq T(\sigmax) + \dfrac{1}{\lambda(\sigma^2)}\left(\dfrac{M\tau\sqrt{d}}{2}+\lambda'(\sigma^2) r(\sigmax)+ \dfrac{L_f T(\sigmax)}{2}\right), 
\end{equation*}
Recall that $\dottedxstar := \dfrac{\dd{\xstar}}{\dd{\sigma^2}}$. 
Then, by triangular inequality, for $0<\sigma_1\leq\sigma_2\leq \sigmax$:
\begin{align*}
    \| x_{\sigma_2}^* - x_{\sigma_1}^* \| &= \lnorm{\int_{\sigma_1^2}^{\sigma_2^2} \dottedxstar \dd{\sigma^2} } \\
    &\leq \int_{\sigma_1^2}^{\sigma_2^2} \|\dottedxstar\|\dd{\sigma^2} \\
    &\leq \int_{\sigma_1^2}^{\sigma_2^2}  T(\sigmax) + \dfrac{1}{\lambda(\sigma^2)}\left(\dfrac{M\tau\sqrt{d}}{2}+\lambda'(\sigma^2) r(\sigmax)+ \dfrac{L_f T(\sigmax)}{2}\right)\dd{\sigma^2} \\
    &\leq \left(T(\sigmax)+ \dfrac{M\tau\sqrt{d}+L_fT(\sigmax)}{2\lambda(\sigma_1^2)}\right)(\sigma_2^2 - \sigma_1^2) + r(\sigmax)\log\left(\dfrac{\lambda(\sigma_2^2)}{\lambda(\sigma_1^2)} \right).  \\
\end{align*}

\end{proof}

\section{Proofs related to the convergence of the algorithm \texorpdfstring{(\cref{ssec:cv})}{}}
\label[appendix]{sec:proofcvalgo}

\subsection{A useful lemma}
Before presenting the proof of the main theorem, we recall a lemma which will be essential to the proof of convergence in \Cref{thm:iterates_cv,thm:renoised_iterates_cv}.

\begin{lemma}\label{lem:suitecv}
    Let $(\beta_k)$ be a sequence of non-negative real numbers satisfying
    \begin{equation}
        \beta_{k + 1} \leq (1 - \gamma_k)\beta_k + \gamma_k \delta_k
    \end{equation}
    where $(\gamma_k), (\delta_k)$ satisfy the conditions:
    \begin{enumerate}[label=(\roman*)]
        \item $(\gamma_k)\subset [0,1]$ and $\sum_{k}\gamma_k = +\infty$
        \item $\delta_k \underset{k\to+\infty}{\longrightarrow} 0$
    \end{enumerate}
    Then $\lim_k \beta_k = 0$.
\end{lemma}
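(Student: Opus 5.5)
The approach is the classical one for perturbed contraction inequalities: show that $\beta_k$ eventually enters, and stays in, any neighbourhood of $0$. Throughout, the hypothesis $\beta_{k+1}\le(1-\gamma_k)\beta_k+\gamma_k\delta_k$ is read as saying that $\beta_{k+1}$ is at most a convex combination of $\beta_k$ and $\delta_k$, since $\gamma_k\in[0,1]$. Fix $\varepsilon>0$. By (ii), choose $K$ such that $\delta_k\le\varepsilon$ for all $k\ge K$. Then for $k\ge K$,
\begin{equation*}
\beta_{k+1}-\varepsilon\le(1-\gamma_k)(\beta_k-\varepsilon)+\gamma_k(\delta_k-\varepsilon)\le(1-\gamma_k)(\beta_k-\varepsilon).
\end{equation*}

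The key step is to iterate this with the positive part. Set $a_k:=\max(\beta_k-\varepsilon,0)$. Since $1-\gamma_k\ge0$, the previous display gives $a_{k+1}\le(1-\gamma_k)a_k$ for $k\ge K$. Hence, for $n\ge K$,
\begin{equation*}
a_n\le a_K\prod_{k=K}^{n-1}(1-\gamma_k)\le a_K\exp\Big(-\sum_{k=K}^{n-1}\gamma_k\Big),
\end{equation*}
where we used $1-t\le e^{-t}$. The divergence $\sum_k\gamma_k=+\infty$ from (i) is unaffected by discarding finitely many terms, so $a_n\to0$. This gives $\limsup_n\beta_n\le\varepsilon$, and as $\varepsilon$ was arbitrary and $\beta_n\ge0$, we conclude $\beta_n\to0$.

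I expect no real obstacle here. The only point requiring care is the positive-part trick: the factor $1-\gamma_k$ must be nonnegative so that multiplying preserves the inequality. This is exactly why the hypothesis $\gamma_k\le1$ is required, and without it the argument fails.
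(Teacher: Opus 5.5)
Your proof is correct. The paper gives no argument of its own; it cites the lemma as a special case of Xu (2002, Lemma 2.5), and your $\varepsilon$-shift followed by $\prod_{k\ge K}(1-\gamma_k)\le\exp\bigl(-\sum_{k\ge K}\gamma_k\bigr)\to0$ is essentially the standard proof of that result specialized to this setting. The positive-part device is harmless but not needed: multiplying an inequality by $1-\gamma_k\ge0$ preserves it whatever the sign of $\beta_k-\varepsilon$, so you can iterate $\beta_{k+1}-\varepsilon\le(1-\gamma_k)(\beta_k-\varepsilon)$ directly.
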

This is a specific case of the more general result of \citet[Lemma 2.5]{xu2002iterative}.

\subsection{Convergence of the deterministic iterates}\label{app:pf_gamma}

\iteratescv*

\begin{proof}
The first goal of this proof is to apply \Cref{lem:suitecv} 
to the sequence $\beta_{k+1}:=\lnorm{x_{k + 1} - \xstar[k]}$.

We first recall that the step $k$ of \eqref{eq:genmmse} is a gradient descent step on the objective $F_{\sigma_k}$ at $x_k$ with step size $\alpha_k$. Moreover, by \Cref{prop:conditionning}, $F_{\sigma_k}$ is $\mu_{\sigma_k}-$strongly convex, $L_{\sigma_k}-$smooth and twice differentiable, with $\mu_{\sigma_k}=\lambda_k$ and $L_{\sigma_k}=L_f+\frac{\tau}{\sigma_k^2}+\lambda_k$.
Hence, since $0<\alpha_k<1<\frac{1}{\mu_{\sigma_k}}=\frac{1}{\lambda_k}$, a classical result on gradient descent \citep[Section 1.2.3]{nesterov2013introductory} yields
\begin{equation}\label{eq:x_kgraddescent}
    \lnorm{x_{k + 1} - \xstar[k]} \leq \max(1 - \alpha_k\lambda_k, \alpha_k L_{\sigma_k} - 1)\lnorm{x_k - \xstar[k]}
\end{equation}
Since with our choice of $\alpha_k$, $\alpha_k L_{\sigma_k}-1\underset{k\to + \infty}{\longrightarrow}0$, and $1-\alpha_k \lambda_k\underset{k\to + \infty}{\longrightarrow} 1$, the inequality $\alpha_k L_k -1 \leq 1-\alpha_k \lambda_k$ holds for all sufficiently large $k$. 
Up to starting the analysis at a later iteration, we may assume that it holds for all $k\geq 0$. 
Thus, by triangular inequality,
\begin{equation}\label{eq:inegtrig}
    \lnorm{x_{k + 1} - \xstar[k]} \leq (1 - \alpha_k\lambda_k)\lnorm{x_k - \xstar[k - 1]} + (1 - \alpha_k\lambda_k)\lnorm{\xstar[k - 1] - \xstar[k]},
\end{equation}
and, using \Cref{prop:reglips}, we can bound $\lnorm{\xstar[k - 1] - \xstar[k]}$ by
\begin{equation}\label{eq:borne_pseudolip}
    \lnorm{\xstar[k - 1] - \xstar[k]} \leq \left(C_1+ \dfrac{C_3}{\lambda_k}\right)(\sigma_{k-1}^2 - \sigma_k^2) + C_2\log\left(\dfrac{\lambda_{k-1}}{\lambda_k} \right),
\end{equation}
where $C_1,C_2,C_3>0$.
To ease the notation, we denote:
\begin{equation}
    B_{k} = C_1+ \dfrac{C_3}{\lambda_{k}}\quad \text{ and }\quad D_{k}=C_2\log\left(\dfrac{\lambda_{k-1}}{\lambda_k} \right).
\end{equation}
Combining  \eqref{eq:inegtrig} and \eqref{eq:borne_pseudolip}, and denoting $\beta_{k + 1} := \lnorm{x_{k + 1} - \xstar[k]}$, we obtain the following recursive relation
\begin{equation}
    \beta_{k + 1} \leq (1 - \alpha_k\lambda_k)\beta_k + \underbrace{(1 - \alpha_k\lambda_k)\left[B_k(\sigma_{k - 1}^2 - \sigma_k^2) + D_k\right]}_{=:\omega_k}
\end{equation}
We recognize the recursive relation considered in \Cref{lem:suitecv} with $\gamma_k = \alpha_k\lambda_k$ and $\delta_k = \frac{\omega_k}{\alpha_k\lambda_k}$. By \Cref{lem:suitecv}, we have $\beta_k \longrightarrow 0$ under the following conditions:
\begin{equation}
    \sum_{k = 1}^{+\infty}\alpha_k\lambda_k = +\infty, \quad \dfrac{\omega_k}{\alpha_k\lambda_k} \underset{k\to+\infty}{\longrightarrow} 0
\end{equation}
Using, $\alpha_k \sim \frac{\sigma_k^2}{\tau}$ and basic equivalences, 
the previous conditions are equivalent to the constraints \emph{(i)}, \emph{(ii)} and \emph{(iii)}. 
Under these constraints, we thus have $\beta_{k + 1} = \lnorm{x_{k + 1} - \xstar[k]}\underset{k\to +\infty}{\longrightarrow}0$. By \Cref{prop:lim_minim}, the sequence $(\xstar[k])_k$ is bounded. Since
\begin{equation}
\lnorm{x_k} \leq \lnorm{x_k-\xstar[k-1]}+\lnorm{\xstar[k-1]} =\beta_k+\lnorm{\xstar[k-1]},
\end{equation}
the sequence $(x_k)_k$ is also bounded.
Moreover by triangular inequality, for any $x^*\in \R^d$ :
\begin{equation}\label{eq:triang_cluster}
    \lnorm{\xstar[k-1] - x^*}-\lnorm{x_k - \xstar[k-1]}\leq \lnorm{x_k - x^*} \leq \lnorm{x_k - \xstar[k-1]} + \lnorm{\xstar[k-1] - x^*}.
\end{equation}
By letting $x^*$ be a cluster point of $(x_k)_k$ (resp. $(x_{\sigma_k})_k$ and passing to the limit in the left (resp. right) hand side of \eqref{eq:triang_cluster} with a corresponding convergent subsequence, we get that the cluster points of $(x_k)_k$ are exactly the cluster points of $(\xstar[k])_k$. 
By \Cref{prop:lim_minim}, they are minimizers of $F$, and if in addition we suppose $\frac{\sigma_k^2}{\lambda_k}\underset{k\to +\infty}{\longrightarrow}0$, then $(\xstar[k])_k$ converges to $x^* = x^\dagger$ and so does $(x_k)_k$.

\end{proof}

\subsection{Convergence of Renoised-\genmmse{}}\label{app:pf_renoised}
\renoisediteratescv*

\begin{proof}
For ease of notation, let $\varepsilon \sim \NN{}$. For $\sigma>0$, the function $G_{\sigma}: x \in \R^d \mapsto f_\sigma(x) - \tau \E[\log p_\sigma(x + \sigma\varepsilon)]$ is a smoothing of $F_{\sigma}$. 

As the proof of \citet[Lemma 5]{pesme2025map} shows that $\lnorm{\nabla^3 \log p_{\sigma} }_F \leq \lnorm{\nabla^3 \log p}_F\leq M$, by the dominated convergence theorem, the mapping $x\mapsto -\tau \E[\log p_\sigma(x + \sigma\varepsilon)]$ is well defined and thrice continuously differentiable, and its successive derivatives write
\begin{equation}
    -\tau\E[\nabla \log p_\sigma(\cdot + \sigma \varepsilon)], \quad     -\tau\E[\nabla^2 \log p_\sigma(\cdot + \sigma \varepsilon)],  \quad    -\tau \E[\nabla ^3\log p_\sigma(\cdot + \sigma \varepsilon)],
\end{equation}
so that it inherits convexity and $\frac{\tau}{\sigma^2}-$smoothness from $-\tau\log p_\sigma$.
Thus, $G_\sigma$ is $L_{\sigma}-$smooth and $\lambda(\sigma^2)-$strongly convex like $F_\sigma$, with $L_\sigma = \lambda(\sigma^2) + \frac{\tau}{\sigma^2}+L_f$. 

Our goal is to prove an inequality similar to \eqref{eq:x_kgraddescent}, in order to show that $\lnorm{\tx_{k+1}-\xstar[k]}\longrightarrow 0$. As $\xstar$ is a minimizer of $F_\sigma$ and the iterates are obtained by gradient descent steps on the $G_{\sigma_k}$, the descent lemma does not apply directly.

We first recall some basic inequalities that will be useful in the proof.
\begin{itemize}
    \item Young's inequality: for $a, b\in\R$ and $\delta > 0$,
        \begin{equation}
            |ab|\leq \dfrac{\delta a^2}{2} + \dfrac{b^2}{2\delta}\,
            \tag{A}\label{eq:ineqa}
        \end{equation}
    \item As a consequence, using Cauchy-Schwarz, for any $x, y, z\in \R^d$ and $\delta > 0$
        \begin{equation}
            \lnorm{x - y}^2 \leq (1 + \delta)\lnorm{x - z}^2 + \left(1 + \frac{1}{\delta}\right)\lnorm{y - z}^2\,.\tag{B}\label{eq:ineqb}
        \end{equation}
    \item For $h$ an $L$-smooth and $\mu$-strongly convex function, and $x,y\in\R^d$ \citep[Thm. 2.1.12]{nesterov2013introductory}
        \begin{equation}
            \langle\nabla h(x) - \nabla h(y), x - y\rangle \geq \dfrac{\mu L}{\mu + L}\lnorm{x - y}^2 + \dfrac{1}{\mu + L}\lnorm{\nabla h(x) - \nabla h(y)}^2\,.\tag{C}\label{eq:ineqc}
        \end{equation}
\end{itemize}
By definition of our iterates, 
\begin{align}
    \lnorm{\tx_{k + 1} - \xstar[k]}^2&=\lnorm{\tx_k - \alpha_k\nabla G_{\sigma_k}(\tx_k) - \xstar[k]}^2 \notag \\
    &= \lnorm{\tx_k - \xstar[k]}^2 - \underbrace{2\alpha_k\langle\nabla G_{\sigma_k}(\tx_k), \tx_k - \xstar[k]\rangle}_{\circled{1}} + \alpha_k^2\underbrace{\lnorm{\nabla G_{\sigma_k}(\tx_k)}^2}_{\circled{2}}\,.\label{eq:strgineq}
\end{align}

We first bound \circled{1}. 
By Cauchy-Schwarz inequality, 
\begin{equation}
    \langle\nabla G_{\sigma_k}(\tx_k), \tx_k - \xstar[k]\rangle \geq \langle\nabla G_{\sigma_k}(\tx_k)-\nabla G_{\sigma_k}(\xstar[k]), \tx_k - \xstar[k]\rangle - \lnorm{G_{\sigma_k}(\xstar[k]}\lnorm{\tx_k - \xstar[k]}.
\end{equation}
Let $r_k := \lnorm{\tx_k - \xstar[k]}$ and $\Delta_k:= \lnorm{\nabla G_{\sigma_k}(\tx_k) - \nabla G_{\sigma_k}(\xstar[k])}$. Then, using \eqref{eq:ineqc} on $G_{\sigma_k}$:
\begin{equation}
    \langle\nabla G_{\sigma_k}(\tx_k) - \nabla G_{\sigma_k}(\xstar[k]),\tx_k - \xstar[k] \rangle \geq \dfrac{\lambda(\sigma_k^2) L_{\sigma_k}}{\lambda(\sigma_k^2) + L_{\sigma_k}}r_k^2 + \dfrac{1}{\lambda(\sigma_k^2) + L_{\sigma_k}}\Delta_k^2.
\end{equation}
For notational convenience, from now on $\lambda_k := \lambda(\sigma_k^2)$ and $L_k:= L_{\sigma_k}$.
Combining the two preceding inequalities and multiplying both sides by $2\alpha_k$, we obtain
\begin{align}
     2\alpha_k\langle\nabla G_{\sigma_k}(\tx_k),\tx_k - \xstar[k]\rangle &\geq \dfrac{2\alpha_k\lambda_kL_k}{\lambda_k + L_k}r_k^2 + \dfrac{2\alpha_k}{\lambda_k + L_k}\Delta_k^2 - 2\alpha_k\lnorm{\nabla G_{\sigma_k}(\xstar[k])}r_k \\
     &\stackclap{\circled{a}}{\geq} \left(\dfrac{2\alpha_k\lambda_kL_k}{\lambda_k + L_k} - \dfrac{\delta}{2}\right)r_k^2 + \dfrac{2\alpha_k}{\lambda_k + L_k}\Delta_k^2 - \dfrac{2\alpha_k^2}{\delta}\lnorm{\nabla G_{\sigma_k}(\xstar[k])}^2 \label{eq:renoised_scalar_term}
\end{align}
where \circled{a} holds for any $\delta > 0$ by \eqref{eq:ineqa} 
with $a =  r_k= $ and $b = \alpha_k \Vert \nabla G_{\sigma_k}(\xstar[k]) \Vert$.

Now, we bound \circled{2} using \eqref{eq:ineqb} with $\delta = \frac{1}{2}$:
\begin{equation}
    \label{eq:perfect_oracle_gradient_bound}
    \lnorm{\nabla G_{\sigma_k}(\tx_k)}^2 = \lnorm{\nabla G_{\sigma_k}(\tx_k) - \nabla G_{\sigma_k}(\xstar[k]) + \nabla G_{\sigma_k}(\xstar[k])}^2 \leq \dfrac32\Delta_k^2 + 3\lnorm{\nabla G_{\sigma_k}(\xstar[k])}^2\,.
\end{equation}

Finally, we can gather the two bounds \circled{1}-\circled{2}:
\begin{align}
    \lnorm{\tx_{k + 1} - \xstar[k]}^2 &\leq \left(1 - 2\dfrac{\alpha_k\lambda_k L_k}{\lambda_k + L_k} + \dfrac{\delta}{2}\right)r_k^2\notag\\
    & + \underbrace{\alpha_k\left(\dfrac32\alpha_k - \dfrac{2}{\lambda_k + L_k}\right)}_{\leq 0}\Delta_k^2 \notag\\ & + \left(3\alpha_k^2 + \dfrac{2\alpha_k^2}{\delta}\right)\lnorm{\nabla G_{\sigma_k}(\xstar[k])}^2
\end{align}
As $\alpha_k(\lambda_k + L_k) \underset{k\to +\infty}{\longrightarrow} 1$, up to starting at a later iteration, the coefficient of $\Delta_k^2$ is negative and can be dropped. Let $h_k = \frac{\alpha_k\lambda_k L_k}{\lambda_k + L_k}$, and choose $\delta = 2h_k$, so that
\begin{equation}
    \label{eq:stochbound}
\lnorm{\tx_{k + 1} - \xstar[k]}^2 \leq \left(1 - h_k\right)r_k^2 + \left(3\alpha_k^2 + \dfrac{\alpha_k^2}{h_k}\right)\lnorm{\nabla G_{\sigma_k}(\xstar[k])}^2.
\end{equation}
In order to conclude, we need to bound $\lnorm{\nabla G_{\sigma_k}(\xstar[k])}$. By Taylor-Lagrange's theorem at order 2, for any $\epsilon = (\epsilon^i)_{1\leq i \leq d}  \in \R^d$,
\begin{equation}
    \nabla \log p_{\sigma_k}(\xstar[k] + \sigma_k \epsilon) = \nabla \log p_{\sigma_k}(\xstar[k]) + \sigma_k\nabla^2 \log p_{\sigma_k}(\xstar[k])\epsilon + R_k(\epsilon)\,,
\end{equation}
where, for $1\leq i \leq d$,
\begin{equation}
    (R_k(\epsilon))_i=\sigma_k^2  \sum_{1\leq j, l \leq d} \int_0^1(1-t)\partial_{i,j,k}\log p_{\sigma_k}(\xstar[k]+t\sigma_k \epsilon)\epsilon^l\epsilon^j\,dt.
\end{equation}
As already mentioned, the proof of \citet[Lemma 5]{pesme2025map} establishes that $\lnorm{\nabla^3 \log p_{\sigma_k} }_F \leq \lnorm{\nabla^3 \log p}_F\leq M$. Thus, by the Cauchy-Schwarz inequality:
\begin{equation}
    |(R_k(\epsilon))_i| \leq \dfrac{\sigma_k^2}{2}  \sum_{1\leq j, l \leq d} |\partial_{i,j,k}\log p_{\sigma_k}(\xstar[k]+t\sigma_k \epsilon)|\lnorm{\epsilon}_\infty^2 \leq  \dfrac{d\sigma_k^2}{2}\sqrt{M}\lnorm{\epsilon}^2_\infty.
\end{equation}
so that $\lnorm{R_k(\epsilon)}^2\leq \frac{d^3M\sigma_k^4}{4}\lnorm{\epsilon}^4_\infty$.
Since $\nabla F_{\sigma_k}(\xstar[k])=\nabla f_{\sigma_k}(\xstar[k])- \tau\nabla \log p_{\sigma_k}(\xstar[k])=0$ by the optimality of $\xstar[k]$, and $\E_\varepsilon[\nabla^2\log p_{\sigma_k}(\xstar[k])\varepsilon]=0$ by the linearity of expectation, we obtain:
\begin{align}
    \label{eq:perfect_oracle_gradient_critical_point}
    \lnorm{\nabla G_{\sigma_k}(\xstar[k])}^2 &= \lnorm{\nabla f_{\sigma_k}(\xstar[k])- \tau\E_{\varepsilon}[\nabla \log p_{\sigma_k}(\xstar[k] + \sigma_k \varepsilon)]}^2 \notag\\
    &=\tau^2 \lnorm{\E_{\varepsilon}[R_k(\varepsilon)]}^2 \notag \\
    &\leq \tau^2 \E_{\varepsilon}[\lnorm{R_k(\varepsilon)}^2]\notag \\
    &\leq \dfrac{d^3M\sigma_k^4 \tau^2}{4}\E_{\varepsilon}[\lnorm{\varepsilon}^4_\infty],
\end{align}
where the first inequality follows from Jensen's. Let $B:=\dfrac{d^3M\tau^2}{4}\E_{\varepsilon}[\lnorm{\varepsilon}^4_\infty]$. Plugging this into \eqref{eq:stochbound} we obtain,
\begin{equation}
    \lnorm{\tx_{k + 1} - \xstar[k]}^2 \leq \left(1 - h_k\right)r_k^2 + \left(3\alpha_k^2 + \dfrac{\alpha_k^2}{h_k}\right)B\sigma_k^4\,.
\end{equation}
Since $ h_k=\frac{\alpha_k\lambda_kL_k}{\lambda_k+L_k}\geq \frac{\alpha_k\lambda_k}{2}$ and $1>\alpha_k\lambda_k$, we have $(3+\frac{1}{h_k})\leq \frac{5}{\alpha_k \lambda_k}$, and
\begin{equation}
    \label{eq:simplified_renoised_bound}
    \lnorm{\tx_{k + 1} - \xstar[k]}^2 \leq \left(1 - h_k\right)r_k^2 +\dfrac{5\alpha_k B\sigma_k^4}{\lambda_k}\,.
\end{equation}
In order to derive a recursive relation on $\beta_{k + 1} := \lnorm{\tx_{k+1} - \xstar[k]}^2$, we need to make $\beta_k=\lnorm{\tx_k - \xstar[k-1]}^2$ appear on the right. To do so, we apply \eqref{eq:ineqb} to $r_k^2$ with $\delta = \dfrac{h_k}{2 - 2h_k}$, which yields
\begin{align}
\beta_{k+1} &\leq \left(1 - \dfrac{h_k}{2}\right)\beta_k + \left(1 - h_k\right)\left(\dfrac{2}{h_k} - 1\right)\lnorm{\xstar[k]-\xstar[k-1]}^2 +\dfrac{5\alpha_k B\sigma_k^4}{\lambda_k}\\
&\leq \left(1 - \dfrac{h_k}{2}\right)\beta_k + \underbrace{\dfrac{2}{h_k}\lnorm{\xstar[k]-\xstar[k-1]}^2 +\dfrac{5\alpha_k B\sigma_k^4}{\lambda_k}}_{:=h_k \delta_k /2}.
\end{align}
Applying \Cref{lem:suitecv}, $\beta_k \underset{k\to+\infty}{\longrightarrow} 0$ under the following constraints:
\begin{equation}
    \sum_{k = 0}^{+\infty}h_k = +\infty,\quad \delta_k \underset{k\to+\infty}{\longrightarrow} 0.
\end{equation}
Since $h_k\sim \dfrac{\sigma_k^2}{\tau}\lambda_k$, we recover condition \emph{(i)} of \Cref{thm:iterates_cv} from $\sum_{k = 0}^{+\infty}h_k = +\infty$. By definition of $\delta_k$,
\begin{equation}
    \delta_k = \dfrac{4}{h_k^2}\lnorm{\xstar[k]-\xstar[k-1]}^2 + \dfrac{10\alpha_k B\sigma_k^4}{h_k\lambda_k}.
\end{equation}

By \Cref{prop:reglips}, there exist $C_1,C_2, C_3>0$ such that 
\begin{align}
    \dfrac{2}{h_k}\lnorm{\xstar[k]-\xstar[k-1]}&\leq \left(C_1 + \dfrac{C_3}{\lambda_k}\right)\dfrac{\sigma_{k-1}^2 - \sigma_k^2}{2h_k} + \frac{C_2}{2h_k}\log\left(\dfrac{\lambda_{k-1}}{\lambda_k}\right)\\
    &\underset{k\to + \infty}{\sim}\underbrace{\dfrac{C_3\tau(\sigma_{k-1}^2-\sigma_k^2)}{2\sigma_k^2 \lambda_k^2}}_{\longrightarrow 0,\quad \emph{(ii)}}+\underbrace{\frac{C_2\tau}{2\sigma_k^2\lambda_k}\log\left(\dfrac{\lambda_{k-1}}{\lambda_k}\right)}_{\longrightarrow 0,\quad \emph{(iii)}}.
\end{align}
so that the first term converges to $0$ under conditions \emph{(ii)} and \emph{(iii)}.
Moreover, provided that $\frac{\sigma_k^2}{\lambda_k}\to 0$,
\begin{equation}
    \dfrac{10\alpha_k B\sigma_k^4}{h_k\lambda_k} \underset{k\to+\infty}{\sim}\dfrac{10 B \sigma_k^4 }{\lambda_k^2}
\end{equation}
converges to $0$. By \Cref{lem:suitecv}, $\beta_{k+1} =\lnorm{x_{k + 1} - \xstar[k]}^2$ converges to $0$. Using \Cref{prop:lim_minim} concludes the proof.

\end{proof}

\subsection{Stochastic-\genmmse{} and its convergence}
\label{ssec:stochastic_mmse}
In this section, we investigate a stochastic version of \genmmse{}.
In this setting, we do not assume that we have access to $\E_{\varepsilon\sim\NN{}}\left[\mmse (x + \sigma\varepsilon)\right]$; instead, at each iteration, we use a single noise sample to estimate the expectation.

\begin{tcolorbox}[colback=WarmGray!5,colframe=SlateGray,boxrule=0.5pt,arc=4pt]
\textbf{Stochastic-\genmmse{}:} 
    Given annealing noise level sequences $(\sigma_k)_k, (\nu_k)_k$, an annealing step size sequence $(\alpha_k)_k$ and an arbitrary initialization $x_0\in\R^d$, we define the following stochastic iterates  
    \begin{equation}
        \label{eq:stoch_genmmse}
        \tx_{k + 1} = \alpha_k(\tx_k - \nabla f_{\sigma_k}(\tx_k)) + (1 - \alpha_k)\mmse[k](\tx_k + \nu_k\varepsilon_k),
    \end{equation}
    where the $\varepsilon_k \sim \NN{}$ are independent.
\end{tcolorbox}

In this scheme, the noise level $\nu_k$ added to the input of the denoiser is decoupled from the denoiser's noise level $\sigma_k$. 
We prove convergence of such scheme under the same conditions as above, together with an additional condition on $\nu_k$.

\begin{restatable}[Convergence of Stochastic-\genmmse{}]{theorem}{stochasticiteratescv}\label{thm:stochastic_iterates_cv}
      Let $(\sigma_k)_k, (\nu_k)_k\in \R_+^\N$ be sequences strictly decreasing to $0$ and, for any $k\geq 0$, let $\lambda_k:=\lambda(\sigma_k^2)\in (0,1)$ and $\alpha_k := \frac{\sigma_k^2}{\sigma_k^2 + \tau}$. Under constraints (i), (ii), (iii) of \Cref{thm:iterates_cv}, $\frac{\sigma_k^2}{\lambda_k}\to 0$ and $\nu_k^2=o(\sigma_k^2\lambda_k)$, the stochastic iterates $(\tx_k)$ from \eqref{eq:stoch_genmmse} converges in mean-square to $x^\dagger$, i.e,
    \begin{equation}
        \E\lnorm{\tilde x_k - x^\dagger}^2\underset{k\to+\infty}{\longrightarrow} 0\,.
    \end{equation}
\end{restatable}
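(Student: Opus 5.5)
We follow the structure of the proof of \Cref{thm:renoised_iterates_cv}, now tracking the second moment $\beta_{k+1}:=\E\lnorm{\tx_{k+1}-\xstar[k]}^2$. First, using Tweedie's formula \eqref{eq:tweedie} and $\alpha_k=\sigma_k^2/(\sigma_k^2+\tau)$, we rewrite one iteration of \eqref{eq:stoch_genmmse} as a perturbed gradient step on $F_{\sigma_k}$:
\begin{equation*}
\tx_{k+1} = \tx_k - \alpha_k\nabla F_{\sigma_k}(\tx_k) + \alpha_k e_k,
\end{equation*}
where $e_k := \tau\big(\nabla\log p_{\sigma_k}(\tx_k+\nu_k\varepsilon_k)-\nabla\log p_{\sigma_k}(\tx_k)\big) + \dfrac{\tau\nu_k}{\sigma_k^2}\varepsilon_k$. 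This uses $(1-\alpha_k)\sigma_k^2=\alpha_k\tau$, and the extra $\nu_k\varepsilon_k$ term comes from $\mmse[k](z)=z+\sigma_k^2\nabla\log p_{\sigma_k}(z)$ evaluated at $z=\tx_k+\nu_k\varepsilon_k$. Conditionally on $\mathcal F_k=\sigma(\varepsilon_0,\dots,\varepsilon_{k-1})$, the error $e_k$ splits into a zero-mean part and a bias $b_k:=\E[e_k\mid\mathcal F_k]$.

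For the bias, a second-order Taylor expansion as in the renoised proof (the first-order term $\nabla^2\log p_{\sigma_k}\,\nu_k\varepsilon_k$ has zero mean, and the third derivative is bounded by $M$ thanks to \citet[Lemma 5]{pesme2025map}) gives $\lnorm{b_k}\le C\tau\nu_k^2$. For the variance, the $\tau/\sigma_k^2$-smoothness of $-\log p_{\sigma_k}$ gives $\E[\lnorm{e_k}^2\mid\mathcal F_k]\le C\tau^2\nu_k^2/\sigma_k^4$.

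Next we expand
\begin{equation*}
\lnorm{\tx_{k+1}-\xstar[k]}^2=\lnorm{x^{\mathrm{GD}}_{k+1}-\xstar[k]}^2+2\alpha_k\langle x^{\mathrm{GD}}_{k+1}-\xstar[k],e_k\rangle+\alpha_k^2\lnorm{e_k}^2,
\end{equation*}
where $x^{\mathrm{GD}}_{k+1}=\tx_k-\alpha_k\nabla F_{\sigma_k}(\tx_k)$ is $\mathcal F_k$-measurable. The contraction \eqref{eq:x_kgraddescent} then gives $\lnorm{x^{\mathrm{GD}}_{k+1}-\xstar[k]}\le(1-\alpha_k\lambda_k)r_k$ with $r_k=\lnorm{\tx_k-\xstar[k]}$. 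Taking conditional expectations, the cross term reduces to $2\alpha_k\langle\cdot,b_k\rangle$. By Young's inequality \eqref{eq:ineqa} with weight $\alpha_k\lambda_k/2$, it is at most $\tfrac{\alpha_k\lambda_k}{2}r_k^2+\tfrac{2\alpha_k}{\lambda_k}\lnorm{b_k}^2$. Taking total expectations yields
\begin{equation*}
\E r_{k+1}'^2\le\Big(1-\tfrac{\alpha_k\lambda_k}{2}\Big)\E r_k^2+\frac{C\alpha_k\nu_k^4}{\lambda_k}+C\alpha_k^2\frac{\nu_k^2}{\sigma_k^4},
\end{equation*}
where $r'_{k+1}=\lnorm{\tx_{k+1}-\xstar[k]}$. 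As in the renoised proof, we then switch from $r_k$ to $\sqrt{\beta_k}$ using \eqref{eq:ineqb} with $\delta\asymp\alpha_k\lambda_k$. This costs an additive $\tfrac{C}{\alpha_k\lambda_k}\lnorm{\xstar[k]-\xstar[k-1]}^2$, which is controlled by \Cref{prop:reglips} exactly as before. We thus obtain $\beta_{k+1}\le(1-\gamma_k)\beta_k+\gamma_k\delta_k$ with $\gamma_k\asymp\sigma_k^2\lambda_k/\tau$, so $\sum\gamma_k=\infty$ by (i).

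It remains to show $\delta_k\to0$. The path term vanishes by (ii)–(iii). The bias term gives $\nu_k^4/\lambda_k^2$, which tends to $0$ because $\nu_k^2=o(\sigma_k^2\lambda_k)$ and $\sigma_k^2/\lambda_k\to0$. The variance term gives $\alpha_k^2\nu_k^2/(\sigma_k^4\alpha_k\lambda_k)\asymp\nu_k^2/(\tau\sigma_k^2\lambda_k)\to0$, precisely by the assumption $\nu_k^2=o(\sigma_k^2\lambda_k)$. \Cref{lem:suitecv} then gives $\beta_k\to0$. Finally, $\E\lnorm{\tx_k-x^\dagger}^2\le2\beta_k+2\lnorm{\xstar[k-1]-x^\dagger}^2\to0$ by \Cref{prop:lim_minim}(ii).

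The main obstacle is the variance term. The input noise is amplified by the factor $\tau/\sigma_k^2$ coming from the score, and this must be beaten by the small step $\alpha_k^2$ relative to the contraction rate $\alpha_k\lambda_k$. Getting exactly the condition $\nu_k^2=o(\sigma_k^2\lambda_k)$ requires keeping the zero-mean part out of the cross term, which is why we condition on $\mathcal F_k$ rather than bounding $\lnorm{e_k}$ crudely.
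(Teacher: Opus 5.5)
Your argument is correct, but you organize the one-step estimate differently from the paper. The paper splits the update into a deterministic gradient step on the renoised surrogate $G_k = f_{\sigma_k} - \tau\E_\varepsilon[\log p_{\sigma_k}(\cdot+\nu_k\varepsilon)]$ plus the conditionally centered term $\alpha_k\tau(\nabla\log p_{\sigma_k}(\tx_k+\nu_k\varepsilon_k)-\nabla\bar g_k(\tx_k))+(1-\alpha_k)\nu_k\varepsilon_k$.

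Because $\xstar[k]$ does not minimize $G_k$, the paper has to rerun the Renoised-GAMMA machinery: co-coercivity \eqref{eq:ineqc}, Young's inequality, and the bias bound $\lnorm{\nabla G_k(\xstar[k])}^2\le B\nu_k^4$. It controls the fluctuation with the Gaussian Poincaré inequality plus a bound on $\nabla^2\log p_{\sigma_k}$ around $\xstar[k]$, obtained from the third-derivative bound and boundedness of the Hessian along the minimizer path. This yields an $r_k$-dependent variance $\nu_k^2C(1+r_k^2)$ and a cross term with $\varepsilon_k$. Both must then be absorbed into the contraction using $\nu_k^2=o(\sigma_k^2\lambda_k)$.

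You instead keep the exact step on $F_{\sigma_k}$, so the contraction \eqref{eq:x_kgraddescent} toward $\xstar[k]$ applies verbatim. You lump everything else into a biased noise $e_k$. Its conditional bias is $O(\tau M\nu_k^2)$ and its conditional second moment is $O(\tau^2 d\,\nu_k^2/\sigma_k^4)$, both uniformly in $\tx_k$. Conditioning removes the centered part of the cross term, so no absorption step is needed.

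Your route is shorter and more elementary: it needs no Poincaré inequality, no co-coercivity, and no Hessian control along the path. The paper's route presents the scheme as an unbiased stochastic version of a renoised step. It also uses a local, Hessian-based variance bound instead of the worst-case $1/\sigma_k^2$ Lipschitz constant. That refinement gains nothing here, because in both arguments the dominant contribution is the additive $(1-\alpha_k)\nu_k\varepsilon_k$ term. Its variance is of order $\nu_k^2 d$, and it alone dictates the condition $\nu_k^2=o(\sigma_k^2\lambda_k)$.

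One small slip: it is $-\tau\log p_{\sigma_k}$, not $-\log p_{\sigma_k}$, that is $\tau/\sigma_k^2$-smooth. The bound you derive from it is nonetheless the right one.
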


\begin{proof}

For $k\geq 0$, we define $\mathcal{F}_k$ as the $\sigma-$algebra generated by the random variables $\varepsilon_0, \cdots , \varepsilon_{k-1}$ and denote by $\E_k$ the conditional expectation over $\mathcal{F}_k$. 
By definition of our iterates, for any $k\geq 0$
\begin{equation}
    \tx_{k + 1} = \tx_k - \alpha_k\left( \nabla f_{\sigma_k}(\tx_k)-\tau\nabla \log p_{\sigma_k}(\tx_k+\nu_k \varepsilon_k) \right) + (1 - \alpha_k)\nu_k\varepsilon_k,
\end{equation}
so that $\tx_k$ is $\mathcal{F}_k-$measurable.
We define the following functions: 
\begin{align}
    F_k:=F_{\sigma_k}:x\in\R^d &\mapsto f_{\sigma_k}(x) - \tau\log p_{\sigma_k}(x)\,, \\
    G_k:x\in\R^d &\mapsto f_{\sigma_k}(x) - \tau \E_{\varepsilon\sim\NN{}}\left[\log p_{\sigma_k}(x + \nu_k\varepsilon)\right]\,,\\
    \bar{g}_k:x\in\R^d&\mapsto\E_{\varepsilon\sim\NN{}}\left[\log p_{\sigma_k}(x + \nu_k\varepsilon)\right].
\end{align}
Properties of $F_k$ were already given in \Cref{prop:conditionning}, and $\xstar[k]$ denotes again its single minimizer. As justified in the proof of \Cref{thm:renoised_iterates_cv}, $G_k$ and $\bar{g}_k$ are respectively twice and thrice continuously differentiable, and $G_k$ inherits $\lambda_k-$strong convexity and $L_k-$smoothness from $F_k$, with $L_k=L_f+\lambda_k +\frac{\tau}{\sigma_k^2}$.
We want to bound the sequence $\tilde{\beta}_k:=\E\lnorm{\tx_k - \xstar[k-1]}^2$. To do so, we  introduce $\beta_k:=\E_{k-1}\lnorm{\tx_k - \xstar[k-1]}^2$.
By definition, 
\begin{align}
& \, \, \tx_{k+1} - \xstar[k] \notag\\
    &=\tx_k - \alpha_k\left( \nabla f_{\sigma_k}(\tx_k)-\tau\nabla \log p_{\sigma_k}(\tx_k+\nu_k \varepsilon_k) \right) + (1 - \alpha_k)\nu_k\varepsilon_k - \xstar[k]\notag\\
    &=\left(\tx_k - \alpha_k \nabla G_k(\tx_k)  - \xstar[k]\right)+ \alpha_k\tau\left(\nabla \log p_{\sigma_k}(\tx_k+\nu_k \varepsilon_k)-\nabla\bar{g}_k(\tx_k) \right) + (1 - \alpha_k)\nu_k\varepsilon_k.
\end{align}
Taking the expectation over $\mathcal{F}_k$ of the squared norm, we obtain
\begin{align}
\beta_{k+1}=&\lnorm{\tx_k - \alpha_k \nabla G_k(\tx_k)  - \xstar[k]}^2 +\alpha_k^2\tau^2 \E_k\lnorm{\nabla \log p_{\sigma_k}(\tx_k+\nu_k \varepsilon_k)-\nabla\bar{g}_k(\tx_k) }^2\notag \\
&+ (1-\alpha_k)^2\nu_k^2d +2(1-\alpha_k)\alpha_k\nu_k \tau \E_k \langle \varepsilon_k, \nabla \log p_{\sigma_k}(\tx_k+\nu_k \varepsilon_k)-\nabla\bar{g}_k(\tx_k) \rangle .
\end{align}
We want to bound the sum on the right.

By adapting the calculations from \eqref{eq:perfect_oracle_gradient_critical_point} to \eqref{eq:simplified_renoised_bound} to a Gaussian smoothing with covariance $\nu_k^2\Id$ instead of $\sigma_k^2\Id$, and denoting $r_k:=\lnorm{\tx_k-\xstar[k]}$ and $h_k := \frac{\alpha_k\lambda_k L_k}{\lambda_k + L_k}$, we have $B>0$ such that:
\begin{equation}
\lnorm{\tx_k - \alpha_k \nabla G_k(\tx_k)  - \xstar[k]}^2 \leq \left(1 - h_k\right)r_k^2 + \left(3\alpha_k^2 + \dfrac{\alpha_k^2}{h_k}\right)B\nu_k^4.
\end{equation}
Again, since $ h_k=\frac{\alpha_k\lambda_kL_k}{\lambda_k+L_k}\geq \frac{\alpha_k\lambda_k}{2}$ and $1>\alpha_k\lambda_k$, we have $(3+\frac{1}{h_k})\leq \frac{5}{\alpha_k \lambda_k}$, and
\begin{equation}
\lnorm{\tx_k - \alpha_k \nabla G_k(\tx_k)  - \xstar[k]}^2 \leq \left(1 - h_k\right)r_k^2 + \frac{5\alpha_kB\nu_k^4}{\lambda_k}.
\end{equation}
Moreover, $\E_k\lnorm{\nabla \log p_{\sigma_k}(\tx_k+\nu_k \varepsilon_k)-\nabla\bar{g}_k(\tx_k) }^2$ corresponds to the conditional variance of $\nabla \log p_{\sigma_k}(\tx_k+\nu_k \varepsilon_k)$ with respect to $\mathcal{F}_k$. By the Poincaré inequality \citep{brascamp1976extensions}
\begin{equation}
    \E_k\lnorm{\nabla \log p_{\sigma_k}(\tx_k+\nu_k \varepsilon_k)-\nabla\bar{g}_k(\tx_k) }^2 \leq \nu_k^2  \E_k \lnorm{\nabla^2 \log p_{\sigma_k}(\tx_k + \nu_k \varepsilon_k)}^2_F.
\end{equation}
As the proof of \citet{pesme2025map} shows that, for any $\sigma>0$, $\lnorm{\nabla^3\log p_\sigma}_F\leq M$, 
\begin{align}
    &\E_k \lnorm{\nabla^2 \log p_{\sigma_k}(\tx_k + \nu_k \varepsilon_k)-\nabla^2 \log p_{\sigma_k}(\xstar[k])}^2_F\notag\\
    =\,&\E_k \lnorm{\int_0^1 \nabla^3\log p_{\sigma_k}((1-s)\xstar[k]+s(\tx_k + \nu_k \varepsilon_k))\,\dd s \, (\tx_k + \nu_k \varepsilon_k - \xstar[k])}_F^2 \notag\\
    \leq\,& \E_k \int_0^1 \lnorm{\nabla^3\log p_{\sigma_k}((1-s)\xstar[k]+s(\tx_k + \nu_k \varepsilon_k)) \, (\tx_k + \nu_k \varepsilon_k - \xstar[k])}_F^2\,\dd s\notag\\
    \leq\,& M^2 \E_k \lnorm{\tx_k + \nu_k \varepsilon_k -\xstar[k]}^2 \;=\;M^2(\nu_k^2d + r_k^2).
\end{align}
Thus, by \eqref{eq:ineqb} (which is also valid for the Frobenius norm) with $\delta=1$, we obtain
\begin{equation}
    \nu_k^2  \E_k \lnorm{\nabla^2 \log p_{\sigma_k}(\tx_k + \nu_k \varepsilon_k)}^2_F\leq 2\nu_k^2\left(M^2\nu_k^2d + M^2r_k^2) + \lnorm{\nabla^2 \log p_\sigma(\xstar[k])}_F^2\right).
\end{equation}
The sequence $\nu_k$ is clearly bounded, and by continuity of $(\sigma,x)\mapsto \nabla^2 \log p_\sigma(x)$ and boundedness of $(\xstar)_{\sigma\in(0,\sigma_0]}$ (\Cref{prop:lim_minim}), $(\lnorm{\nabla^2 \log p_\sigma(\xstar[k])}_F^2)_k$ is bounded. Therefore, there is $C>0$ such that, for any $k\geq 0$,
\begin{equation}
       \nu_k^2  \E_k \lnorm{\nabla^2 \log p_{\sigma_k}(\tx_k + \nu_k \varepsilon_k)}^2_F\leq \nu_k^2 C(1+r_k^2).
\end{equation}
Then, combining Cauchy-Schwarz and the previous inequality,
\begin{align}
    &|\E_k \langle \varepsilon_k, \nabla \log p_{\sigma_k}(\tx_k+\nu_k \varepsilon_k)-\nabla \bar{g}_k(\tx_k)\rangle| \nonumber \\ & \qquad \qquad\leq \sqrt{\E_k \lnorm{\varepsilon_k}^2\;    \E_k\lnorm{\nabla \log p_{\sigma_k}(\tx_k+\nu_k \varepsilon_k)-\nabla\bar{g}_k(\tx_k) }^2 }\notag\\
    &\qquad \qquad \leq \nu_k\sqrt{dC(1+r_k^2)}\leq\nu_k\sqrt{dC}(1+r_k^2/2).
\end{align}
Putting all together,
\begin{align}
    \beta_{k+1}\leq&\left(1-h_k+\alpha_k^2\nu_k^2\tau^2 C + (1-\alpha_k)\alpha_k \nu_k^2\tau \sqrt{dC}\right)r_k^2 \notag \\
    &+\left(3\alpha_k^2+ \dfrac{\alpha_k^2}{h_k}\right)B\nu_k^4 +(1-\alpha_k)^2\nu_k^2 d + 2(1-\alpha_k)\nu_k^2\alpha_k \tau \sqrt{dC}+C\alpha_k^2\tau^2\nu_k^2
\end{align}
Since $h_k \underset{k\to +\infty}{\sim}\alpha_k\lambda_k\underset{k\to +\infty}{\sim}\frac{\sigma_k^2\lambda_k}{\tau}$ and $\nu_k^2 = o(\sigma_k^2\lambda_k)$, up to considering a later iteration, we suppose that $\alpha_k^2\nu_k^2\tau C + (1-\alpha_k)\alpha_k \nu_k^2\tau \sqrt{dC}\leq \frac{h_k}{2}$, so that
\begin{align}
    \beta_{k+1}\leq&\left(1-\frac{h_k}{2}\right)r_k^2 \notag
    +\frac{5\alpha_kB}{\lambda_k}\nu_k^4 +(1-\alpha_k)^2\nu_k^2 d + 2(1-\alpha_k)\nu_k^2\alpha_k \tau \sqrt{dC}+C\alpha_k^2\tau^2\nu_k^2.
\end{align}

In order to obtain a recursive relation on $\tilde{\beta}_{k + 1}=\E[\beta_{k+1}]$, we need to make $\lnorm{\tx_k - \xstar[k-1]}^2$ appear on the right. To do so, we use \eqref{eq:ineqb} on $r_k^2$ with $\delta = \dfrac{h_k}{4 - 2h_k}$, which gives,

\begin{align}
    \beta_{k+1} &\leq \left(1 - \dfrac{h_k}{4}\right)\lnorm{\tx_k - \xstar[k-1]}^2 + \left(1 - \dfrac{h_k}{2}\right)\left(\dfrac{4}{h_k} - 1\right)\lnorm{\xstar[k]-\xstar[k-1]}^2 \notag\\
    &\quad +\nu_k^2\left(\frac{5\alpha_kB}{\lambda_k}\nu_k^2 +(1-\alpha_k)^2 d + 2(1-\alpha_k)\alpha_k \tau \sqrt{dC}+C\alpha_k^2\tau^2\right)\\
    &\leq \left(1 - \dfrac{h_k}{4}\right)\lnorm{\tx_k - \xstar[k-1]}^2 \notag\\
    &\quad +\underbrace{\dfrac{4}{h_k}\lnorm{\xstar[k]-\xstar[k-1]}^2 +\nu_k^2\left(\frac{5\alpha_kB}{\lambda_k}\nu_k^2 +(1-\alpha_k)^2 d + 2(1-\alpha_k)\alpha_k \tau \sqrt{dC}+C\alpha_k^2\tau^2\right)}_{:=\frac{h_k}{4}\delta_k}.\\
\end{align}

Hence, taking the total expectation, this yields
\begin{equation}
    \tilde{\beta}_{k + 1} \leq \left(1 - \frac{h_k}{4}\right)\tilde{\beta}_k + \dfrac{h_k}{4}\delta_k 
\end{equation}
Applying \Cref{lem:suitecv}, we obtain the convergence $\tilde{\beta}_k \underset{k\to +\infty}{\longrightarrow}\to 0$ under the following constraints:
\begin{equation}
    \sum_{k = 0}^{+\infty}h_k = +\infty,\quad \delta_k \underset{k\to+\infty}{\longrightarrow} 0
\end{equation}
Since $h_k\sim \dfrac{\sigma_k^2}{\tau}\lambda_k$, we recover condition \emph{(i)} of \Cref{thm:iterates_cv} from $\sum_{k = 0}^{+\infty}h_k = +\infty$. Then, by definition,
\begin{equation}
    \delta_k = \dfrac{16}{h_k^2}\lnorm{\xstar[k]-\xstar[k-1]}^2 + \dfrac{4}{h_k}\left(\nu_k^2\left(\frac{5\alpha_kB}{\lambda_k}\nu_k^2 +(1-\alpha_k)^2 d + 2(1-\alpha_k)\alpha_k \tau \sqrt{dC}+C\alpha_k^2\tau^2\right)\right).
\end{equation}
By the proof of \Cref{thm:renoised_iterates_cv}, we know that $\dfrac{16}{h_k^2}\lnorm{\xstar[k]-\xstar[k-1]}^2\underset{k\to +\infty}{\longrightarrow}0$ under $(ii)$ and $(iii)$. Moreover, since $\frac{\sigma_k^2}{\lambda_k}\underset{k\to+\infty}{\longrightarrow}0$,
\begin{equation}
    \dfrac{4\nu_k^2}{h_k}\left(\frac{5\alpha_kB}{\lambda_k}\nu_k^2 +(1-\alpha_k)^2 d + 2(1-\alpha_k)\alpha_k \tau \sqrt{dC}+C\alpha_k^2\tau^2\right)\underset{k\to+\infty}{\sim}\dfrac{4\tau\nu_k^2 d}{\sigma_k^2 \lambda_k}
\end{equation}
As $\nu_k^2=o(\sigma_k^2\lambda_k)$ by assumption, we obtain the convergence to $0$ of $\E\lnorm{\tx_{k + 1} - \xstar[k]}$. Using \Cref{prop:lim_minim} concludes the proof.

\end{proof}

\section{Discussion on the methods}
\subsection{Rewriting PnP-Flow}\label[appendix]{ssec:pnpflow}
The original PnP-Flow algorithm is defined as
\begin{equation}
    \label{eq:original_pnpflow}
    \begin{cases}
        z_{k + 1} = x_k - \alpha_k\nabla f(x_k)\,, \\
        \tilde z_{k + 1} = t_k z_{k + 1} + (1 - t_k)\varepsilon_k \, ,\quad \varepsilon_k\sim\NN{}\,, \\
        x_{k + 1} = \tilde D_{t_k}(\tilde z_{k + 1})\,.
    \end{cases}
\end{equation}

PnP-Flow uses the denoiser $\tilde D_t = \Id + (1 - t)v_t$ with $v:[0,1]\times \R^d\to \R^d$  a Flow Matching velocity field, defined as
\begin{equation}
    \forall (t,x) \in [0,1]\times \R^d, \; v_t(x)= \E[X-Z|(1-t)Z+tX = x],
\end{equation}
where $X\sim p$ and $Z\sim \NN{}$, taken independent. 
One can easily show that $\tilde D_t(x) = \E[X\vert(1-t)Z+tX=x]$. 

Defining $D_{\sigma}(x) = \E\left[X \vert  X + \sigma Z = x\right] = \mmse[](x)$, one has
\begin{equation}
    \tilde D_t(t x) = \E[X\vert(1-t)Z+tX=tx] = \E\left[X \vert \tfrac{1 - t}{t} Z + X = x\right] = D_{\sigma_{t}}(x) \, ,
\end{equation}
with $\sigma_t = \frac{1 - t}{t}$.  
Since $\tilde z_{k+1} = t_k(z_{k+1} + \sigma_{t_k} \varepsilon_k)$, $\tilde D_{t_k}(\tilde z_{k+1}) = D_{\sigma_{t_k}}(z_{k+1} + \sigma_{t_k} \varepsilon_k)$ and the last two steps of \eqref{eq:original_pnpflow} can be merged into a single noising/denoising step applied to $z_{k+1}$:%
\begin{equation}
    \label{eq:denoiser_pnpflow}
    \begin{cases}
        z_{k + 1} = x_k - \alpha_k\nabla f(x_k)\,, \\
        x_{k + 1} = D_{\sigma_{t_k}}(z_{k + 1} + \sigma_{t_k}\varepsilon_k), \quad \varepsilon_k\sim\NN{}\,.
    \end{cases}
\end{equation}

\textbf{Equivalence with MMSE Average}:
In the denoising case $f = \frac{1}{2}\lnorm{ \cdot - y}^2$ and based on the above discussion we have %
\begin{align}
    z_{k + 1} 
    &= x_k - \alpha_k (x_k - y)
    = (1 - \alpha_k) \mmse[k](z_k + \sigma_k \varepsilon_k) + \alpha_k y 
\end{align}
Hence, the $z_k$ iterates of PnP-Flow in that case correspond to iterations of MMSE Average, with the modification that some noise in added to the input of the denoiser.

\textbf{General case}:
Using a generic datafit $f$, PnP-Flow rewrites as two consecutive gradient steps, on $f$ and $- \sigma_k^2 \log p_{\sigma_k}$ respectively:
\begin{align}
    \begin{cases}
    z_{k+1} = x_k - \alpha_k \nabla f(x_k) \\
    x_{k+1} = z_k + \sigma_k \varepsilon_k + \sigma_k^2 \nabla \log p_\sigma(z_k  + \sigma_k \varepsilon_k)
    \end{cases}
\end{align}
The gradients of $f$ and $\log p_\sigma$ being evaluated at different points, we cannot hope to cast it as an instance of MMSE Averaging. 
Rather, the \genmmse{} we propose is similar in spirit, but writes as a single gradient step of $f$ and $\log p_\sigma$ simultaneously, closer to RED in that sense (but with annealing).

\section{Experiments}
\label{app:expes}
\paragraph{Initialisation. } When initializing \genmmse{}, we find that choosing $x_0 \approx \E_{X\sim p}\left[X\right]$, the mean natural image of the distribution, results in higher perceptual quality results. In practice, we approximate this mean image with $x_0 = \mmse[\mathrm{sample}](\varepsilon)$ where $\varepsilon\sim\NN{}$ and $\sigma_{\mathrm{sample}} = 100$.

\paragraph{Models}
For the two datasets,  Celeba-128 and AHFQ-256, we trained a Flow Matching model from scratch. We followed the architecture and training pipeline of \citet{martin2025pnp}, with one modification: we used an independent coupling instead of the minibatch OT coupling employed in their work. 

\subsection{Optimal hyper-parameters values}
\paragraph{Schedules. } For \genmmse{}, we take the schedules from \Cref{rem:conv_schedules}, which are proven to make the scheme converge.
\begin{equation}
    \lambda_k = \dfrac{\lambda_0}{(k + 1)^\beta}, \quad \sigma_k^2=\dfrac{\sigma_0^2}{(k + 1)^\gamma},\quad \alpha_k = \dfrac{\sigma_k^2}{\sigma_k^2 + \kappa\tau}\,,
\end{equation}
with $0 < \beta < \gamma < 1$, $\beta + \gamma < 1$, $\lambda_0\in(0,1)$. We choose to tune the noise level $\kappa\tau := \kappa \sigma_y^2$ with a factor $\kappa$. We also tune the parameters $\lambda_0, \sigma_0$.
For Renoising-GAMMA, we approximate $\bbE_\epsilon[\mmse]$ using $N_\epsilon =3$ draws of $\epsilon$ per iteration, since \Cref{fig:ablation_num_eps} shows no significant differences when using more draws. 
We set $N = 2000$,$ \gamma = 0.98$ $, \beta=0.01 $, for GAMMA (noiseless) and Renoised-GAMMA, the resulting noise schedules are displayed in \Cref{fig:sigmas_k}.

For PnP-Flow, we take the same hyperparameters values as the ones used in the original paper~\citep{martin2025pnp}. 

\begin{table}[H]
	\centering
	\caption{Hyper-parameters used and \genmmse{} on CelebA and AFHQ-Cat datasets.}
	\resizebox{\textwidth}{!}{
		\begin{tabular}{llccccc}
			\toprule
			& & \textbf{Denoising} & \textbf{Deblurring} & \textbf{Super-res.} & \textbf{Rand. inpaint.} & \textbf{Box inpaint.} \\
			\midrule
			\multirow{2}{*}{\textbf{CelebA}}
			& \multicolumn{6}{l}{\textbf{PnP-Flow}} \\
			& $N$ (number of steps)    & 100 & 100    & 100    & 100    & 100    \\
			& $\alpha$ (learning rate) & 0.8 & 0.01   & 0.3    & 0.01   & 0.5    \\\cmidrule{2-7}
            & \multicolumn{6}{l}{\textbf{Approx-PGD}} \\
            & $\gamma$ (step size)    & 1.0 & 1.0    & 1.0    & 1.0    & 1.0    \\
			& $\kappa$ (noise level factor) & 1.0 & 0.1   & 0.5    & 2.0   & 0.5    \\\cmidrule{2-7}
            \multirow{4}{*}{\shortstack[l]{$ \gamma = 0.98$ \\$ \beta=0.01 $ \\ $ N = 2000$ \\ $\lambda_0 = 10^{-3}$}}& \multicolumn{6}{l}{\textbf{GAMMA}} \\
            & $\sigma_0$ & 1.0 & 1 & 5 & 1 & 10 \\
            & $\kappa$ & 2.0 & 0.1 & 0.5 & 50 & 20 \\
			& \multicolumn{6}{l}{\textbf{Renoised-GAMMA}} \\
            & $\sigma_0$ & 1.0 & 10 & 5 & 5 & 10 \\
            & $\kappa$ & 2.0 & 5 & 5 & 50 & 20 \\
			\midrule
			\multirow{2}{*}{ \textbf{AFHQ}  }
			& \multicolumn{6}{l}{\textbf{PnP-Flow}} \\
			& $N$ (number of steps)    & 100 & 100    & 500    & 200    & 100    \\
			& $\alpha$ (learning rate) & 0.8 & 0.01   & 0.01   & 0.01   & 0.5    \\\cmidrule{2-7}
            & \multicolumn{6}{l}{\textbf{Approx-PGD}} \\
            & $\gamma$ (step size)    & 1.0 & 1.0    & 1.0    & 1.0    & 1.0    \\
			& $\kappa$ (noise level factor) & 1.0 & 2.0   & 1.0    & 5.0   & 2.0    \\\cmidrule{2-7}
            \multirow{3}{*}{\shortstack[l]{$ \gamma = 0.98$ \\$ \beta=0.01 $ \\ $ N = 2000$}} & \multicolumn{6}{l}{\textbf{GAMMA}} \\
			& $\lambda_0$ (number of steps)    & $10^{-3}$ & $10^{-2}$     & $10^{-3}$ & $10^{-3}$    & $10^{-3}$ \\
            & $\sigma_0$ & 5 & 10 & 10& 1 & 5 \\
            & $\kappa$ & 1 & 1 & 20 & 50 & 50 \\
			& \multicolumn{6}{l}{\textbf{Renoised-GAMMA}} \\
			& $\lambda_0$ (number of steps)    & $10^{-3}$ & $10^{-3}$     & $10^{-3}$    & $10^{-3}$    & $10^{-3}$    \\
            & $\sigma_0$ & 1 & 10 & 5& 5 & 10 \\
            & $\kappa$ & 1 & 10 & 5 & 50 & 20 \\
			\bottomrule
		\end{tabular}
        }
	\label{tab:hyperparameters}
\end{table}

\subsubsection{Relaxed \genmmse{} schedules}
Given a maximum number of iterations $N$, we consider the following family of schedules,
\begin{equation}
    \sigma_k^2 = \left(\dfrac{N - k}{k}\right)^2, \quad \lambda_k=\lambda_0\sigma_k^\beta,\quad \alpha_k = \dfrac{\sigma_k^2}{\kappa\sigma_k^2 + \tau}
\end{equation}
where $\lambda_0 > 0$ and $\kappa \geq 1$.

\begin{table}[H]
    \centering
    \caption{Hyperparameters used for \genmmse{} on the CelebA and AFHQ-Cat datasets.}
    \resizebox{\textwidth}{!}{
        \begin{tabular}{llccccc}
            \toprule
            & & \textbf{Denoising} & \textbf{Deblurring} & \textbf{Super-res.}
            & \textbf{Rand. inpaint.} & \textbf{Box inpaint.} \\
            \midrule
            \multirow{5}{*}{\textbf{CelebA}}
            & \multicolumn{6}{l}{\genmmse{}} \\
            & $N$ (number of steps) & 100 & 100 & 500 & 500 & 100 \\
            & $\kappa$              & 1.0 & 1.0 & 2.0 & 1.5 & 1.2 \\
            & $\lambda_0$           & $10^{-3}$ & $10^{-3}$ & $10^{-3}$ & $10^{-3}$ & $10^{-3}$ \\
            & $\beta$               & 1.0 & 1.0 & 1.0 & 1.0 & 1.0 \\
            \midrule
            \multirow{5}{*}{\textbf{AFHQ-Cat}}
            & \multicolumn{6}{l}{\genmmse{}} \\
            & $N$ (number of steps) & 100 & 100 & 500 & 500 & 100 \\
            & $\kappa$              & 1.0 & 1.0 & 2.0 & 1.5 & 1.2 \\
            & $\lambda_0$           & $10^{-3}$ & $10^{-3}$ & $10^{-3}$ & $10^{-3}$ & $10^{-3}$ \\
            & $\beta$               & 1.0 & 1.0 & 1.0 & 1.0 & 1.0 \\
            \bottomrule
        \end{tabular}
    }
    \label{tab:hyperparameters_genmmse}
\end{table}

\begin{figure}[thb]
    \centering
    \includegraphics[width=0.7\linewidth]{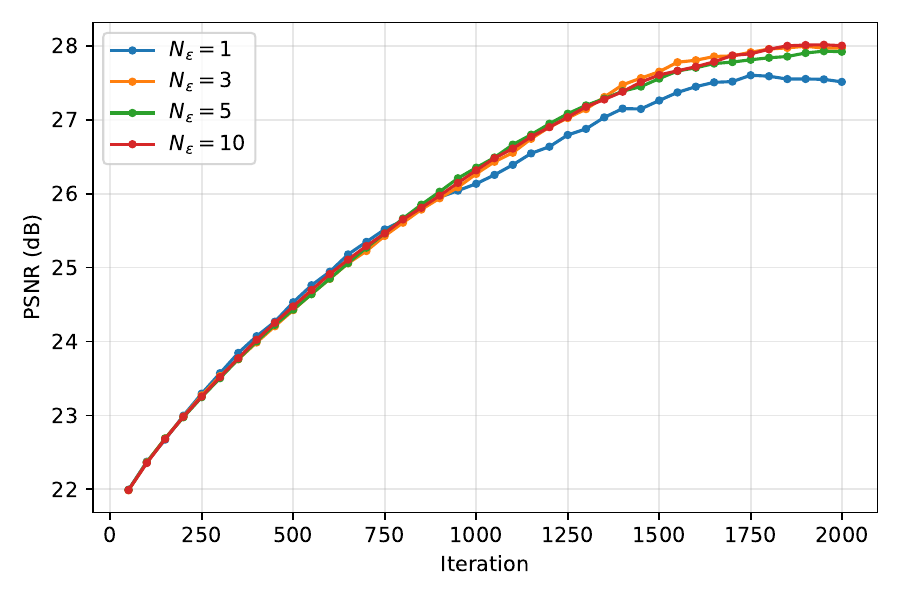}
    \caption{Ablation of the number $N_\epsilon$ of noise samples $\epsilon$ used to approximate $\bbE_{\epsilon} [ \mmse( x +\sigma \epsilon) ] $ - AFHQ - Box inpainting}
    \label{fig:ablation_num_eps}
\end{figure}

\begin{figure}
    \centering
    \includegraphics[width=0.7\linewidth]{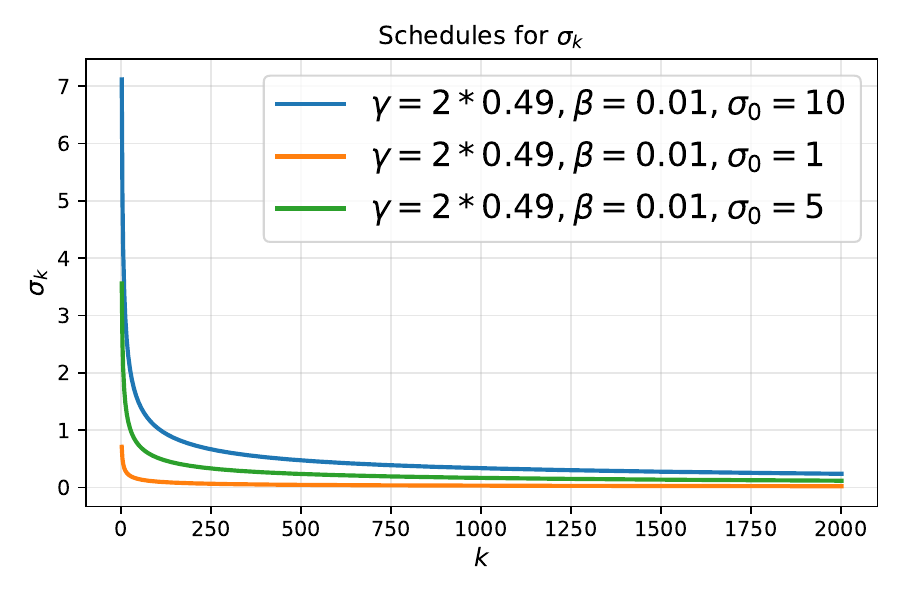}
    \caption{The two schedules $\sigma_k = \frac{\sigma_0}{(k+1)^{\gamma/2}}$ used in our experiments. Setting $N=2000$ across all experiments ensures that $\sigma_k$ is sufficiently close to $0$.}
    \label{fig:sigmas_k}
\end{figure}

\begin{table}[tbh]
\caption{Comparisons of methods on different inverse problems on the AFHQ dataset. Results are averaged across 100 test images. Higher PSNR / SSIM is better, lower LPIPS is better.}
\label{tab:benchmark_results_afhq}
\centering
\setlength{\tabcolsep}{2.8pt}
\renewcommand{\arraystretch}{1.25}
\resizebox{1.0\hsize}{!}{
\begin{tabular}{l c|ccc|ccc|ccc|ccc|ccc}
\toprule
\multirow{3}{*}{Method}
& \multirow{3}{*}{{\small Theory}}
& \multicolumn{3}{c|}{\textcolor{blue}{Denoising}}
& \multicolumn{3}{c|}{\textcolor{blue}{Deblurring}}
& \multicolumn{3}{c|}{\textcolor{blue}{Super-res.}}
& \multicolumn{3}{c|}{\textcolor{blue}{Rand. inpaint.}}
& \multicolumn{3}{c}{\textcolor{blue}{Box inpaint.}} \\

& & \multicolumn{3}{c|}{\textcolor{blue}{\small $\sigma=0.2$}}
& \multicolumn{3}{c|}{\textcolor{blue}{\small $\sigma=0.05$, $\sigma_{\mathrm b}=3.0$}}
& \multicolumn{3}{c|}{\textcolor{blue}{\small $\sigma=0.05$, $\times4$}}
& \multicolumn{3}{c|}{\textcolor{blue}{\small $\sigma=0.01$, $70\%$}}
& \multicolumn{3}{c}{\textcolor{blue}{\small $\sigma=0.05$, $80\times80$}} \\

\cmidrule(lr){3-5}
\cmidrule(lr){6-8}
\cmidrule(lr){9-11}
\cmidrule(lr){12-14}
\cmidrule(lr){15-17}

& & \small PSNR & \small SSIM & \small LPIPS
& \small PSNR & \small SSIM & \small LPIPS
& \small PSNR & \small SSIM & \small LPIPS
& \small PSNR & \small SSIM & \small LPIPS
& \small PSNR & \small SSIM & \small LPIPS \\

\midrule

\rowcolor{gray!10}
Degraded 
 &  & 
20.00 & 0.292 & 0.550 &
24.43 & 0.531 & 0.452 &
11.77 &0.220 &0.877 &
 13.44 & 0.221 & 1.090&
21.68 & 0.726 & 0.219 \\

\hdashline

\shortstack[l]{PnP-Flow}
& No & 
\underline{32.27} & \underline{0.876} & 0.164 & 
\textbf{29.48} & \textbf{0.795} & 0.317 &
\textbf{29.02} & \textbf{0.811} & \textbf{0.171} &
\textbf{34.97} & \textbf{0.939}& \textbf{0.037}&
\textbf{28.63} & \underline{0.911} &\underline{0.107}  \\

\shortstack[l]{Approx-PGD}
& Yes &
28.21 & 0.679 & 0.236 &
18.42 & 0.464 & 0.526 &
7.08 & 0.059 & 0.690 &
18.36& 0.404 & 0.510 &
17.86 & 0.717 & 0.354 \\

\hdashline

\rowcolor{green!5!violet!10}
\shortstack[l]{GAMMA\\(noiseless)}
& Yes&
30.03 & 0.751 & \textbf{0.102} &
14.23 & 0.311 & 0.613 &
26.60 & 0.703 & 0.361 &
22.09 &  0.593 & 0.326 &
21.91 & 0.603 & 0.277 \\

\rowcolor{green!5!violet!10}
\shortstack[l]{GAMMA\\(noise)}
& Yes &
30.74 & 0.848 & 0.212 &
28.89 & 0.779 & 0.315 &
25.29 & 0.721&0.279  &
32.56 & 0.895 & 0.090 &
26.21 & 0.852 & 0.101 \\

\rowcolor{green!5!violet!10}
\shortstack[l]{GAMMA\\(relaxed)}
& No &
\textbf{32.46} & \textbf{0.889} & \underline{0.104} &
\underline{29.10} & \underline{0.789}& \textbf{0.292} &
\underline{28.18} & \underline{0.782}  & \underline{0.259} &
\underline{34.40} & \underline{0.931} & \underline{0.046}  &
\underline{27.72} & \textbf{0.919} & \textbf{0.069} \\

\bottomrule
\end{tabular}
}
\end{table}

\begin{figure}[htp]
    \centering
    \begin{tabular}{cccccccc}

    Clean &
    Degraded &
    \textbf{PnP-Flow} &
    \shortstack{\textbf{Approx} \\ \textbf{PGD}} &
    \shortstack{\textbf{GAMMA} \\ (noiseless)} &
    \shortstack{\textbf{GAMMA} \\ (noise)} &
    \shortstack{\textbf{GAMMA} \\(relax.)}
    \\

    \includeproblemimagesafhq{denoising}{0}{0}{20.00}{33.41}{28.36}{30.56}{31.86}{33.57} \\
    \includeproblemimagesafhq{gaussian_deblurring_FFT}{1}{1}{26.28}{30.49}{25.42}{27.70}{30.13}{30.48}\\
    \includeproblemimagesafhq{superresolution}{0}{1}{8.72}{29.85}{15.10}{14.98}{27.19}{28.73} \\
    \includeproblemimagesafhq{random_inpainting}{1}{2}{10.41}{36.59}{17.70}{21.42}{34.12}{36.12} \\
    \includeproblemimagesafhq{inpainting}{1}{3}{21.04}{25.83}{21.16}{20.76}{24.42}{25.72}

    \end{tabular}
    \caption{Qualitative results on AFHQ-256. }
    \label{fig:afhq_images_full}
\end{figure}

\end{document}